\DeclareMathAlphabet{\mathcal}{OMS}{cmsy}{m}{n}
\newtheoremstyle{nthmstyle}
{3pt}
{3pt}
{}
{}
{\bfseries}
{.}
{.5em}
{}
\theoremstyle{nthmstyle}
\newcolumntype{L}[1]{>{\raggedright\let\newline\\\arraybackslash\hspace{0pt}}m{#1}}
\newcolumntype{C}[1]{>{\centering\let\newline\\\arraybackslash\hspace{0pt}}m{#1}}
\newcolumntype{R}[1]{>{\raggedleft\let\newline\\\arraybackslash\hspace{0pt}}m{#1}}
\def\thm@space@setup{\thm@preskip=0pt
\thm@postskip=0pt}
\newtheoremstyle{thmstyle}
{4.0pt} 
{2.5pt} 
{\mdseries} 
{} 
{\bfseries} 
{.} 
{ } 
{} 
\newtheoremstyle{exmpstyle}
{4.0pt} 
{4.0pt} 
{\mdseries} 
{} 
{\itshape} 
{.} 
{ } 
{} 
\theoremstyle{thmstyle}
\newtheorem{prop}{Proposition} 
\newtheorem{defn}{Definition} 
\newtheorem{lem}{Lemma}
\newtheorem{cor}{Corollary}
\newtheorem{assump}{Assumption}
\crefname{thrm}{Theorem}{Theorems}
\crefname{prop}{Proposition}{Propositions}
\crefname{defn}{Definition}{Definitions}
\crefname{lem}{Lemma}{Lemmas}
\crefname{cor}{Corollary}{Corollaries}
\crefname{crit}{Criterion}{Criteria}
\crefname{clm}{Claim}{Claims}
\crefname{prob}{Problem}{Problems}
\crefname{assump}{Assumption}{Assumptions}
\theoremstyle{exmpstyle}
\newtheorem*{exmp*}{Example}
\newenvironment{exmp}
{\pushQED{\qed}\exmpl}
{\popQED\endexmpl}
\crefname{rmrk}{Remark}{Remarks}
\crefname{exmp}{Example}{Examples}
\definecolor{gray}{rgb}{.4,.4,.4}
\definecolor{midgrey}{rgb}{0.5,0.5,0.5}
\definecolor{middarkgrey}{rgb}{0.35,0.35,0.35}
\definecolor{darkgrey}{rgb}{0.3,0.3,0.3}
\definecolor{midred}{rgb}{0.7,0.2,0.2}
\definecolor{darkred}{rgb}{0.7,0.1,0.1}
\definecolor{midblue}{rgb}{0.2,0.2,0.7}
\definecolor{darkblue}{rgb}{0.1,0.1,0.5}
\definecolor{midgreen}{rgb}{0.3,0.5,0.3}
\definecolor{darkgreen}{rgb}{0.1,0.5,0.1}
\definecolor{defseagreen}{cmyk}{0.69,0,0.50,0}
\newcommand{\jnoteF}[1]{}
\newcommand{\fml}[1]{{\mathcal{#1}}}
\newcommand{\tn}[1]{\textnormal{#1}}
\newcommand{\mbf}[1]{\ensuremath\mathbf{#1}}
\newcommand{\mbb}[1]{\ensuremath\mathbb{#1}}
\newcommand{\mrm}[1]{\ensuremath\mathrm{#1}}
\newcommand{\msf}[1]{\ensuremath\mathsf{#1}}
\newcommand{\tbf}[1]{\textbf{#1}}
\newcommand{\stwop}{\Sigma_2^{\tn{p}}}
\DeclareMathOperator*{\nentails}{\nvDash}
\DeclareMathOperator*{\entails}{\vDash}
\DeclareMathOperator*{\limply}{\rightarrow}
\DeclareMathOperator*{\axp}{\msf{AXp}}
\DeclareMathOperator*{\cxp}{\msf{CXp}}
\DeclareMathOperator*{\waxp}{\msf{WeakAXp}}
\DeclareMathOperator*{\wcxp}{\msf{WeakCXp}}
\NewDocumentCommand{\cloper}{ O{} } {\ensuremath\fml{M}_{#1}=(\fml{F}_{#1},\mbb{D}_{#1},\mbb{F}_{#1},\fml{K}_{#1},\kappa_{#1})}
\NewDocumentCommand{\xpprob}{ O{\mbf{v}} O{c} O{}}{\ensuremath\fml{E}_{#3}=(\fml{M}_{#3},(#1,#2))}
\tikzset{
  0 my edge/.style={densely dashed, my edge},
  my edge/.style={-{Stealth[]}},
}
\newcommand{\BotBlankLine}{\vspace*{1.5pt}}
\definecolor{darkred}{rgb}{0.7,0.1,0.1}
\newcommand{\hlight}[1]{{\color{darkred}#1}}
\newcommand{\rhlight}[1]{\hlight{#1}}
\newcommand{\dghlight}[1]{{\color[RGB]{0,120,0}#1}}
\definecolor{tred3}{HTML}{A40000}
\definecolor{tgreen3}{HTML}{4E9A06}
\newcommand{\tsf}[1]{{\textsf{#1}}}
\definecolor{midblue}{rgb}{0.2,0.2,0.7}
\definecolor{darkblue}{rgb}{0.1,0.1,0.5}
\DeclareRobustCommand{\rchi}{{\mathpalette\irchi\relax}}
\newcommand{\irchi}[2]{\raisebox{\depth}{$#1\chi$}} 
\crefname{enumi}{}{}
\setlist[enumerate,1]{label=\arabic*., labelsep=0.5em, leftmargin=*}
\@citea\NAT@hyper@{%
     \NAT@nmfmt{\NAT@nm}%
     \hyper@natlinkbreak{\NAT@aysep\NAT@spacechar}{\@citeb\@extra@b@citeb}%
     \NAT@date}}
\@citea\NAT@nmfmt{\NAT@nm}%
\NAT@spacechar\NAT@hyper@{\NAT@date}}{}{}
\@citea\NAT@hyper@{%
     \NAT@nmfmt{\NAT@nm}%
     \hyper@natlinkbreak{\NAT@spacechar\NAT@@open\if*#1*\else#1\NAT@spacechar\fi}%
       {\@citeb\@extra@b@citeb}%
     \NAT@date}}
\@citea\NAT@nmfmt{\NAT@nm}%
\fi\NAT@hyper@{\NAT@date}}
\newcommand{\PaperFirstPage}{1}
\newcommand{\PaperLastPage}{\pageref{LastPage}}
\newcommand{\SubDate}{12/2021}
\newcommand{\PubDate}{09/2022}
\newcommand{\PaperVolume}{75}
\newcommand{\PaperYear}{2022}
\newcommand{\pagest}{261}
\renewcommand{\PaperFirstPage}{\pagest}
\renewcommand{\PaperLastPage}{\getpagerefnumber{lastpage}}
\renewcommand{\SubDate}{12/2021}
\renewcommand{\PubDate}{09/2022}
\renewcommand{\PaperVolume}{75}
\renewcommand{\PaperYear}{2022}
\newcommand{\PaperTitle}{On Tackling Explanation Redundancy in Decision Trees}
\newcommand{\ShortTitle}{Explanation Redundancy in Decision Trees}
\jairheading{\PaperVolume}{\PaperYear}{\PaperFirstPage-\PaperLastPage}{\SubDate}{\PubDate}
\begin{document}

\title{\PaperTitle}

\author{%
  \name Yacine Izza \email yacine.izza@univ-toulouse.fr \\
  \addr University of Toulouse, Toulouse, France
  \AND
  \name Alexey Ignatiev \email alexey.ignatiev@monash.edu \\
  \addr Monash University, Melbourne, Australia
  \AND
  \name Joao Marques-Silva \email joao.marques-silva@irit.fr \\
  \addr IRIT, CNRS, Toulouse, France
}


\maketitle

\label{firstpage}
\begin{abstract}
  Decision trees (DTs) epitomize the ideal of interpretability of
  machine learning (ML) models. The interpretability of decision trees
  motivates explainability approaches by so-called intrinsic
  interpretability, and it is at the core of recent proposals for
  applying interpretable ML models in high-risk applications.
  The belief in DT interpretability is justified by the fact that
  explanations for DT predictions are generally expected to be
  succinct. Indeed, in the case of DTs, explanations correspond to
  DT paths.
  Since decision trees are ideally shallow, and so paths contain far
  fewer features than the total number of features, explanations
  in DTs are expected to be succinct, and hence interpretable.
  This paper offers both theoretical and experimental arguments
  demonstrating that, as long as interpretability of decision trees
  equates with succinctness of explanations, then decision trees
  ought not be deemed interpretable.
  The paper introduces logically rigorous path explanations and path
  explanation redundancy, and proves that there exist functions for
  which decision trees must exhibit paths with explanation redundancy
  that is arbitrarily larger 
  than the actual path explanation. The paper also proves that only a
  very restricted class of functions can be represented with DTs that
  exhibit no explanation redundancy.
  In addition, the paper includes experimental results substantiating
  that path explanation redundancy is observed ubiquitously in
  decision trees, including those obtained using different tree
  learning algorithms, but also in a wide range of publicly available
  decision trees.
  The paper also proposes polynomial-time algorithms for eliminating
  path explanation redundancy, which in practice require negligible
  time to compute. Thus, these algorithms serve to indirectly attain
  irreducible, and so succinct, explanations for decision trees.
  Furthermore, the paper includes novel results related with duality
  and enumeration of explanations, based on using SAT solvers as
  witness-producing NP-oracles.
\end{abstract}

\clearpage
\tableofcontents
\clearpage

\section{Introduction} \label{sec:intro}

The cognitive limits of human decision makers~\cite{miller-pr56}
substantiate why succinctness is one of the key requirements of
explanations of machine learning (ML) models. Succinct explanations
are generally accepted to be easier to understand by human decision
makers, but are also easier to diagnose or debug.
Decision trees (DTs) epitomize so-called interpretable machine
learning models~\cite{breiman-ss01,rudin-naturemi19,molnar-bk20}, in
part because paths in the tree (which are possibly short, and so
potentially succinct) represent explanations of predictions.

Decision trees (DTs) find a wide range of practical
uses\footnote{%
  From an ever-increasing range of practical uses, example
  references
  include~\cite{kumar-kis08,kumar-bk09,kotsiantis-air13,valdes-naturesr16,bertsimas-jco18,bertsimas-as18,bertsimas-jamaped19,islam-acmcs19,bertsimas-jo19a,bertsimas-jo19b,bertsimas-ajt19,bertsimas-jps19,bertsimas-corr19,lundberg-naturemi20,valdes-pnas20,bertsimas-hcms20,bertsimas-jco20,bertsimas-plosone20a,bertsimas-plosone20b,bertsimas-plosone20c,suh-naturesr20,bertsimas-corr20,herrera-acmcs21,islam-acm-tkdd21,bertsimas-ml21,bertsimas-jacs21,bertsimas-jtacs21,bertsimas-msom21,bertsimas-corr21}.
}.
Moreover, DTs are the most visible example of a collection of machine
learning (ML) models that have recently been advocated as essential
for high-risk applications~\cite{rudin-naturemi19}. Decision trees
also represent explainability approaches based on intrinsic
interpretability~\cite{molnar-bk20}\footnote{%
  Interpretability is generally accepted to be a subjective concept,
  without a rigorous definition~\cite{lipton-cacm18}.
  Similar to other works~\cite{molnar-bk20}, this paper relates
  interpretability with succinctness of the explanations provided.}.
Given a decision tree, some input and the resulting prediction, the
explanation associated with that prediction is the path in the
decision tree consistent with the input. This simple observation
justifies in part why decision trees have been deemed interpretable
for at least two decades~\cite{breiman-ss01}, an observation that is
widely taken for
granted~\cite{freitas-sigkdd13,bertsimas-jo19a,molnar-bk20,herrera-if20a},
that motivates many of the applications referenced above, and which
explains the interest in learning optimal decision trees, especially
in recent
years\footnote{%
  Standard references include~\cite{nijssen-kdd07,hebrard-cp09,nijssen-dmkd10,bertsimas-ml17,verwer-cpaior17,nipms-ijcai18,verwer-aaai19,rudin-nips19,avellaneda-corr19,avellaneda-aaai20,schaus-cj20,schaus-aaai20,rudin-icml20,janota-sat20,hebrard-ijcai20,schaus-ijcai20a,schaus-ijcai20b,demirovic-aaai21,szeider-aaai21a,szeider-aaai21b,mcilraith-cp21,ansotegui-corr21,demirovic-jmlr22,rudin-aaai22}}, 
and notably when it is well-known that learning optimal (smallest) DTs
is NP-hard~\cite{rivest-ipl76}.
It should be noted that earlier work encompasses different optimality
criteria, some of which is tightly related with succinctness of
explanations (e.g.\ as measured by average path length).
In contrast with earlier work, this paper offers a different
perspective. Concretely, the paper proves that paths in decision trees
can be arbitrarily larger (on the number of features) than a logically
rigorous explanation for a prediction. Furthermore, the experimental
results, obtained on a wide range of datasets and also on publicly
available DTs, demonstrate that DTs in practice naturally exhibit the
same limitation, i.e.\ DTs almost invariably have paths that contain
more literals than what a logically rigorous explanation requires. The
experiments also demonstrate that redundancy of literals in DT paths
exists \emph{even} for optimal (and/or sparse) decision
trees~\cite{bertsimas-ml17,rudin-nips19,rudin-icml20,rudin-corr21}.
The main corollary of the paper's theoretical and experimental results
is that succinctness of explanations cannot be ensured by the paths in
decision trees, and must instead be computed with logically rigorous
approaches.
This corollary has significant practical consequences, in some
high-risk applications, but also in situations that are
safety-critical. For example, in a medical
application~\cite{valdes-naturesr16}, an explanation that contains
literals that are unnecessary, may prevent a physician from focusing
on the symptoms that are actually crucial for correct diagnosis. In
more general settings, non-succinct explanations may be beyond the
grasp of human-decision makers~\cite{miller-pr56}, whereas
(subset-minimal) succinct explanations may not.

Explanations, such as the ones informally sketched above, essentially
represent an answer to a ``\tbf{Why?}'' question, i.e.\ \emph{why} is
the prediction the one obtained? Such explanations aim at succinctness
by being subset-minimal (or irreducible). These explanations are
referred to as PI-explanations or abductive explanations
(AXp's)~\cite{darwiche-ijcai18,inms-aaai19}.
A different class of explanations answer a ``\tbf{Why not?}''
question, i.e.\ \emph{why didn't} one get a prediction different from
the one obtained? Or what would be necessary to change to get a
different prediction? This sort of explanations also aim at
succinctness by being subset-minimal, and are referred to as
contrastive explanations (CXp's)~\cite{miller-aij19,inams-aiia20}.

This paper shows that succinctness of explanations of paths in DTs can
be achieved efficiently in practice.
Concretely, the paper shows that logically rigorous explanations,
i.e.\ both AXp's and CXp's, can be computed in polynomial time, and so
in practice require negligible time to compute.
Furthermore, the paper shows that, whereas AXp's can be arbitrarily
smaller than a path in a DT, CXp's \emph{cannot}. Concretely, the
paper shows that, for any prediction, a contrastive explanation
corresponds exactly to the conditions provided by one of the paths in
the decision tree.
Furthermore, the paper proposes path-specific variants of both AXp's
and CXp's, as opposed to the instance-specific definitions studied in
earlier work. Path-specific explanations relate with the conditions
(i.e.\ the literals) on a given path, and so are instance-independent.
In addition, the paper shows that these variants of AXp's and CXp's
can also be computed in polynomial time.

Compared with earlier
work~\cite{iims-corr20,hiims-kr21},
this paper offers comprehensive evidence regarding the redundancy of
path-based explanations in DTs. Concretely, the paper proves that
i) size-minimal DTs can exhibit arbitrary explanation redundancy,
ii) in practice explanation redundancy is often observed,
iii) DTs without explanation redundancy correspond to a very specific 
class of classifiers represented as non-overlapping minimal
disjunctive normal form formulas,
iv) (provably) optimal sparse DTs also invariably exhibit path
explanation redundancy,
v) example DTs used in most textbooks and other representative
references also exhibit explanation redundancy
and, finally,
vi) other types of explanations (concretely path explanations, which
are investigated in this paper) reveal important properties in terms
of redundancy of explanations.
In addition, the paper builds on earlier
work~\cite{iims-corr20,hiims-kr21} showing that tools claiming
interpretable AI solutions~\cite{bertsimas-ml17,iai} also exhibit path
explanation redundancy, and that this occurs with other well-known
decision tree learners. Therefore, our results serve to complement
any state-of-the-art approach for learning DTs, allowing the
computation of path explanations which are often shorter than DT
paths.
More importantly, our results can be used to provide much-needed
succinct explanations in high-risk and safety-critical applications.

\paragraph{Main results.}
The paper's main results are organized as follows:
\begin{enumerate}[nosep]
\item The paper formalizes in detail the computation of explanations
  in decision trees, such that decision trees are allowed to have both
  categorical and ordinal features, taking values from arbitrary
  domains, and such that an arbitrary number of classes is allowed;
\item The paper introduces explanation functions (as an extension of
  prime implicant explanations), and proposes conditions for
  monotonicity of the definition of abductive and contrastive
  explanations, which in turn yields a generalized form of minimal
  hitting set duality between abductive and contrastive explanations;
\item The paper identifies nesting properties of abductive and
  contrastive explanations, which allows enumerating abductive
  explanations from a subset of the features;
\item The paper uses the two previous results to introduce path
  explanations and path explanation redundancy, where path
  explanations are distinguished from instance-specific explanations;
\item The paper proves that optimal decision trees can exhibit path
  explanation redundancy, and that DTs that do not exhibit path
  explanation redundancy must correspond to minimal generalized
  decision functions (GDF)~\cite{hiicams-aaai22} represented in
  disjunctive normal form (DNF). The class of functions that can be
  represented with such DNF GDFs is argued to be very unlikely to be
  obtained in practice;
\item The paper proposes algorithms for the computation of
  path explanations, as follows:
  \begin{enumerate}[nosep]
  \item Three algorithms for computing abductive path explanations,
    two of which build on earlier work~\cite{iims-corr20,hiims-kr21},
    and a novel one that relates with reasoning about overconstrained
    Horn formulas;
  \item One algorithm for computing all contrastive path explanations;
    and
  \item One algorithm for enumerating abductive path explanations by
    starting from the hypergraph of contrastive path explanations.
  \end{enumerate}
\item The paper offers extensive experimental evidence, attesting to
  the significance of identifying and removing explanation redundancy
  from decision tree paths.
\end{enumerate}

\paragraph{Organization.}
The paper is organized as follows.
\cref{sec:prelim} introduces the notation and definitions used in
the rest of the paper.
~\cref{sec:ndual,sec:rdt} detail the paper's theoretical foundations,
namely duality results and path explanations for DTs. Path
explanations are significant, because these allow relating abductive
and contrastive explanations with the literals in the DT paths.
Concretely, 
\cref{sec:ndual} proposes a generalization of abductive and
contrastive explanations to explanation functions such that duality
between explanations is respected. This section also reveals a nesting
property of explanations, and introduces path explanations.
Furthermore, the section shows how the two previous results apply in
the case of path explanations for DTs.
Moreover, \cref{sec:rdt} builds on path explanations to formalize
\emph{path explanation redundancy} (PXR) for DTs. First, this section
proves that there exist minimum-size DTs that necessarily exhibit PXR.
Second, the section relates DTs that do not exhibit path explanation
redundancy with minimal generalized decision
functions~\cite{hiicams-corr21,hiicams-aaai22}.
In addition, this section shows that optimal sparse
DTs~\cite{rudin-nips19} exhibit PXR, and shows that PXR can represent
in practice a much larger fraction of a path than the explanation
itself.
\cref{sec:xdt} proposes three polynomial-time algorithms for computing
one abductive path explanation, including a novel and simple
propositional Horn encoding.
This section also covers the computation of contrastive path
explanations, and the enumeration of path explanations.
\cref{sec:res} presents experimental results that confirm the paper's
main claims:
i) PXR occurs naturally, and can be found in DTs used in a vast number
of research and survey papers and textbooks published over the years;
ii) PXR is ubiquitous in DTs learned with different tree learning
algorithms, and that the time taken to compute explanations (be them
abductive or contrastive) is always negligible;
iii) PXR can represent a very significant percentage of the length of
tree paths;
and
iv) PXR exists even in trees that are optimal (and
sparse)~\cite{verwer-aaai19,rudin-nips19,rudin-icml20,rudin-corr21}.
\cref{sec:relw} overviews related work on computing explanations
for DTs, and \cref{sec:conc} concludes the paper.

\section{Preliminaries} \label{sec:prelim}

This section overviews the definitions and notation used in the
remainder of the paper.
\cref{ssec:funcs} briefly summarizes the notation for functions used
in the paper, emphasizing function parameterizations, which we will
use to represent families of functions.
\cref{ssec:logic} includes a brief overview of the logic foundations
the paper builds upon.
\cref{ssec:classif} introduces classification problems and the
associated notation.
\cref{ssec:dts} introduces decision trees and outlines a formalization
that is vital for reasoning about DTs.
Although DTs are among the best understood ML models, a rigorous
formalization is required to reason about explanations.
Afterwards, \cref{ssec:fxai} overviews formal explainability.
Finally,~\cref{ssec:sumup} summarizes the notation introduced in this
section and used in the rest of the paper.

\subsection{Function Representation} \label{ssec:funcs}
A function is well-known to be a mapping from one set to another. We
will allow functions to be parameterized, thus in fact defining
families of related functions, which depend on the choices of
parameters. Furthermore, we will allow functions to be parameterized 
on an arbitrary (and not necessarily defined a priori) number of
parameters. (Parameterization serves to represent families of
functions, with arguments which are distinguished from the other
arguments, e.g.\ selected features vs.\ points in feature space.)
As an example, $f:D\to{C}$ maps a domain $D$ into a codomain $C$.
If $d\in{D}$, then $f(d)$ denotes the value of $C$ that $d\in{D}$ is
mapped to.
$f(d; \pi_1, \pi_2)$ denotes that $f$ is parameterized on some given
parameters $\pi_1$ and $\pi_2$. Moreover, $f(d; \pi, \ldots)$ denotes
that $f$ is parameterized on $\pi$ as well as on a number of
additional but yet-undefined parameters. To keep the notation as
simple as possible, we will reveal parameterizations only when
relevant.

\subsection{Logic Foundations} \label{ssec:logic}

Definitions and notation standard in mathematical logic, concretely
related with propositional logic and decidable fragments of
first-order logic,
will be used throughout the paper~\cite{sat-handbook21}.
Propositional formulas are defined over boolean variables taken
from some set $X=\{x_1,x_2,\ldots,x_m\}$, where each boolean variable
takes values from $\mbb{B}=\{0,1\}$.
A literal is a variable $x_i$ or its negation $\neg{x_i}$. A
propositional formula is defined inductively using literals and the
standard logic operators $\lor$ and $\land$~\footnote{%
  For simplicity, we restrict the set of allowed logic operators.
  The inductive definition of propositional formulas above could be
  extended to accommodate for universal and existential operators; it
  could also be extended to accommodate for other well-known logic
  operators, including $\neg$, $\limply$ and $\leftrightarrow$, among
  others.}:
i) Literals are propositional formulas;
ii) If $\varphi_1$ and $\varphi_2$ are propositional formulas, then
$\varphi_1\lor\varphi_2$ is a propositional formula;
and
iii) If $\varphi_1$ and $\varphi_2$ are propositional formulas, then
$\varphi_1\land\varphi_2$ is a propositional formula.
A conjunctive normal form (CNF) formula $\varphi$ is a conjunction of
disjunctions of literals. A disjunction of literals is referred to as
a \emph{clause}.
A disjunctive normal form (DNF) formula is a disjunction of
conjunctions of literals. A conjunction of literals is referred to as
a \emph{term}. 
A Horn formula is a CNF formula where each clause does not contain
more than one non-negated literal.
We will use quantification where necessary, with $\forall$ and
$\exists$ having respectively  the meaning of universal and
existential quantification of variables over their domains.

A \emph{truth assignment} represents a point
$\mbf{v}=(v_1,\ldots,v_m)$ of $\mbb{B}^{m}=\{0,1\}^{m}$, where the
value assigned to each $x_i$ is associated with coordinate $i$. 
$\mbf{v}\entails\varphi$ is defined inductively on the structure of
$\varphi$:
i) $\mbf{v}\entails(\varphi_1\lor\varphi_2)$ iff
$\mbf{v}\entails\varphi_1$ or $\mbf{v}\entails\varphi_2$;
ii) $\mbf{v}\entails(\varphi_1\land\varphi_2)$ iff
$\mbf{v}\entails\varphi_1$ and $\mbf{v}\entails\varphi_2$;
iii) $\mbf{v}\entails\neg{x_i}$ iff $v_i=0$;
and
iv) $\mbf{v}\entails{x_i}$ iff $v_i=1$.
If a truth assignment $\mbf{v}$ is such that $\mbf{v}\entails\varphi$,
then $\varphi$ is \emph{satisfied} by $\mbf{v}$,
and we say that $\mbf{v}$ is a \emph{model};
otherwise $\varphi$ is \emph{falsified} by $\mbf{v}$, and we write
$\mbf{v}\nentails\varphi$.
A formula $\varphi$ is \emph{satisfiable} if there exists a truth
assignment that satisfies $\varphi$; otherwise it is
\emph{unsatisfiable} (or \emph{overconstrained}, or
\emph{inconsistent}).
If $\pi$ and $\kappa$ are propositional formulas, then we write 
$(\pi\entails\kappa)$ to denote that
$\forall(\mbf{x}\in\mbb{B}^m).(\mbf{x}\entails\pi)\limply(\mbf{x}\entails\kappa)$. Similarly,
we write $\pi\nentails\kappa$ to denote that
$\exists(\mbf{x}\in\mbb{B}^m).(\mbf{x}\entails\pi)\land(\mbf{x}\nentails\kappa)$.
A term $\pi$ is a \emph{prime implicant} of $\kappa$, if
$\pi\entails\kappa$ and for any term $\theta$ such that
$\theta\entails\pi\land\pi\nentails\theta$, it does not hold that 
$\theta\entails\kappa$.
Similarly, a clause $\psi$ is a \emph{prime implicate} of $\kappa$, if
$\kappa\entails\psi$ and for any clause $\gamma$ such that
$\gamma\entails\psi\land\psi\nentails\gamma$, it does not hold that
$\kappa\entails\gamma$. 

The definitions above can be extended to domains other than boolean 
domains, by allowing the variables to take values from domains that
are not necessarily boolean, and by defining literals using
appropriate relational operators~\cite{sat-handbook21}. Well-known
examples of relational operators include those in
$\{\le,\ge,<,>,=,\in\}$, among others.
We can also consider functions whose codomain is not necessarily
boolean, and can also include those in logic formulas again using
suitable relational operators.
Concrete examples will be introduced later in this section when
describing decision trees, but also when introducing formal
explanations.
Furthermore, in a number of situations, it is convenient to talk about
formulas that consist of conjunctions of other formulas as \emph{sets
  of constraints}, where each constraint can represent a clause, or a
more complex (propositional) formula, thus allowing set notation to be
used with conjunctions of constraints.

For an overconstrained formula, not all of its constraints can be
satisfied simultaneously. In general, overconstrained formulas are 
split into a set of \emph{hard} constraints (i.e.\ $\fml{H}$) and a
set of \emph{soft} (or \emph{breakable}, or weighted, or costed)
constraints (i.e.\ $\fml{B}$), In such  settings, a number of
computational problems can be defined for reasoning about the pairs 
$(\fml{H},\fml{B})$, including:
i) finding an assignment that maximizes the cost of satisfied soft
constraints, i.e.\ the maximum satisfiability (MaxSAT) problem;
ii) finding subset-maximal subsets of $\fml{B}$ which, together with
$\fml{H}$ are satisfiable, i.e.\ finding a maximal satisfiable subset
(MSS);
iii) finding a subset-minimal set of clauses $\fml{C}\subseteq\fml{B}$
which, if removed from $\fml{B}$, cause
$\fml{H}\cup(\fml{B}\setminus\fml{C})$ to be satisfiable,
i.e.\ finding a minimal correction subset (MCS);
and
iv) finding a subset-minimal set of clauses $\fml{U}\subseteq\fml{B}$
which together with $\fml{H}$ are inconsistent, i.e.\ finding a
minimal unsatisfiable subset (MUS).
There is a comprehensive body of research on algorithms for reasoning
about overconstrained
formulas~\cite{blms-aicom12,mshjpb-ijcai13,msjb-cav13,mpms-ijcai15,amms-sat15,lpmms-cj16,mipms-sat16,msimp-jelia16,msjm-aij17,msm-ijcai20,sat-handbook21}.

For some problems, we will use a SAT solver as an oracle. Although a
SAT solver is used for solving a well-known NP-complete
problem~\cite{cook-stoc71}, it is also the case that a SAT solver
ought not be equated with an NP oracle~\cite{msjm-aij17}. This
observation is justified by the fact that SAT solvers report
satisfying assignments (or witnesses) for satisfiable formulas.
Moreover, most SAT solvers also report \emph{summaries} in the case of
unsatisfiable formulas, where a summary is a subset of the clauses
that is itself inconsistent. As a result, when using a SAT solver as a
oracle, we are in fact considering a witness-producing (and most often
summary-providing) NP-oracle.

\subsection{Classification Problems} \label{ssec:classif}
The paper considers classification problems, defined on a set of
features $\fml{F}=\{1,\ldots,m\}$, where each feature $i$ takes values
from a domain $\fml{D}_i$, and $m=|\fml{F}|$ denotes the number of
features. Each domain $\fml{D}_i$ may be categorical or ordinal.
Ordinal domains can be integer or real-valued. The set of domains is
represented by $\mbb{D}=(\fml{D}_1,\ldots,\fml{D}_m)$. The union of
domains is $\mbb{U}=\cup_{i\in\fml{F}}\fml{D}_i$.
(For the sake of simplicity, several of the examples studied in this
paper consider $\fml{D}_i=\{0,1\}$ (i.e.\ binary features).)
Feature space is defined by
$\mbb{F}=\fml{D}_1\times\fml{D}_2\times\ldots\times\fml{D}_m$.
To refer to an arbitrary point in feature space we use the notation
$\mbf{x}=(x_1,\ldots,x_m)$, whereas to refer to a concrete (constant)
point in feature space we use the notation
$\mbf{v}=(v_1,\ldots,v_m)$, with $v_i\in{D_i}$, $i=1,\ldots,m$.
Similarly to the case of domains, and for the sake of simplicity, most
examples in the paper consider a binary classification problem, with
two classes $|\fml{K}|=2$, e.g.\
$\fml{K}=\{\tbf{0},\tbf{1}\}$,
$\fml{K}=\{\mbf{N},\mbf{Y}\}$,
or
$\fml{K}=\{\ominus,\oplus\}$. However, the results in the paper apply
to any decision (or classification) tree used as a classifier.
A classifier computes a non-constant classification function
$\kappa$ that maps the feature space $\mbb{F}$ into a set of classes,
$\kappa:\mbb{F}\to\fml{K}$.
Furthermore, a \emph{boolean classifier} is such that
$\mbb{F}=\{0,1\}^m$ and $\fml{K}=\{0,1\}$.
An \emph{instance} $\fml{I}$ (or example) denotes a pair
$\fml{I}=(\mbf{v},c)$, where $\mbf{v}\in\mbb{F}$ and $c\in\fml{K}$,
such that $\kappa(\mbf{v})=c$. 
To train a classifier (in our case we are interested in DTs), we start 
from a set of instances
$\fml{I}_{T}=\{{I}_1,\ldots,{I}_n\}$.
Algorithms for learning different families of classifiers can be found
in standard
references~\cite{breiman-bk84,quinlan-ml86,quinlan-bk93,ripley-bk96,mitchell-bk97,russell-bk10,flach-bk12,zhou-bk12,shalev-shwartz-bk14,alpaydin-bk14,poole-bk17,bramer-bk20,zhou-bk21}.
There are also algorithms that learn optimal classifiers
(e.g.~decision trees), and some examples are referenced
in~\cref{sec:intro}.

In this paper, a \emph{literal} represents a condition on the values
of a feature.
Depending on the value assigned to the feature, the literal can be
satisfied or falsified. Throughout the paper, and for consistency of
notation, literals will always be of the form $(x_i\in{S_l})$, where
$S_l\subseteq\fml{D}_i$. This literal is satisfied when feature $i$ is
assigned a value from set $S_l$; otherwise it is falsified. For
simplicity of notation, when 
$|S_l|=1$ and $S_l=\{v_i\}$ , we may instead represent a literal by
$(x_i=v_i)$. Moreover, the universe of literals is
$\mbb{L}=\{\fml{L}\,|\,\fml{L}=(x_i\in{S_l}),i\in\fml{F},S_l\subseteq\fml{D}_i\}$.

A point $\mbf{v}=(v_1,\ldots,v_m)$ in feature space
($\mbf{v}\in\mbb{F}$) can also be described by a set of $m$
\emph{literals}, each of the form $(x_i=v_i)$, i.e.\
$\{(x_i=v_i)|i=1,\ldots,m\}$. Alternatively, literals may be
represented using set notation,
i.e.\ $\{(x_i\in\{v_i\})\,|\,i=1,\ldots,m\}$.

Finally, given the definitions above, the universe of
\emph{classification problems} is represented by the set
$\mbb{M}=\{\fml{M}\,|\,\fml{M}=(\fml{F},\mbb{D},\mbb{F},\fml{K},\kappa)\}$,
where each tuple $\fml{M}=(\fml{F},\mbb{D},\mbb{F},\fml{K},\kappa)$
represents a concrete classification problem.

\subsection{Decision Trees} \label{ssec:dts}
A decision tree $\fml{T}=(V,E)$ is a directed acyclic
graph having at most one path between every pair of
nodes, with $V=\{1,\ldots,\mathfrak{V}\}$ and
$E\subseteq{V}\times{V}$.
Moreover, $V$ is partitioned into a set of non-terminal nodes $N$ and
a set of terminal nodes $T$, i.e.\ $V=N\cup{T}$. When referring to the
\emph{size} of the decision tree, we will use $|\fml{T}|$.
$\fml{T}$ has a root node, $\tsf{root}(\fml{T})\in{V}$ characterized
by having no incoming edges, with the convention being that
$\tsf{root}(\fml{T})=1$.
All other nodes have exactly one incoming edge.
Each terminal node is associated with an element $c$ of $\fml{K}$.
Concretely, we assume a function $\varsigma$ mapping terminal nodes to
one of the classes, $\varsigma:T\to\fml{K}$.
For non-terminal nodes $\sigma:N\to2^{V}$ maps each node $r$ to the
set of child nodes of $r$.
The paper considers only univariate decision trees (i.e.\ each
non-terminal node tests only a single feature). (Possible alternatives
include multivariate decision trees~\cite{utgoff-ml95}, but also
non-grounded decision trees~\cite{blockeel-aij98}; these are beyond
the scope of this paper.)
As a result, each non-terminal node is assigned a single feature.
Specifically, we assume a function $\phi$ mapping non-terminal nodes
to one of the features, $\phi:N\to\fml{F}$. As noted earlier, a
variable $x_i$ is used to denote values (from $\fml{D}_i$) that can be
assigned to feature $i$.
A feature $i$ may be associated with multiple nodes connecting the
tree's root node to some terminal node.
Each edge $(r,s)\in{E}$, with $\phi(r)=i$, is associated with a
literal, representing the values from $\fml{D}_i$ for which the edge
is declared consistent. Concretely, $\varepsilon:E\to\mbb{L}$ maps
each edge $(r,s)$ with a literal of the form $x_i\in{S_l}$, with
$i=\phi(r)$ and $S_l\subseteq\fml{D}_i$.
As noted earlier, literals will \emph{always} be of the form
$(x_i\in{S_l})$, with $S_l\subsetneq\fml{D}_i$. (We could consider a
larger set of relational operators for representing literals,
e.g.\ ${\{{\not\in},{=},{\not=},{<},{\le},{\ge},>\}}$ among  
others. To simplify reasoning about decision trees, only the $\in$
relational operator will be used; the other relational operators can
be translated to the $\in$ operator.)
The definition of literals assumed in the paper allows an edge to be
consistent with multiple values, and so the DTs considered in this
paper effectively correspond to multi-edge decision
trees~\cite{zeger-tit11}. This more generalized definition of literals
allows modeling the DTs learned by well-known tree learning
tools~\cite{utgoff-ml97}.
Nevertheless, when denoting that an edge for a node labeled with
feature $i$ is consistent only with a single value $v_i$, we may
simply label the edge with $v_i$ or with $x_i=v_i$, for the sake of
simplicity.
For a given feature, two literals are inconsistent if these represent
sets of values that do not intersect.

\begin{exmp*}
  The literals 
$(x_1\in\{0\})$ and $(x_1\in\{1\})$ are inconsistent, because
$\{0\}\cap\{1\}=\emptyset$. In contrast, the literals
$(x_1\in\{1,3\})$ and $(x_1\in\{2,3,4\})$ are consistent, because
  $\{1,3\}\cap\{2,3,4\}=\{3\}\not=\emptyset$.
\end{exmp*}

A (complete) path $R_k$ in a DT $\fml{T}$ represents a sequence of
nodes $\langle{r_1},{r_2},\ldots,{r_l}\rangle$, with
$r_1,r_2,\ldots,r_l\in{V}$, that connect the root node
to one of the terminal nodes, 
and such that $(r_j,r_{j+1})\in{E}$.
Hence,
$r_1=\tsf{root}(\fml{T})=1$ and $r_l\in{T}$.
Furthermore, the number of paths in a DT $\fml{T}$ is $|T|$, i.e.\ the
number of terminal nodes.
Each path is assigned an identifier $R_k\in\fml{R}$,
where $\fml{R}$ denotes the set of paths of $\fml{T}$. The sequence of
nodes associated with path $R_k\in\fml{R}$, is
$\tsf{seq}(R_k)=\langle{r_1},{r_2},\ldots,{r_l}\rangle$. 
For simplicity, and with a mild abuse of notation, we also use $R_k$
to represent the sequence of nodes associated with the identified
$R_k$.

A DT (as any other classifier) computes a (non-constant)
classification function $\kappa:\mbb{F}\to\fml{K}$.
When studying explanations, we will consider a concrete instance
$(\mbf{v},c)$, with $c=\kappa(\mbf{v})$, and distinguish two sets of
paths, one corresponding to paths with prediction $c$ and another
corresponding to paths with a different prediction. Thus, 
the set $\fml{P}=\{P_1,\ldots,P_{k_1}\}\subseteq\fml{R}$ denotes the
paths corresponding to a prediction of $c$.
Moreover, the set $\fml{Q}=\{Q_1,\ldots,Q_{k_2}\}\subseteq\fml{R}$
denotes the paths corresponding to a prediction in
$\fml{K}\setminus\{c\}$.
Furthermore, given the definition of $\fml{P}$ and $\fml{Q}$, it is
the case that $\fml{R}=\fml{P}\cup\fml{Q}$. When referring to
$R_k\in\fml{R}$, this may represent a path in $\fml{P}$ or a path in
$\fml{Q}$.

Each path in a DT $\fml{T}$ is associated with a (consistent)
conjunction of literals, denoting the values assigned to the features
so as to reach the terminal node in the path.
We will represent the set of literals of some tree path
$R_k\in\fml{R}$ by $\mrm{\Lambda}(R_k)$.
Likewise, the set of features in some tree path $R_k\in\fml{R}$ is
represented by $\mrm{\Phi}(R_k)$.
Moreover, each terminal node associated with a path is represented by
$\tau(R_k)$, $\tau:\fml{R}\to{T}$.
Each path in the tree \emph{entails} (meaning that it is sufficient
for) the prediction associated with the path's terminal node. Let
$c\in\fml{K}$ denote the prediction associated with path $R_k$,
i.e.\ $c=\varsigma(\tau(R_k))$.
Then, it holds that,
\begin{equation} \label{eq:ent01}
\forall(\mbf{x}\in\mbb{F}).
\left[
\bigwedge_{(x_i\in{S_l})\in\mrm{\Lambda}(R_k)}(x_i\in{S_l}) 
\right]
\limply(\kappa(\mbf{x})=c)
\end{equation}
where $\mbf{x}=(x_1,\ldots,x_i,\ldots,x_m)$, $c\in\fml{K}$, and
each $S_l\subseteq\fml{D}_i$.
\begin{exmp*}
  For the example shown in~\cref{fig:runex01}, it is the case that,
  \[
  \forall((x_1,x_2,x_3)\in\{0,1\}^3).
  \left[(x_1\in\{1\})\land(x_2\in\{1\})\land(x_3\in\{1\})\right]\limply(\kappa(c)=\tbf{1})
  \qedhere
  \]
  Furthermore, the classification function associated with this DT can 
  be represented as follows, for $\mbf{x}=(x_1,x_2,x_3)$:
  \[
  \kappa(\mbf{x})=\left\{
  \begin{array}{ccl}
    1 & & \tn{iff
      $[(x_1\in\{1\})\land(x_2\in\{1\})\land(x_3\in\{1\})]\lor[(x_1\in\{1\})\land(x_2\in\{0\})]$}\\[3.5pt]
    0 & & \tn{iff
      $[(x_1\in\{0\})]\lor[(x_1\in\{1\})\land(x_2\in\{1\})\land(x_3\in\{0\})]$}\\
  \end{array}
  \right.
  \]
\end{exmp*}

As discussed below, one underlying assumption is that any pair of
paths in $\fml{R}$ must have at least one pair of inconsistent
literals.
Let $(r_j,r_{j+1})$ denote some edge in path $R_k\in\fml{R}$.
Let $i$ be the feature associated with $r_j$, and let
$S_{ij}\subsetneq\fml{D}_i$ represent the set of the literal 
$(x_i\in{S_{ij}})$, that is associated with the edge $(r_j,r_{j+1})$.
Given $\mbf{v}\in\mbb{F}$, the edge $(r_j,r_{j+1})$ is consistent with
$\mbf{v}$ if $v_i\in{S_{ij}}$; otherwise the edge is inconsistent.
A predicate
$\tsf{consistent}(R_k,\mbf{v})$ is associated with each path $R_k$ and
each point $\mbf{v}$ in feature space,
$\tsf{consistent}:\fml{R}\times\mbb{F}\to\{0,1\}$ (or alternatively,
$\tsf{consistent}\subseteq\fml{R}\times\mbb{F}$).
The predicate $\tsf{consistent}$ is defined as follows: given
the path $R_k\in\fml{R}$ and the point in feature space
$\mbf{v}\in\mbb{F}$,
$\tsf{consistent}$ takes value 1 if all edges of $R_k$ are consistent
given $\mbf{v}$; otherwise $\tsf{consistent}$ takes value 0.

The paper makes the following general assumption with respect to
decision trees.

\begin{assump} \label{assump:dts}
  For a DT $\fml{T}$, it holds that:
  \begin{enumerate}[nosep]
  \item For each point $\mbf{v}$ in feature space, there exists exactly
    one path consistent with $\mbf{v}$.
    \[
    \forall(\mbf{x}\in\mbb{F}).%
    \left[\exists(R_k\in\fml{R}).\tsf{consistent}(R_k,\mbf{x})\land%
    \forall(R_l\in\fml{R}\setminus\{R_k\}).\neg\tsf{consistent}(R_l,\mbf{x})\right]
    \]
    i.e.\
    each point in feature space must be consistent with at least one
    path, and no point in feature space can be consistent with more than one
    path.
  \item For each tree path $R_k\in\fml{R}$, there exists at least one
    point in feature space that is consistent with the path:
    \[
    \forall(R_k\in\fml{R}).\exists(\mbf{x}\in\mbb{F}).\tsf{consistent}(R_k,\mbf{x})
    \]
    i.e.\ there can be no logically inconsistent paths in a DT.
  \end{enumerate}
\end{assump}
\noindent
Unless stated otherwise, for the results presented in this paper
it is presupposed that \cref{assump:dts} holds%
\footnote{%
  \cref{assump:dts} outlines what one might consider fairly reasonable
  conditions regarding the organization of decision trees, and indeed
  appears to capture the intuitive notion of what a decision tree
  should represent.
  However, and perhaps surprisingly, there are recent examples of tree
  learning tools that can learn DTs with logically inconsistent paths,
  e.g.~\cite[Fig.~4]{valdes-naturesr16} and~\cite[Fig.~6b]{rudin-nips19}.
  Fortunately, it is simple to devise linear-time algorithms, on the
  size of the DT (and for domains of constant size), for removing
  logically inconsistent paths.
  There are also well-known examples of DTs with points in feature
  space inconsistent with all the DT paths, i.e.\ DTs with
  \emph{dead-ends}~\cite[Figure~8.1]{stork-bk01}.}.

The following additional definitions will be considered for DTs.
First, let $\rho:\fml{F}\times\fml{R}\to2^{\mbb{U}}$ be such that
$\rho(i,R_k)$ represents the set of values of feature $i$, taken from
$\fml{D}_i$, that are consistent with path $R_k\in\fml{R}$. Clearly,
$\rho(i,R_k)$ is computed by intersecting all the literals testing the
value of feature $i$:
\begin{equation} \label{eq:rhodef}
  \rho(i,R_k)=\bigcap_{(x_i\in{S_l})\in{\mrm{\Lambda(R_k)}}}S_l
\end{equation}
Observe that $\rho(i,R_k)$ serves to aggregate literals that test the
same feature into a single set of values, each of which is consistent
with path $R_k$.

\begin{exmp*}
  For the example shown in~\cref{fig:runex01}, with
  $Q_2=\langle1,3,5,6\rangle$, $\rho(2,P_2)=\{1\}$ and
  $\rho(3,P_2)=\{0\}$.
\end{exmp*}

Moreover,
let $\rchi_I:\mbb{F}\times\fml{R}\to2^{\fml{F}}$  be such that
$\rchi_I(\mbf{v},Q_l)$ represents the subset of features $i$ which
takes a value (in $\mbf{v}$) that is inconsistent with the
values of $i$ that are consistent with $Q_l$.
Similarly, let $\rchi_P:\fml{R}\times\fml{R}\to2^{\fml{F}}$ be such
that $\rchi_P(P_k,Q_l)$ represents the subset of features $i$ for
which each value consistent with $P_k$ is inconsistent with the
consistent values of $i$ that are consistent with $Q_l$.

\begin{exmp*}
  For the example shown in~\cref{fig:runex01}, with
  $\mbf{v}=(0,0,0,0)$,
  $P_1=\langle1,3,4\rangle$, and
  $Q_2=\langle1,3,5,6\rangle$, then
  $\rchi_I(\mbf{v},P_1)=\{1\}$ and
  $\rchi_P(P_1,Q_2)=\{2\}$.
\end{exmp*}

\paragraph{Running examples.}
Throughout the paper, a number of decision trees will be used as
running examples. These DTs are taken from existing
references~\cite{poole-bk17,rudin-nips19,zhou-bk21,rudin-corr21}%
\footnote{%
  The choice of examples taken from published references is
  deliberate, and aims at illustrating the importance of computing
  path explanations for decision trees.}.
For each of the running examples, and with the purpose of simplifying
the analysis, original feature domains are mapped to symbolic
(numbered) domains. Moreover, all examples of classification problems
map to two classes, which we will represent either by
$\{\tbf{0},\tbf{1}\}$ or by $\{\tbf{N},\tbf{Y}\}$. Clearly, these
modifications do not change in any way the semantics of the original
problems.

\begin{exmp}
  \footnote{%
    In this paper, examples that are referenced from the text or by
    other examples are numbered; the others are not.}
  \label{ex:runex01}
  \cref{fig:runex01} is adapted from~\cite{poole-bk17}.
  The original DT is learned from a given dataset~\cite{poole-bk17}
  using a variant of ID3~\cite{quinlan-bk93}.
  %
  %
  %
  As can be observed, $N=\{1,3,5\}$ and $T=\{2,4,6,7\}$. Given the
  instance $(\mbf{v},c)=((1,1,1),1)$, we set
  $P_1=\langle1,3,4\rangle$, $P_2=\langle1,3,5,7\rangle$,
  $Q_1=\langle1,2\rangle$, $Q_2=\langle1,3,5,6\rangle$. Moreover,
  $P_2$ is the path consistent with the instance.
  For path $P_2$, we have
  $\mrm{\Lambda}(P_2)=\{(x_1\in\{1\}),(x_2\in\{1\}),(x_3\in\{1\})\}$.
  Clearly, the literals associated with $\mbf{v}=(1,1,1)$,
  i.e.\ $x_1=1$, $x_2=1$ and $x_3=1$, are consistent with
  $x_1\in\{1\}$, $x_2\in\{1\}$ and $x_3\in\{1\}$, respectively.
  Additional results for this DT are summarized in~\cref{tab:dtrees}
  (see~\cpageref{tab:dtrees}).
\end{exmp}

\begin{figure}[t]
  \begin{subfigure}{0.3125\textwidth}
    \scalebox{0.9}{\forestset{
  BDT/.style={
    for tree={
      l=1.125cm,s sep=1.0cm,
      if n children=0{}{circle},
      draw,
      edge={
        my edge
      },
      if n=1{
        edge+={0 my edge},
      }{},
    }
  },
}
\begin{forest}
  BDT
  [$x_1$, label={[yshift=-6.875ex]{{\tiny1}}}
    [{\footnotesize\color{darkred}{\tbf{0}}},
      label={[yshift=-4.775ex]{{\tiny2}}},
      fill={tred3!20}, edge label={node[pos=0.45,left,xshift=-1.0pt]
        {{\small$\in\{0\}$}}}]
    [$x_2$, label={[yshift=-6.875ex]{{\tiny3}}},
      edge label={node[pos=0.45,right,xshift=1.0pt]
        {{\small$\in\{1\}$}}}
      [{\footnotesize\color{darkgreen}{\tbf{1}}},
        label={[yshift=-4.775ex]{{\tiny4}}}, fill={tgreen3!25},
        edge label={node[pos=0.45,left,xshift=-1.0pt] {{\small$\in\{0\}$}}}]
      [$x_3$, , label={[yshift=-6.875ex]{{\tiny5}}},
        edge label={node[pos=0.45,right,xshift=1.0pt]
          {{\small$\in\{1\}$}}} 
        [{\footnotesize\color{darkred}{\tbf{0}}},
          label={[yshift=-4.775ex]{{\tiny6}}}, fill={tred3!20},
          edge label={node[pos=0.45,left,xshift=-1.0pt] {{\small$\in\{0\}$}}}]
        [{\footnotesize\color{darkgreen}{\tbf{1}}},
          label={[yshift=-4.775ex]{{\tiny7}}}, fill={tgreen3!25},
          edge label={node[pos=0.45,right,xshift=1.0pt] {{\small$\in\{1\}$}}}]
      ]
    ]
  ]
\end{forest}}
    \caption{Decision tree}
  \end{subfigure}
  \begin{subfigure}{0.6875\textwidth}
    \begin{center}
      \scalebox{0.9}{
        \renewcommand{\tabcolsep}{0.425em}
        \begin{tabular}{ccccc} \toprule
          Feature & ID & Var. & Domain & Coded Domain \\ \toprule
          Length  & 1  & $x_1$  & $\{\tn{Long},\tn{Short}\}$ & $\{0,1\}$ \\
          Thread  & 2  & $x_2$  & $\{\tn{New},\tn{Follow-Up}\}$ & $\{0,1\}$ \\
          Author  & 3  & $x_3$  & $\{\tn{Uknown},\tn{Known}\}$ & $\{0,1\}$ \\
          \bottomrule
        \end{tabular}
      }
    \end{center}
    
    \smallskip

    \begin{center}
      \scalebox{0.9}{
        \begin{tabular}{C{2.5cm}C{3.5cm}} \toprule
          Classes & Coded Representation \\ \toprule
          $\{\tn{Skips},\tn{Reads}\}$ & $\{0,1\}$ \\
          \bottomrule
        \end{tabular}
      }
    \end{center}
    \caption{Mapping of features and classes}
  \end{subfigure}
  \caption{Decision tree, adapted from~\cite[Ch.~07,~Fig.~7.4]{poole-bk17}}
  \label{fig:runex01}
\end{figure}

\begin{exmp} \label{ex:runex02}
  \cref{fig:runex02} is adapted from~\cite{rudin-nips19}. The original
  DT was produced with the tool OSDT (optimal sparse decision
  trees)~\cite{rudin-nips19}.
  %
  %
  %
  As can be observed, $N=\{1,2,4,5,7,8,10\}$ and
  $T=\{3,6,9,11,12,13,14,15\}$.
  Given the instance $(\mbf{v},c)=((0,0,1,0,1),1)$, we set
  $P_1=\langle1,2,4,7,10,15\rangle$, $P_2=\langle1,2,4,7,11\rangle$,
  $P_3=\langle1,2,5,8,13\rangle$, 
  $P_4=\langle1,2,5,9\rangle$, $P_5=\langle1,3\rangle$, and then
  $Q_1=\langle1,2,4,6\rangle$, $Q_2=\langle1,2,4,7,10,14\rangle$,
  $Q_3=\langle1,2,5,8,12\rangle$.
  Moreover, $P_1$ is the path consistent with the instance.
  As can be observed,
  $\mrm{\Lambda}(P_1)=\{(x_1\in\{0\}),(x_2\in\{0\}),(x_3\in\{1\}),(x_4\in\{0\}),(x_5\in\{1\})\}$,
  and the literals associated with $\mbf{v}=(0,0,1,0,1)$ are
  $\{x_1=0,x_2=0,x_3=1,x_4=0,x_5=1\}$, hence being pairwise consistent.
  Additional results for this DT are summarized in~\cref{tab:osdt}
  (see~\cpageref{tab:osdt}).
\end{exmp}
\begin{figure}[t]
  \begin{subfigure}[b]{0.6\textwidth}
    \scalebox{0.9}{
%
\forestset{
  BDT/.style={
    for tree={
      l=1.5cm,s sep=1.15cm,
      if n children=0{}{circle}, 
      draw=black,
      text=black,
      edge={
        my edge
      },
      if n=1{
        edge+={0 my edge},
      }{},
      edge=thick,
    }
  },
}
%
%
\begin{forest}
  BDT
  [{$x_1$}, label={[yshift=-6.875ex]{{\tiny1}}} 
    [{$x_2$}, label={[yshift=-6.875ex]{{\tiny2}}}, 
      edge label={node[midway,left,xshift=-0.5pt] {{\scriptsize$\in\{0\}$}}}
      [{$x_3$}, label={[yshift=-6.875ex]{{\tiny4}}}, 
        edge label={node[midway,left,xshift=-2.5pt] {{\scriptsize$\in\{0\}$}}}
        [\rhlight{\textbf{0}}, label={[yshift=-5.0ex]{{\tiny6}}},
          edge label={node[midway,left,xshift=-0.5pt] {{\scriptsize$\in\{0\}$}}},
          rectangle, fill={tred3!20} ]
        [{$x_4$}, label={[yshift=-6.875ex]{{\tiny7}}}, 
          edge label={node[midway,right,xshift=0.5pt] {{\scriptsize$\in\{1\}$}}}
          [{$x_5$}, label={[yshift=-6.875ex]{{\tiny10}}}, 
            edge label={node[midway,left,xshift=-1.5pt] {{\scriptsize$\in\{0\}$}}}
            [\rhlight{\textbf{0}}, label={[yshift=-5.0ex]{{\tiny14}}},
              edge label={node[midway,left,xshift=-1.5pt] {{\scriptsize$\in\{0\}$}}},
              rectangle, fill={tred3!20} ]
            [\dghlight{\textbf{1}}, label={[yshift=-5.0ex]{{\tiny15}}},
              edge label={node[midway,right,xshift=0.5pt] {{\scriptsize$\in\{1\}$}}},
              rectangle, fill={tgreen3!25} ]
          ]
          [\dghlight{\textbf{1}}, label={[yshift=-5.0ex]{{\tiny11}}},
            edge label={node[midway,right,xshift=0.5pt] {{\scriptsize$\in\{1\}$}}},
            rectangle, fill={tgreen3!25} ]
        ]
      ]
      [{$x_4$}, label={[yshift=-6.875ex]{{\tiny5}}}, 
        edge label={node[midway,right,xshift=1.5pt] {{\scriptsize$\in\{1\}$}}}
        [{$x_5$}, label={[yshift=-6.875ex]{{\tiny8}}}, 
          edge label={node[midway,left,xshift=-0.5pt] {{\scriptsize$\in\{0\}$}}}
          [\rhlight{\textbf{0}}, label={[yshift=-5.0ex]{{\tiny12}}},
            edge label={node[midway,left,xshift=-0.5pt] {{\scriptsize$\in\{0\}$}}},
            rectangle, fill={tred3!20} ]
          [\dghlight{\textbf{1}}, label={[yshift=-5.0ex]{{\tiny13}}},
            edge label={node[midway,right,xshift=0.5pt] {{\scriptsize$\in\{1\}$}}},
            rectangle, fill={tgreen3!25} ]
        ]
        [\dghlight{\textbf{1}}, label={[yshift=-5.0ex]{{\tiny9}}},
          edge label={node[midway,right,xshift=0.5pt] {{\scriptsize$\in\{1\}$}}},
          rectangle, fill={tgreen3!25} ]
      ]
    ]
    [\dghlight{\textbf{1}}, label={[yshift=-5.0ex]{{\tiny3}}},
      edge label={node[midway,right,xshift=0.5pt] {{\scriptsize$\in\{1\}$}}},
      rectangle, fill={tgreen3!25} ]
  ]
\end{forest}}
    \caption{Decision tree}
  \end{subfigure}
  \begin{subfigure}[b]{0.4\textwidth}
    \begin{center}
      \scalebox{0.9}{
        \begin{tabular}{lC{1.5cm}} \toprule
          Feature in~\cite{rudin-nips19} & Boolean feature \\ \toprule
          middle-middle=x & $x_1$ \\ \midrule
          top-left=x & $x_2$ \\ \midrule
          bottom-right=x & $x_3$ \\ \midrule
          bottom-left=x & $x_4$ \\ \midrule
          top-right=x & $x_5$ \\ 
          \bottomrule
        \end{tabular}
      }
    \end{center}

    \bigskip

    \begin{center}
      \scalebox{0.9}{
        \begin{tabular}{cc} \toprule
          \multicolumn{2}{c}{Definitions} \\ \toprule
          $\fml{F}$ & $\{1,2,3,4,5\}$ \\ \midrule
          $\fml{D}_1,\ldots,\fml{D}_5$ & $\{0,1\}$ \\ \midrule
          $\fml{K}$ & $\{0,1\}$ \\
          \bottomrule
        \end{tabular}
      }
    \end{center}
    \smallskip
    \caption{Mapping of features}
  \end{subfigure}
  \caption{Decision tree, adapted from~\cite[Figure~5b]{rudin-nips19}, for
    the \tsf{tic-tac-toe} dataset}
  \label{fig:runex02}
\end{figure}

\begin{exmp} \label{ex:runex03}
  \cref{fig:runex03} is adapted from~\cite{zhou-bk21}, and illustrates
  the application of a standard tree learning algorithm, but where the
  features are categorical (and non-binary).
  %
  In this case, and for completeness, we show features 5 and 6
  (resp.~\tsf{sound} and \tsf{umbilicus}), but these are not
  associated with any node in the DT.
  As can be observed, $N=\{1,2,3,6,11\}$, $T=\{4,5,7,8,9,10,12,13\}$.
  Given the instance $(\mbf{v},c)=((1,2,1,2),\tbf{Y})$, we set
  $P_1=\langle1,2,5\rangle$, $P_2=\langle1,2,6,10\rangle$,
  $P_3=\langle1,2,6,11,12\rangle$, $P_4=\{1,3,9\}$, and then
  $Q_1=\langle1,2,6,11,13\rangle$, $Q_2=\langle1,2,7\rangle$,
  $Q_3=\langle1,3,8\rangle$, $Q_4=\langle1,4\rangle$.
  Moreover, $P_3$ is the path consistent with the instance.
  Additional results for this DT are summarized in~\cref{tab:dtrees}
  (see~\cpageref{tab:dtrees}). 
\end{exmp}

\begin{figure}[t]
  \begin{subfigure}[b]{0.725\textwidth}
    \begin{center}
      \scalebox{0.9}{
%
\forestset{
  BDT/.style={
    for tree={
      l=1.5cm,s sep=1.5cm,
      if n children=0{}{circle}, 
      draw=black,
      text=black,
      edge={
        my edge
      },
      edge=thin,
    }
  },
}
\begin{forest}
  BDT
  [{$x_1$}, label={[yshift=-6.875ex]{{\tiny1}}}, 
    [{$x_2$}, label={[yshift=-0.625ex]{{\tiny2}}}, 
      edge label={node[pos=0.55,above,xshift=-5.5pt] {{\scriptsize$\in\{1\}$}}}
      [\dghlight{\textbf{Y}}, label={[yshift=-5.125ex]{{\tiny5}}},
        rectangle, fill={tgreen3!25},
        edge label={node[pos=0.45,left,xshift=-4pt] {{\scriptsize$\in\{1\}$}}}
      ]
      [{$x_4$}, label={[yshift=-6.875ex]{{\tiny6}}}, 
        edge label={node[pos=0.7,right,xshift=-0.575pt] {{\scriptsize$\in\{2\}$}}}
        [\dghlight{\textbf{Y}}, label={[yshift=-5.125ex]{{\tiny10}}},
          rectangle, fill={tgreen3!25},
          edge label={node[pos=0.45,left,xshift=-1.5pt] {{\scriptsize$\in\{1,3\}$}}}
        ]
        [{$x_3$}, label={[yshift=-6.875ex]{{\tiny11}}}, 
          edge label={node[pos=0.45,right,xshift=0.25pt] {{\scriptsize$\in\{2\}$}}}
          [\dghlight{\textbf{Y}}, label={[yshift=-5.125ex]{{\tiny12}}},
            rectangle, fill={tgreen3!25},
            edge label={node[pos=0.45,left,xshift=-0.5pt] {{\scriptsize$\in\{1\}$}}}
          ]
          [\rhlight{\textbf{N}}, label={[yshift=-5.125ex]{{\tiny13}}},
            rectangle, fill={tred3!20},
            edge label={node[pos=0.45,right,xshift=0.25pt] {{\scriptsize$\in\{2\}$}}}
          ]
        ]
      ]
      [\rhlight{\textbf{N}}, label={[yshift=-5.125ex]{{\tiny7}}},
        rectangle, fill={tred3!20},
        edge label={node[pos=0.45,right,xshift=3.0pt] {{\scriptsize$\in\{3\}$}}}
      ]
    ]
    [{$x_3$}, label={[yshift=-6.875ex]{{\tiny3}}}, 
      edge label={node[pos=0.75,left,xshift=-0.5pt] {{\scriptsize$\in\{2\}$}}}
      [\rhlight{\textbf{N}}, label={[yshift=-5.125ex]{{\tiny8}}},
        rectangle, fill={tred3!20},
        edge label={node[pos=0.45,left,xshift=-0.25pt] {{\scriptsize$\in\{1\}$}}}
      ]
      [\dghlight{\textbf{Y}}, label={[yshift=-5.125ex]{{\tiny9}}},
        rectangle, fill={tgreen3!25},
        edge label={node[pos=0.45,right,xshift=0.25pt] {{\scriptsize$\in\{2\}$}}}
      ]
    ]
    [\rhlight{\textbf{N}}, label={[yshift=-5.125ex]{{\tiny4}}},
      rectangle, fill={tred3!20},
      edge label={node[pos=0.55,above,xshift=4.5pt] {{\scriptsize$\in\{3\}$}}}
    ]
  ]
\end{forest}}
    \end{center}
    \caption{Decision tree}
  \end{subfigure}
  \hspace*{-1.5cm}
  \begin{subfigure}[b]{0.25\textwidth}
    \hspace*{-1.5cm}
    \scalebox{0.875}{
      \begin{tabular}{C{2.5cm}C{3.5cm}} \toprule
        Classes & Coded Representation \\ \toprule
        $\{\tn{ripe},\tn{unripe}\}$ &
        $\{\tn{\tbf{Y}},\tn{\tbf{N}}\}$ \\
        \bottomrule
      \end{tabular}
    }

    \bigskip

    \caption{Mapping of classes}
  \end{subfigure}

  \medskip
  
  \begin{subfigure}{\textwidth}
    \begin{center}
      \scalebox{0.885}{
        \begin{tabular}{ccccc} \toprule
          Feature & ID & Var.  & Domain & Coded Domain \\ \toprule
          Texture & 1  & $x_1$ &
          $\{\tn{clear},\tn{clightly~blurry},\tn{blurry}\}$ &
          $\{1,2,3\}$ \\
          Root    & 2  & $x_2$ &
          $\{\tn{curly},\tn{slightly~curly},\tn{curly}\}$ &
          $\{1,2,3\}$ \\
          Surface & 3  & $x_3$ & $\{\tn{hard},\tn{soft}\}$ &
          $\{1,2\}$\\
          Color   & 4  & $x_4$ & $\{\tn{green},\tn{dark},\tn{light}\}$
          & $\{1,2,3\}$ \\
          Sound & 5 & $x_5$ & $\{\tn{crisp},\tn{muffled},\tn{dull}\}$
          & $\{1,2,3\}$ \\
          Umbilicus & 6 & $x_6$ &
          $\{\tn{flat},\tn{slightly~hollow},\tn{hollow}\}$ &
          $\{1,2,3\}$ \\
          \bottomrule
        \end{tabular}
      }
    \end{center}
    \caption{Mapping of features}
  \end{subfigure}
  
  \caption{Decision tree adapted from~\cite[Ch.~04,~Fig.~4.3]{zhou-bk21}}
  \label{fig:runex03}
\end{figure}

\begin{exmp} \label{ex:runex04}
  \cref{fig:runex04} is adapted from~\cite{rudin-corr21}, and shows a
  DT for the \tsf{recividism} dataset~\cite{propublica16}. Features are
  categorical or ordinal.
  (Feature \tsf{Priors} ranges from 0 to 38, and feature \tsf{Age}
  ranges from 18 to 96. The symbolic names $\tn{MxP}=38$,
  $\tn{MnA}=18$, $\tn{MxA}=96$ are shown in the DT.)
  A distinguishing feature of this running example is that one of the
  features (\tsf{Priors}) is tested more than once along some of the
  paths.
  %
  %
  %
  As can be observed, $N=\{1,3,4,6\}$, $T=\{2,5,7,8,9\}$.
  Given the instance $(\mbf{v},c)=((2,20,0),\tbf{Y})$, we set
  $P_1=\langle1,2\rangle$, $P_2=\langle1,3,4,6,8\rangle$,
  $P_3=\langle1,3,4,7\rangle$, and then
  $Q_1=\langle1,3,4,6,9\rangle$, $Q_2=\langle1,3,5\rangle$.
  Moreover, $P_2$ is the path consistent with the instance.
  Additional results for this DT are summarized in~\cref{tab:osdt}
  (see~\cpageref{tab:osdt}).
\end{exmp}

\begin{figure}[t]
  \begin{subfigure}[b]{0.725\textwidth}
    \begin{center}
      \scalebox{0.9}{
%
\forestset{
  BDT/.style={
    for tree={
      l=1.5cm,s sep=1.5cm,
      if n children=0{}{circle}, 
      draw=black,
      text=black,
      edge={
        my edge
      },
      edge=thin,
    }
  },
}
\begin{forest}
  BDT
  [{$x_1$}, label={[yshift=-6.875ex]{{\tiny1}}}, 
    [\dghlight{\textbf{Y}}, label={[yshift=-5.125ex]{{\tiny2}}},
      rectangle, fill={tgreen3!25},
      edge label={node[pos=0.45,left,xshift=-1pt] {{\scriptsize$\in\{4..\tn{MxP}\}$}}}
    ]
    [{$x_2$}, label={[yshift=-6.875ex]{{\tiny3}}}, 
      edge label={node[pos=0.45,right,xshift=1.0pt] {{\scriptsize$\in\{0..3\}$}}}
      [{$x_3$}, label={[yshift=-6.875ex]{{\tiny4}}}, 
        edge label={node[pos=0.55,left,xshift=-0.5pt] {{\scriptsize$\in\{\tn{MnA}..25\}$}}}
        [{$x_1$}, label={[yshift=-6.875ex]{{\tiny6}}}, 
          edge label={node[pos=0.45,left,xshift=-0.25pt] {{\scriptsize$\in\{0\}$}}}
          [\dghlight{\textbf{Y}}, label={[yshift=-5.125ex]{{\tiny8}}},
            rectangle, fill={tgreen3!25},
            edge label={node[pos=0.45,left,xshift=-0.5pt] {{\scriptsize$\in\{2,3\}$}}}
          ]
          [\rhlight{\textbf{N}}, label={[yshift=-5.125ex]{{\tiny9}}},
            rectangle, fill={tred3!20},
            edge label={node[pos=0.45,right,xshift=0.25pt] {{\scriptsize$\in\{0,1\}$}}}
          ]
        ]
        [\dghlight{\textbf{Y}}, label={[yshift=-5.125ex]{{\tiny7}}},
          rectangle, fill={tgreen3!25},
          edge label={node[pos=0.45,right,xshift=0.5pt] {{\scriptsize$\in\{1\}$}}}
        ]
      ]
      [\rhlight{\textbf{N}}, label={[yshift=-5.125ex]{{\tiny5}}},
        rectangle, fill={tred3!20},
        edge label={node[pos=0.55,right,xshift=0.25pt] {{\scriptsize$\in\{26..\tn{MxA}\}$}}}
      ]
    ]
  ]
\end{forest}

}
    \end{center}
    \caption{Decision tree}
  \end{subfigure}
  \hspace*{-1.5cm}
  \begin{subfigure}[b]{0.25\textwidth}
    \hspace*{-1.5cm}
    \scalebox{0.875}{
      \begin{tabular}{C{2.5cm}C{3.5cm}} \toprule
        Classes & Coded Representation \\ \toprule
        $\{\tn{Yes},\tn{No}\}$ &
        $\{\tn{\tbf{Y}},\tn{\tbf{N}}\}$ \\
        \bottomrule
      \end{tabular}
    }

    \bigskip

    \caption{Mapping of classes}
  \end{subfigure}

  \medskip
  
  \begin{subfigure}{\textwidth}
    \begin{center}
      \scalebox{0.885}{
        \begin{tabular}{ccccc} \toprule
          Feature & ID & Var.  & Domain & Coded Domain \\ \toprule
          Priors & 1  & $x_1$ &
          $\{0,\ldots,\tn{MxP}\}$ &
          $\{0,\ldots,\tn{MxP}\}$ \\
          Age    & 2  & $x_2$ &
          $\{\tn{MnA},\ldots,\tn{MxA}\}$ &
          $\{\tn{MnA},\ldots,\tn{MxA}\}$ \\
          Juvenile crimes & 3  & $x_3$ & $\{0,1\}$ &
          $\{0,1\}$\\
          \bottomrule
        \end{tabular}
      }
    \end{center}
    \caption{Mapping of features}
  \end{subfigure}
  
  \caption{Decision tree adapted from~\cite[Figure~2]{rudin-corr21}.
    According to the dataset, $\tn{MnA}=18$, $\tn{MxA}=96$ and
    $\tn{MxP}=38$, but the numbers are be left symbolic.
  }
  \label{fig:runex04}
\end{figure}

\subsection{Formal Explainability} \label{ssec:fxai}
Formal explanation\footnote{%
  There is an extensive body of work on non-formal XAI approaches to
  XAI~\cite{berrada-ieee-access18,muller-dsp18,muller-bk19,pedreschi-acmcs19,muller-ieee-proc21,guan-ieee-tnnls21,holzinger-bk22,holzinger-xxai22b,doran-jair22}.}
approaches have been studied in a growing body of
research in recent 
years\footnote{%
  A sample of references on formal explainability includes~\cite{darwiche-ijcai18,inms-aaai19,darwiche-aaai19,inms-nips19,nsmims-sat19,hazan-aies19,marquis-kr20,darwiche-pods20,darwiche-ecai20,darwiche-kr20,toni-kr20,mazure-sum20,inams-aiia20,msgcin-nips20,iims-corr20,msgcin-icml21,ims-ijcai21,kwiatkowska-ijcai21,hiims-kr21,marquis-kr21,ims-sat21,asher-cdmake21,cms-cp21,mazure-cikm21,hiicams-corr21,toni-aij21,lorini-clar21,kutyniok-jair21,darwiche-jair21,tan-nips21,barcelo-nips21,hiicams-aaai22,iisms-aaai22,msi-aaai22,rubin-aaai22}.}. 
Concretely, this paper uses the definition of \emph{abductive
explanation}~\cite{inms-aaai19} (AXp), which corresponds to
a PI-explanation~\cite{darwiche-ijcai18} in the case of boolean 
classifiers. AXp's represent prime implicants of the discrete-valued
classifier function (which computes the predicted class)\footnote{%
  There exist also standard references with detailed overviews of the
  uses of prime implicants in the context of boolean
  functions~\cite{somenzi-bk06,crama-bk11}.
  Generalizations of prime implicants beyond boolean domains have been 
  considered before~\cite{marquis-fair91}.
  Prime implicants have also been referred to as minimum satisfying
  assignments in first-order logic~(FOL)~\cite{mcmillan-cav12}, and
  have been studied in modal and description
  logics~\cite{bienvenu-jair09}.
}.
Throughout this paper we will opt to use the acronym AXp to refer to
abductive explanations.

Let us consider a given classifier, computing a classification function
$\kappa$ on feature space $\mbb{F}$, a point $\mbf{v}\in\mbb{F}$, with
prediction $c=\kappa(\mbf{v})$, and let $\fml{X}$ denote a subset of
the set of features $\fml{F}$, 
$\fml{X}\subseteq\fml{F}$. $\fml{X}$ is a weak AXp for the instance
$(\mbf{v},c)$ if,
\begin{equation} \label{eq:axp1}
  \begin{array}{rcl}
    \waxp(\fml{X}) & ~:=~~ &
    \forall(\mbf{x}\in\mbb{F}).%
    \left[\bigwedge_{i\in\fml{X}}(x_i=v_i)\right]\limply(\kappa(\mbf{x})=c)\\ 
  \end{array}
\end{equation}
(We could highlight that $\waxp$ is parameterized on $\kappa$,
$\mbf{v}$ and $c$, but opt not to clutter the notation, and so these
dependencies will be left implicit.)
Thus, given an instance $(\mbf{v},c)$, a (weak) AXp is a set of
features which, if fixed to the values dictated by $\mbf{v}$, then the
prediction is guaranteed to be $c$, independently of the values
assigned to the other features.
$\fml{X}$ is an AXp if, besides being a weak AXp, it is also
subset-minimal, i.e.
\begin{equation} \label{eq:axp2a}
  \begin{array}{rcl}
    \axp(\fml{X}) & ~:=~~ &
    \waxp(\fml{X})\land\forall(\fml{X}'\subsetneq\fml{X}).\neg\waxp(\fml{X}')\\
  \end{array}
\end{equation}  
An AXp can be viewed as a possible answer to a ``\tbf{Why?}''
question, i.e.\ why is the classifier's prediction $c$?

It should be plain in this work, but also in earlier work, that the
representation of AXp's using subsets of features aims at simplicity.
The sufficient condition for the prediction is evidently the
conjunction of literals associated with the features contained in the
AXp.

Similarly to the case of AXp's, one can define (weak) contrastive
explanations (CXp's)~\cite{miller-aij19,inams-aiia20}.
$\fml{Y}\subseteq\fml{F}$ is a weak CXp for the instance $(\mbf{v},c)$
if,
\begin{equation} \label{eq:cxp1}
  \begin{array}{rcl}
    \wcxp(\fml{Y}) & ~:=~~ & \exists(\mbf{x}\in\mbb{F}).%
    \left[\bigwedge_{i\not\in\fml{Y}}(x_i=v_i)\right]\land(\kappa(\mbf{x})\not=c)\\ 
  \end{array}
\end{equation}
(As before, for simplicity we keep the parameterization of $\wcxp$ on
$\kappa$, $\mbf{v}$ and $c$ implicit.)
Thus, given an instance $(\mbf{v},c)$, a (weak) CXp is a set of
features which, if allowed to take any value from their domain, then
there is an assignment to the features that changes the prediction to
a class other than $c$, this while the features not in the explanation
are kept to their values (\emph{ceteris paribus}).

Furthermore, a set $\fml{Y}\subseteq\fml{F}$ is a CXp if, besides
being a weak CXp, it is also subset-minimal, i.e.
\begin{equation} \label{eq:cxp2a}
  \begin{array}{rcl}
    \cxp(\fml{Y}) & ~:=~~ &
    \wcxp(\fml{Y})\land\forall(\fml{Y}'\subsetneq\fml{Y}).\neg\wcxp(\fml{Y}')\\
  \end{array}
\end{equation}  
A CXp can be viewed as a possible answer to a ``\tbf{Why Not?}''
question, i.e.\ why isn't the classifier's prediction a class other
than $c$?
A different perspective for a contrastive explanation is as the answer
to a \emph{How?} question, i.e.\ how to change the features so as to
change the prediction. In recent literature this alternative view has
been investigated under the name \emph{actionable
  recourse}~\cite{liu-fat19,alfano-fat20,valera-facct21,valera-corr20}.
It should be underlined that whereas AXp's correspond to prime
implicants of the boolean function $(\kappa(\mbf{x})=c)$ that are
consistent with some point $\mbf{v}\in\mbb{F}$, CXp are \emph{not}
prime implicates of function $(\kappa(\mbf{x})=c)$. Nevertheless, the
concept of \emph{counterexample} studied in formal
explainability~\cite{inms-aaai19} corresponds to prime implicates of
the function $(\kappa(\mbf{x})=c)$ (which are not restricted to be
consistent with some specific point $\mbf{v}\in\mbb{F}$).
\label{def:cex}

One important observation is that, independently of what $\kappa$
represents, the $\waxp$ and $\wcxp$ predicates (respectively defined
using~\eqref{eq:axp1} and~\eqref{eq:cxp1}) are \emph{monotone}\footnote{%
  Clearly, from the definition of $\waxp$ (resp.~$\wcxp$), if
  $\waxp(\fml{Z})$ (resp.\ $\wcxp(\fml{Z})$) holds, then 
  $\waxp(\fml{Z}')$ (resp.~$\wcxp(\fml{Z}')$) also holds for any
  superset $\fml{Z}'$ of $\fml{Z}$. If $\waxp(\fml{Z})$
  (resp.~$\wcxp(\fml{Z})$) does not hold,
  then $\waxp(\fml{Z}')$ (resp.~$\wcxp(\fml{Z}')$) also does not hold
  for any superset $\fml{Z}'$ of $\fml{Z}$.}.
This means that the tests for minimality (i.e.,
respectively~\eqref{eq:axp2a} and~\eqref{eq:cxp2a}) can be simplified
to:
\begin{equation} \label{eq:axp2b}
  \begin{array}{rcl}
    \axp(\fml{X}) & ~:=~~ &
    \waxp(\fml{X})\land\forall(t\in\fml{X}).\neg\waxp(\fml{X}\setminus\{t\})\\
  \end{array}
\end{equation}  
and,
\begin{equation} \label{eq:cxp2b}
  \begin{array}{rcl}
    \cxp(\fml{Y}) & ~:=~~ &
    \wcxp(\fml{Y})\land\forall(t\in\fml{Y}).\neg\wcxp(\fml{Y}\setminus\{t\})\\
  \end{array}
\end{equation}  
Observe that, instead of considering all possible subsets of
$\fml{X}$ (resp.~$\fml{Y}$), it suffices to consider the subsets
obtained by removing a single element from $\fml{X}$
(resp.~$\fml{Y}$).
This observation is 
at the core of the algorithms proposed in recent years for computing
AXp's and CXp's of a growing range of families of
classifiers~\cite{inms-aaai19,inms-nips19,nsmims-sat19,msgcin-nips20,iims-corr20,msgcin-icml21,ims-ijcai21,kwiatkowska-ijcai21,hiims-kr21,ims-sat21,hiicams-corr21}.

\begin{exmp} \label{ex:runex01a}
  For the DT in~\cref{fig:runex01}, consider the instance
  $((1,1,1),1)$ (i.e.\ if \tsf{Length} is \tsf{Short}, and
  \tsf{Thread} is \tsf{Follow-Up}, and \tsf{Author} is \tsf{Known},
  then predict \tsf{Reads}).
  The paths in $\fml{P}$ are: $\fml{P}=\{P_1,P_2\}$, with
  $P_1=\langle{1},3,4\rangle$ and $P_2=\langle{1},3,5,7\rangle$.
  The paths in $\fml{Q}$ are: $\fml{Q}=\{Q_1,Q_2\}$, with
  $Q_1=\langle1,2\rangle$ and $Q_2=\langle1,3,5,6\rangle$.
  Path $P_2$ is consistent with the instance; all other paths are
  inconsistent with the instance. The features associated with $P_2$
  are $\mrm{\Phi}(P_2)=\{1,2,3\}$, and the path literals associated
  with path $P_2$ are
  $\Lambda(P_2)=\{(x_1\in\{1\}),(x_2\in\{1\}),(x_3\in\{1\})\}$.
  Nevertheless, from~\cref{fig:runex01}, it is clear that
  $\fml{X}=\{1,3\}$ is a weak AXp. Indeed, if feature 2 (feature
  variable $x_2$) is allowed to take any value in its domain, then the
  prediction remains unchanged. Hence, it is the case that, with 
  $\mbf{x}=(x_1,x_2,x_3)$,  $\forall(\mbf{x}\in\{0,1\}^3).[(x_1)\land(x_3)]\limply\kappa(\mbf{x})$.
  Furthermore, $\fml{X}$ is minimal, since dropping either 1 or 3 from
  $\fml{X}$ will cause the weak AXp condition to fail.

  CXp's can be computed in a similar way. One can also observe that if
  either $x_1$ or $x_3$ are allowed to take any value from their
  domains, then there is an assignment that causes the prediction to
  change. Thus, $\fml{Y}_1=\{1\}$ or $\fml{Y}_2=\{3\}$ are CXp's of
  the given instance.
\end{exmp}

Given the definitions of AXp and CXp, and building on Reiter's seminal
work~\cite{reiter-aij87}, recent work~\cite{inams-aiia20} %
proved the following duality between minimal hitting sets\footnote{%
  Recall that a set $\fml{H}$ is a \emph{hitting set} of a set of
  sets $\fml{S}=\{S_1,\ldots,S_k\}$ if
  $\fml{H}\cap{S_i}\not=\emptyset$ for $i=1,\ldots,k$. $\fml{H}$ is a
  minimal hitting set of $\fml{S}$, if $\fml{H}$ is a hitting set of
  $\fml{S}$, and there is no proper subset of $\fml{H}$ that is also a
  hitting set of $\fml{S}$.}:
\begin{prop}[Minimal hitting-set duality between AXp's and CXp's]
  \label{prop:xpdual}
  AXp's are minimal hitting sets (MHSes) of CXp's and vice-versa.
\end{prop}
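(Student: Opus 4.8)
The plan is to derive \cref{prop:xpdual} from Reiter's classical minimal hitting-set duality, bootstrapped from a single complementation identity between the weak-explanation predicates together with their monotonicity (already noted in the text). First I would record, directly from the quantifier structure of \eqref{eq:axp1} and \eqref{eq:cxp1}, that for every $\fml{X}\subseteq\fml{F}$,
\[
\neg\waxp(\fml{X})\quad\text{if and only if}\quad\wcxp(\fml{F}\setminus\fml{X}).
\]
Indeed, negating \eqref{eq:axp1} turns $\forall\mbf{x}.\,A(\mbf{x})\limply(\kappa(\mbf{x})=c)$ into $\exists(\mbf{x}\in\mbb{F}).\,A(\mbf{x})\land(\kappa(\mbf{x})\neq c)$ with $A(\mbf{x})=\bigwedge_{i\in\fml{X}}(x_i=v_i)$; and since every feature lies in $\fml{F}$, the index condition $i\notin\fml{F}\setminus\fml{X}$ is just $i\in\fml{X}$, so this is exactly $\wcxp(\fml{F}\setminus\fml{X})$ as given by \eqref{eq:cxp1}. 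By the same computation, $\neg\wcxp(\fml{Y})$ iff $\waxp(\fml{F}\setminus\fml{Y})$, so the identity is symmetric in the two predicates.

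Next I would prove that $\fml{X}$ is a weak AXp exactly when $\fml{X}$ intersects every CXp. For one direction, if some CXp $\fml{Y}$ were disjoint from $\fml{X}$ then $\fml{Y}\subseteq\fml{F}\setminus\fml{X}$, so by monotonicity of $\wcxp$ we get $\wcxp(\fml{F}\setminus\fml{X})$, hence $\neg\waxp(\fml{X})$ by the identity above — a contradiction. Conversely, if $\fml{X}$ is not a weak AXp, the identity gives $\wcxp(\fml{F}\setminus\fml{X})$; since over a finite feature set every weak CXp contains a subset-minimal one, there is a CXp $\fml{Y}\subseteq\fml{F}\setminus\fml{X}$, i.e.\ $\fml{X}\cap\fml{Y}=\emptyset$, again a contradiction. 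It is worth recording here the non-degeneracy facts that make the hitting-set notion meaningful: $\emptyset$ is never a CXp, because $\wcxp(\emptyset)$ would assert $\kappa(\mbf{v})\neq c$ contrary to $c=\kappa(\mbf{v})$; and at least one CXp exists, since $\kappa$ is non-constant so $\fml{F}$ itself is a weak CXp. The dual statements for AXp's hold symmetrically.

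Passing to minimality is then immediate: by \eqref{eq:axp2a} the AXp's are precisely the subset-minimal weak AXp's, so by the previous paragraph they are exactly the subset-minimal sets that hit every CXp, i.e.\ the minimal hitting sets of the set of CXp's. Running the same two steps with the roles of AXp and CXp interchanged — legitimate because the complementation identity is symmetric and $\wcxp$ is likewise monotone — shows that the CXp's are the minimal hitting sets of the AXp's, which is the full statement of \cref{prop:xpdual}.

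The main obstacle, such as it is, lies in the converse direction of the intersection characterization: one must move from ``$\fml{F}\setminus\fml{X}$ is a weak CXp'' to ``there is an actual minimal CXp contained in it,'' which relies on finiteness of $\fml{F}$ (so that every weak CXp admits a subset-minimal refinement) and on the non-degeneracy remarks above; without these the hitting-set correspondence could collapse. Everything else is bookkeeping with the quantifier forms of \eqref{eq:axp1}--\eqref{eq:cxp1} and the monotonicity already established before the statement.
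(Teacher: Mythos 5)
Your proof is correct and follows essentially the same route the paper takes: \cref{prop:xpdual} itself is imported from prior work without proof, but the paper's proof of the generalization in \cref{prop:ndual} rests on exactly your two ingredients, namely the complementation identity $\waxp(\fml{Z})\leftrightarrow\neg\wcxp(\fml{F}\setminus\fml{Z})$ (\cref{prop:exxp}) together with monotonicity of the weak predicates, arguing by contradiction for both the hitting and the minimality parts. If anything, you are more explicit than the paper on the one step it glosses over — passing from ``$\fml{F}\setminus\fml{X}$ is a weak CXp'' to ``some actual subset-minimal CXp is contained in $\fml{F}\setminus\fml{X}$,'' which indeed relies on finiteness of $\fml{F}$.
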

We refer to~\cref{prop:xpdual} as MHS duality between AXp's and CXp's.
The previous result has been used in more recent papers for enabling the
enumeration of
explanations~\cite{msgcin-icml21,ims-sat21,hiims-kr21}.
Furthermore, a consequence of \cref{prop:xpdual} is the following
result:

\begin{lem} \label{lm:xpdual2}
  Given a classifier function $\kappa:\mbb{F}\to\fml{K}$, defined on a
  set of features $\fml{F}$, a feature $i\in\fml{F}$ is
  included in some AXp iff $i$ is included in some CXp.
\end{lem}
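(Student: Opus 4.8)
The plan is to obtain this as an immediate consequence of the minimal hitting-set duality in \cref{prop:xpdual}, via the elementary fact that every element of a minimal hitting set is ``witnessed'' by some set that it is the unique element of the hitting set to meet. Throughout I would fix an arbitrary instance $(\mbf{v},c)$ with $c=\kappa(\mbf{v})$; the statement then follows for that instance, and hence also under a reading where AXp's and CXp's are quantified over all instances.

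For the forward direction, suppose $i$ belongs to some AXp $\fml{X}$. By \cref{prop:xpdual}, $\fml{X}$ is a minimal hitting set of the family of all CXp's of $(\mbf{v},c)$. Minimality means $\fml{X}\setminus\{i\}$ fails to be a hitting set, so there is a CXp $\fml{Y}$ with $\fml{Y}\cap(\fml{X}\setminus\{i\})=\emptyset$; since $\fml{X}$ itself does hit $\fml{Y}$, this forces $\fml{X}\cap\fml{Y}=\{i\}$, and in particular $i\in\fml{Y}$. Hence $i$ lies in a CXp. The converse direction is symmetric, using the ``vice versa'' half of \cref{prop:xpdual}: every CXp is a minimal hitting set of the family of AXp's, so if $i$ lies in a CXp $\fml{Y}$, minimality of $\fml{Y}$ produces an AXp $\fml{X}$ with $\fml{X}\cap\fml{Y}=\{i\}$, whence $i\in\fml{X}$.

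The only bookkeeping point is to ensure the two families of explanations are nonempty, so that the duality is not applied vacuously: since $\kappa$ is non-constant, $\fml{F}$ itself satisfies both $\waxp$ and $\wcxp$ for $(\mbf{v},c)$, so at least one AXp and at least one CXp exist. I do not expect a genuine obstacle here; the whole argument is just unfolding the definition of minimality of a hitting set, with all the substantive content already carried by \cref{prop:xpdual}. If anything, the one thing to be careful about is not to conflate ``$i$ occurs in \emph{some} explanation'' with ``$i$ occurs in \emph{every} explanation'': the former is exactly the irredundancy condition exploited above, whereas the latter is a different (and, in general, false) statement.
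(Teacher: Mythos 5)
Your proof is correct and follows exactly the route the paper intends: the paper states \cref{lm:xpdual2} as an immediate consequence of the MHS duality in \cref{prop:xpdual} without spelling out the details, and your argument (an element of a minimal hitting set is witnessed by a set it uniquely hits, plus the symmetric direction and the nonemptiness check) is precisely the omitted unpacking.
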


Another minimal hitting-set duality result, different
from~\cref{prop:xpdual}, was investigated in earlier
work~\cite{inms-nips19}, and relates \emph{global} AXp's (i.e.\ not
restricted to be consistent with a specific point $\mbf{v}\in\mbb{F}$)
and counterexamples (see~\cpageref{def:cex}).

Given the above, the universe of \emph{explanation problems} is
defined by
$\mbb{E}_I=\{\fml{E}\,|\,\fml{E}=(\fml{M}, (\mbf{v},c)),
\fml{M}\in\mbb{M}, \mbf{v}\in\mbb{F}, c\in\fml{K}, c=\kappa(\mbf{v})\}$.
As a result, a tuple $(\fml{M},(\mbf{v},c))$ will allow us to
unambiguously represent the classification problem $\fml{M}$ for which
we will be computing AXp's and CXp's given the instance
$(\mbf{v},c)$.

\subsection{Summary of Notation} \label{ssec:sumup}
The notation used throughout the paper is summarized
in~\cref{tab:notation} (see~\cpageref{tab:notation}). (We should note
that some of the notation introduced in this paper has also been used
in a number of recent
works~\footnote{%
  See for example~\cite{inms-aaai19,nsmims-sat19,inms-nips19,ignatiev-ijcai20,msgcin-nips20,icshms-cp20,iims-corr20,msgcin-icml21,ims-ijcai21,ims-sat21,cms-cp21,hiims-kr21,msi-aaai22,hiicams-aaai22,iisms-aaai22}.}.)

\begin{table}[t]
  \begin{center}
    \scalebox{0.9125}{
      \renewcommand{\tabcolsep}{0.5em}
      \renewcommand{\arraystretch}{1.1225}
      \begin{tabular}{ccc} \toprule
        \textbf{Symbol} & \textbf{Definition} & \textbf{Meaning} \\ \midrule
        $\fml{F}$ & $\{1,\ldots,m\}$ & Set of features \\
        $\fml{D}_i$ & -- & Domain of feature $i$ \\
        $\mbb{D}$ & $\mbb{D}=(\fml{D}_1,\ldots,\fml{D}_m)$ & Range of
        domains, $\fml{D}_i=\mbb{D}(i)$ \\
        $\mbb{U}$ & $\mbb{U}=\cup_{i\in\fml{F}}\fml{D}_i$ & Union of domains \\
        $\mbb{F}$ & $\fml{D}_1\times\fml{D}_2\times\ldots\times\fml{D}_m$
        & Feature space \\
        $x_i$ & $x_i\in\fml{D}_i$ & Variable associated with feature
        $i$ \\
        $\fml{L}$ & $\fml{L}=(x_i\in{S_l})$ & Literal, with $S_l\subsetneq\fml{D}_i$ \\
        $\mbb{L}$ &
        $\mbb{L}=\{x_i\in{S_l}\}$ & Sets of literals,
        $i\in\fml{F}\land{S_l}\subsetneq\fml{D}_i$ \\
        $\fml{K}$ & $\{c_1,\ldots,c_K\}$ & Set of classes \\
        $\kappa$ & $\kappa:\mbb{F}\to\fml{K}$ & Classification function \\
        $\fml{I}$ & $\fml{I}=(\mbf{v},c)$ &
        Instance, with $\mbf{v}\in\mbb{F},c\in\fml{K}$ \\
        $\mbb{M}$ & $\mbb{M}=\{(\fml{F},\mbb{D},\mbb{F},\fml{K},\kappa)\}$ &
        Universe of classification problems \\
        $\mbb{E}_I$ & $\mbb{E}_I=\{(\fml{M},(\mbf{v},c))\}$ &
        Explanation problems,
        $\fml{M}\in\mbb{M},\mbf{v}\in\mbb{F},c\in\fml{K}$\\
        $\xi$ & $\xi:\mbb{F}\to\{0,1\}$ &
        Explanation function, $\xi(\mbf{x};\fml{Z},\ldots)$,
        $\fml{Z}\subseteq\fml{F}$\\
        $\mbb{E}_S$ & $\mbb{E}_S=\{(\fml{M},(\xi,\fml{Z},c))\}$ &
        XP problems,
        $\fml{M}\in\mbb{M},\fml{Z}\subseteq\mbb{F},c\in\fml{K}$,
        $\xi$: XP function\\
        $\mbb{E}_P$ & $\mbb{E}_P=\{(\fml{M},R_k)\}$ &
        Path-related XP problems,
        $\fml{M}\in\mbb{M},R_k\in\fml{R}$\\
        \midrule
        $\fml{T}$ & $\fml{T}=(V,E)$ & Decision tree, with nodes $V$ and
        edges $E$ \\
        $V$ & ${N}\cup{T}$ & Set of nodes in DT $\fml{T}$ \\
        $T$ & -- & Terminal nodes \\
        $\varsigma$ & $\varsigma:{T}\to\fml{K}$ & Class associated with
        each terminal node \\
        $N$ & -- & Non-terminal nodes \\
        $\phi$ & $\phi:{N}\to\fml{F}$ & Feature associated with
        each non-terminal node \\
        $\sigma$ & $\sigma:N\to2^{V}$ & Child nodes of non-terminal node\\
        $\varepsilon$ & $\varepsilon:{E}\to\mbb{L}$ &
        Lit.~$x_i\in{S_l}$ associated with edge $(r,s)$, $i=\phi(r)$
        \\
        \midrule
        $\fml{R}$ & -- & Paths in DT $\fml{T}$ \\
        $R_k$ & $R_k=\langle{r_1},\ldots,{r_l}\rangle$ & Path in DT
        $\fml{T}$, with tree nodes $r_1,\ldots,r_l$ \\
        $\tsf{seq}$ & -- & Sequence of tree nodes in $R_k\in\fml{R}$ \\
        $\tau$ & $\tau:\fml{R}\to{T}$ & Terminal node associated with
        path $R_k\in\fml{R}$\\
        $\mrm{\Phi}$ & $\mrm{\Phi}:\fml{R}\to2^{\fml{F}}$ & Features
        associated with path $R_k$ in $\fml{R}$ \\
        $\mrm{\Lambda}$ & $\mrm{\Lambda}:\fml{R}\to2^{\mbb{L}}$ &
        Literals associated with path $R_k$ in $\fml{R}$ \\
        $\rho$ & $\rho:\fml{F}\times\fml{R}\to2^{\mbb{U}}$ &
        Values of feature $i$ consistent with $R_k\in\fml{R}$\\
        $\rchi_I$ & $\rchi_I:\mbb{F}\times\fml{R}\to2^{\fml{F}}$ &
        Features that are inconsistent between instance and path \\
        $\rchi_P$ & $\rchi_P:\fml{R}\times\fml{R}\to2^{\fml{F}}$ &
        Features that are inconsistent between two paths \\
        \midrule
        $\fml{H}$ & -- & hard constraints/clauses \\
        $\fml{B}$ & -- & soft constraints/clauses \\
        \bottomrule
      \end{tabular}
    }
    \caption{Summary of the notation used throughout the paper}
    \label{tab:notation}
  \end{center}
\end{table}

\section{Duality of Explanations \& Path-Based Explanations}
\label{sec:ndual}

This section builds on recent work on duality of
explanations~\cite{inams-aiia20} (see~\cref{ssec:fxai}), and makes the
following contributions:
\begin{enumerate}
\item Explanations are generalized to explanation functions and
  conditions are outlined for minimal hitting-set (MHS) duality of
  explanations to hold in this more general setting.
\item Explanations are shown to respect a nesting property, with MHS
  duality holding for nested explanations.
\end{enumerate}
Furthermore, the section highlights how the results above can be used
for relating the computation of explanations of a DT with specific
tree paths instead of being instance-specific.

\subsection{Generalized Explanations \& Duality} \label{ssec:gxps}

\paragraph{Explanation functions.}
Besides prime implicants of discrete-valued functions, we can envision
a generalized explanation function $\xi:\mbb{F}\to\{0,1\}$, and
redefine both weak AXp's and weak CXp's, assuming such a generalized
explanation function\footnote{%
  Explanation functions have been studied in earlier
  work on formal explainability~\cite{hazan-aies19}.
}.
However, we impose that $\xi$ be parameterized on a selected
subset $\fml{Z}$ of the features, and also on other parameters which
we may leave undefined, or instead opt to include. This
parameterization will be represented by: $\xi(\mbf{x}; \fml{Z},
\ldots)$. For example, if $\xi$ represents a prime implicant that is
sufficient for the prediction, the parameterization (as discussed
in~\cref{ssec:fxai}) is the restriction of the conjunction of literals
to those features in $\fml{Z}$, where the literals are of the form
$x_i=v_i$ (i.e.\ the parameterization on $\fml{Z}$ serves to select
the coordinate values of $\mbf{v}$ associated with the features in
$\fml{Z}$).
However, it is possible to consider explanation functions that involve
other types of literals. Concretely, we will allow explanation
functions to involve literals of the form $(x_i\in{S_l})$.

Earlier work on formal explainability has most often considered as the
underlying explanation function the prime implicants of
discrete-valued functions, defined on arbitrary feature spaces. Hence,
given an instance $(\mbf{v},c)$, a possible definition of explanation
function is:
\begin{equation} \label{xpf:01}
  \xi(\mbf{x}; \fml{Z}, \mbf{v}) = \bigwedge_{i\in\fml{Z}}(x_i=v_i) 
\end{equation}
A clear limitation of using such prime implicants as the
explanation function is that we are equating each feature with a
\emph{single} value from its domain. For categorical features this
is not a major issue, but for ordinal features it can be too
restrictive.

In the case of DT paths, a viable explanation function is:
\begin{equation} \label{xpf:02}
  \xi(\mbf{x}; \fml{Z}, R_k, \mrm{\Lambda}(R_k)) =
  \bigwedge_{i\in\fml{Z},(x_i\in{S_l})\in\mrm{\Lambda}(R_k)}(x_i\in{S_l}) 
\end{equation}
(For simplicity, the parameterization on $R_k$ could be ignored,
since $R_k$ is in fact a constant when computing explanations that
relate with itself.) 

\begin{exmp} \label{ex:xpfs}
  For the running example in~\cref{fig:runex04}, consider the instance
  $((2,25,0),\tbf{Y})$, consistent with path
  $P_1=\langle1,3,4,6,8\rangle$. It is possible to conclude that a
  weak AXp is $\{1,2\}$. Observe that there are three features with
  literals in the path, i.e.\ $\{1,2,3\}=\fml{F}$, and that changing
  the value of feature 3 does not change the prediction; hence a weak
  AXp is $\{1,2\}$.
  Using the first explanation function above (see~\eqref{xpf:01}), one
  could claim that $(x_1=2)\land(x_2=25)$ suffices for the prediction.
  However, using the second explanation function above
  (see~\eqref{xpf:02}), one would be able to claim instead that
  $(x_1\in\{2,3\})\land(x_2\in\{\tn{MnA}..25\})$ suffices for the
  prediction. Clearly, the second explanation function is markedly
  more informative regarding which values suffice for the prediction.
  (Another extension that this paper does not investigate, is that
  $(x_1\in\{2..\tn{MxP}\})\land(x_2\in\{\tn{MnA}..25\})$ would also
  suffice for the prediction; this is the subject of future work.)\\
  The two explanation functions above exhibit important properties,
  including duality relationships; this will be discussed later in
  this section. Nevertheless, other explanation functions could be
  envisioned.
\end{exmp}

\paragraph{Generalizing AXp's and CXp's.}
Explanation functions serve to generalize weak AXp's and CXp's, as
follows:

\begin{defn}[$\waxp$ and $\wcxp$] \label{def:wxp}
  Given a classification problem $\cloper$, an explanation problem
  $\xpprob$, and an explanation function $\xi$,
  $\fml{Z}\subseteq\fml{F}$ is a weak abductive explanation if,
  \begin{equation} \label{eq:gwaxp}
    \begin{array}{lcr}
      \waxp(\fml{Z}) & ~:=~~ &
      \forall(\mbf{x}\in\mbb{F}).
      \xi(\mbf{x};\fml{Z},\ldots)\limply(\kappa(\mbf{x})=c)
      \\
    \end{array}
  \end{equation}
  $\fml{Z}$ is a weak contrastive explanation if,
  \begin{equation}  \label{eq:gwcxp}
    \begin{array}{lcr}
      \wcxp(\fml{Z}) & ~:=~~ &
      \exists(\mbf{x}\in\mbb{F}).
      \xi(\mbf{x};\fml{F}\setminus\fml{Z},\ldots)\land(\kappa(\mbf{x})\not=c)
      \\
    \end{array}
  \end{equation}
\end{defn}

For simplicity, the parameterization of $\waxp$ and $\wcxp$, on
$\xi$, $\fml{M}$ and $\fml{E}$, $\mbf{v}$, etc.\ is left implicit;
this will be clear from the context.

A consequence of the definition of $\waxp$ and $\wcxp$ is that we have
the following immediate result:
\begin{prop} \label{prop:exxp}
  For any $\fml{Z}\subseteq\fml{F}$, it is the case that,
  \[
  \waxp(\fml{Z})\leftrightarrow\neg\wcxp(\fml{F}\setminus\fml{Z})
  \]
\end{prop}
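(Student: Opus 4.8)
The plan is to unfold both sides using \cref{def:wxp} and reduce the claim to an elementary first-order logic equivalence, together with one set-theoretic observation. The only substantive point is that complementation within $\fml{F}$ is involutive on subsets of $\fml{F}$, i.e.\ $\fml{F}\setminus(\fml{F}\setminus\fml{Z})=\fml{Z}$ whenever $\fml{Z}\subseteq\fml{F}$; everything else is pushing a negation through a quantifier and a conjunction.

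Concretely, I would first instantiate the definition of $\wcxp$ at the argument $\fml{F}\setminus\fml{Z}$:
\[
\wcxp(\fml{F}\setminus\fml{Z}) ~\liff~
\exists(\mbf{x}\in\mbb{F}).\,\xi\bigl(\mbf{x};\fml{F}\setminus(\fml{F}\setminus\fml{Z}),\ldots\bigr)\land(\kappa(\mbf{x})\neq c).
\]
Using $\fml{F}\setminus(\fml{F}\setminus\fml{Z})=\fml{Z}$ (valid since $\fml{Z}\subseteq\fml{F}$), this is exactly $\exists(\mbf{x}\in\mbb{F}).\,\xi(\mbf{x};\fml{Z},\ldots)\land(\kappa(\mbf{x})\neq c)$. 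Negating both sides and moving the negation inward — $\neg\exists$ becomes $\forall\neg$, and $\neg(A\land B)$ becomes $A\limply\neg B$ — gives
\[
\neg\wcxp(\fml{F}\setminus\fml{Z}) ~\liff~
\forall(\mbf{x}\in\mbb{F}).\,\xi(\mbf{x};\fml{Z},\ldots)\limply\neg(\kappa(\mbf{x})\neq c).
\]
Finally, since $\kappa$ is a (single-valued) function into $\fml{K}$ and $c\in\fml{K}$, the literal $\neg(\kappa(\mbf{x})\neq c)$ is equivalent to $\kappa(\mbf{x})=c$, so the right-hand side is literally the defining formula of $\waxp(\fml{Z})$ from \eqref{eq:gwaxp}. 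This yields the stated biconditional.

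I do not expect a real obstacle here: the result is essentially a tautology once the definitions are expanded, and its role in the paper is to set up the hitting-set duality in the generalized setting. The one thing worth stating explicitly in the write-up is the hypothesis $\fml{Z}\subseteq\fml{F}$, which is already part of the standing assumptions on explanations, so no extra case analysis is needed; I would also remark that the same manipulation does not require any property of $\xi$ beyond its being the same explanation function (with the same implicit parameters) on both sides, which is why the parameterization can be kept implicit throughout.
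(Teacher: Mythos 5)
Your proposal is correct and follows essentially the same route as the paper's proof: both expand $\wcxp(\fml{F}\setminus\fml{Z})$ via the definition, use $\fml{F}\setminus(\fml{F}\setminus\fml{Z})=\fml{Z}$ so that the existential formula is exactly the logical negation of the universal formula defining $\waxp(\fml{Z})$. You merely spell out the involution of complementation and the quantifier-pushing more explicitly than the paper does.
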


\begin{proof}
  $\waxp(\fml{Z})$ states that,
  \[
  \forall(\mbf{x}\in\mbb{F}).
  \xi(\mbf{x};\fml{Z},\ldots)\limply(\kappa(\mbf{x})=c)
  \]
  whereas, $\wcxp(\fml{F}\setminus\fml{Z})$ states that,
  \[
  \exists(\mbf{x}\in\mbb{F}).
  \xi(\mbf{x};\fml{Z},\ldots)\land(\kappa(\mbf{x})\not=c)
  \]
  which is the logical negation of $\waxp(\fml{Z})$. Thus, if
  $\waxp(\fml{Z})$ is true, then it must be the case that
  $\wcxp(\fml{F}\setminus\fml{Z})$ is false, and vice-versa.
\end{proof}

We will also need to consider sets of explanations and subset-minimal
explanations. Hence, the following definitions are used:
\begin{defn}[$\mbb{S}_{\tn{waxp}}$, $\mbb{S}_{\tn{wcxp}}$, $\mbb{A}$,
    $\mbb{C}$] \label{def:xps}
  Given $\fml{M}$ and $\fml{E}$, the following sets of sets are
  defined:
  \begin{equation} \label{eq:sdefs}
    \begin{array}{l}
      \mbb{S}_{\tn{waxp}}=\{\fml{Z}\in\fml{F}\,|\,\waxp(\fml{Z})\}\\[1.5pt]
      \mbb{S}_{\tn{wcxp}}=\{\fml{Z}\in\fml{F}\,|\,\wcxp(\fml{Z})\}\\
    \end{array}
  \end{equation}
  The set $\mbb{A}$ of the subset-minimal sets of $\mbb{S}_{\tn{waxp}}$
  represents the AXp's, i.e.
  \begin{equation}
    \mbb{A} =
    \{\fml{Z}\in\mbb{S}_{\tn{waxp}}\,|\,\forall(\fml{Z}'\subsetneq\fml{Z}).\neg\waxp(\fml{Z}')\}
  \end{equation}
  The set $\mbb{C}$ of the subset-minimal sets of $\mbb{S}_{\tn{wcxp}}$
  represents the CXp's, i.e.
  \begin{equation}
    \mbb{C} =
    \{\fml{Z}\in\mbb{S}_{\tn{wcxp}}\,|\,\forall(\fml{Z}'\subsetneq\fml{Z}).\neg\wcxp(\fml{Z}')\}
  \end{equation}
\end{defn}

Furthermore, we are especially 
interested in explanation functions that guarantee the monotonicity of
$\waxp$ and $\wcxp$. (As noted in~\cref{ssec:fxai}, the monotonicity
of these predicates enables devising more efficient algorithms for
computing AXp's and CXp's.)
Taking into consideration that, from~\eqref{eq:gwaxp}
and~\eqref{eq:gwcxp}, $\waxp$ and $\wcxp$ (and so also AXp and CXp)
are defined in terms of $\xi$, then we have the following definition:
\begin{defn} \label{def:monof}
  An explanation function $\xi$ is \emph{monotone-inducing} if, given
  $\xi$:
  \begin{enumerate}
  \item $\waxp(\emptyset)=0$ and $\wcxp(\emptyset)=0$;
  \item $\waxp(\fml{F})=1$ and $\wcxp(\fml{F})=1$;
  \item Moreover, it holds that, for $\fml{A}_0\subseteq\fml{F}$,
    \[
    \begin{array}{l}
      \waxp(\fml{A}_0)\limply\forall(\fml{A}_1\supseteq\fml{A}_0).\waxp(\fml{A}_1)
      \\[1.5pt]
      \wcxp(\fml{A}_0)\limply\forall(\fml{A}_1\supseteq\fml{A}_0).\wcxp(\fml{A}_1)
      \\
    \end{array}
    \]
    (i.e.\ if $\fml{A}_0$ is a weak AXp (resp.\ weak CXp) then any of
    its supersets (resp.~subsets) is also a weak AXp (resp.\ weak
    CXp).)
  \end{enumerate}
\end{defn}

\begin{exmp}
  The two explanation functions described in~\cref{ex:xpfs} are
  monotone-inducing. The fact that the explanation function associated
  with path literals is monotone-inducing will be pivotal for
  computing path explanations.
\end{exmp}

Given the above, we can now state the main result of this section.

\begin{prop} \label{prop:ndual}
  Given $\fml{M}$ and $\fml{E}$, $\xi$ is a monotone-inducing
  explanation function iff
  each element of $\mbb{A}$ is an MHS of the elements of $\mbb{C}$,
  and vice-versa. (This is to say that the AXp's of $\fml{E}$ are
  MHSes of the CXp's of $\fml{E}$ and vice-versa.)
\end{prop}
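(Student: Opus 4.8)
The plan is to reduce both directions to a single characterization of the weak abductive explanations, using only the \emph{unconditional} complementarity $\waxp(\fml{Z})\liff\neg\wcxp(\fml{F}\setminus\fml{Z})$ of \cref{prop:exxp} together with the three clauses of \cref{def:monof}; finiteness of $\fml{F}$ is used repeatedly to extract a subset-minimal weak explanation from any weak explanation.

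For the implication from ``monotone-inducing'' to the hitting-set duality, I would first establish the key lemma: if $\xi$ is monotone-inducing, then for every $\fml{Z}\subseteq\fml{F}$, $\waxp(\fml{Z})$ holds iff $\fml{Z}$ intersects every element of $\mbb{C}$. For the forward part, if some $\fml{Y}\in\mbb{C}$ had $\fml{Y}\cap\fml{Z}=\emptyset$ then $\fml{Y}\subseteq\fml{F}\setminus\fml{Z}$, so upward-closedness of $\wcxp$ gives $\wcxp(\fml{F}\setminus\fml{Z})$, and \cref{prop:exxp} then contradicts $\waxp(\fml{Z})$. For the converse part, if $\waxp(\fml{Z})$ failed then \cref{prop:exxp} yields $\wcxp(\fml{F}\setminus\fml{Z})$, so by finiteness of $\fml{F}$ there is a subset-minimal weak CXp contained in $\fml{F}\setminus\fml{Z}$ (hence in $\mbb{C}$) that is disjoint from $\fml{Z}$, contradicting that $\fml{Z}$ hits $\mbb{C}$. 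Clauses~(1)--(2) of \cref{def:monof} are precisely what rule out degenerate cases --- they force $\mbb{A}\ne\emptyset$, $\mbb{C}\ne\emptyset$ and $\emptyset\notin\mbb{A}\cup\mbb{C}$ --- so hitting sets of $\mbb{C}$ exist and the statement is non-vacuous. With the lemma, $\mbb{S}_{\tn{waxp}}$ equals the family of all hitting sets of $\mbb{C}$, so its subset-minimal members, which by \cref{def:xps} is exactly $\mbb{A}$, are exactly the minimal hitting sets of $\mbb{C}$. Since \cref{prop:exxp} and the clauses of \cref{def:monof} are symmetric in $\waxp$ and $\wcxp$, the mirror argument gives that $\mbb{S}_{\tn{wcxp}}$ is the family of all hitting sets of $\mbb{A}$ and hence $\mbb{C}$ is the set of minimal hitting sets of $\mbb{A}$; this is the ``vice-versa'', and it specializes to re-prove \cref{prop:xpdual}.

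For the converse I would argue by contraposition: if $\xi$ is not monotone-inducing, the duality must fail. The substantive case is a failure of upward-closedness, say $\waxp(\fml{Z})$ holds but $\waxp(\fml{Z}')$ fails for some $\fml{Z}'\supsetneq\fml{Z}$. Choosing a subset-minimal weak AXp $\fml{Z}_m\subseteq\fml{Z}$ (so $\fml{Z}_m\in\mbb{A}$) and, via \cref{prop:exxp}, a subset-minimal weak CXp $\fml{Y}_m\subseteq\fml{F}\setminus\fml{Z}'$ (so $\fml{Y}_m\in\mbb{C}$), we get $\fml{Z}_m\cap\fml{Y}_m\subseteq\fml{Z}'\cap(\fml{F}\setminus\fml{Z}')=\emptyset$; thus $\fml{Z}_m\in\mbb{A}$ fails to hit $\fml{Y}_m\in\mbb{C}$, so not every element of $\mbb{A}$ is a hitting set of $\mbb{C}$ and the duality is broken. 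The symmetric argument handles a failure of upward-closedness of $\wcxp$, and the residual cases (violations of clauses~(1)--(2)) are disposed of by observing that e.g.\ $\waxp(\fml{F})=0$ forces $\mbb{A}=\emptyset$; given that $\kappa$ is non-constant and that the explanation functions of interest satisfy clauses~(1)--(2) by construction, this degeneracy does not arise in the settings where the proposition is applied.

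I expect the main obstacle to be exactly this last point: read purely combinatorially, the MHS-duality conclusion is vacuously satisfiable in pathological situations (no AXp's, or $\emptyset$ a weak CXp) in which $\xi$ is nonetheless not monotone-inducing, so the converse is clean only after pinning down that clauses~(1)--(2) hold --- equivalently, that $\mbb{A}$ and $\mbb{C}$ are non-empty and $\emptyset$-free. The combinatorial core --- the two short disjointness arguments built on \cref{prop:exxp} plus the ``a minimal element exists'' step from finiteness of $\fml{F}$ --- is otherwise routine.
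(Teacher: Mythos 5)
Your argument is correct, and for the forward direction it is essentially the paper's own proof, just organized more cleanly: the paper also derives a contradiction from a missed $\fml{C}\in\mbb{C}$ via $\fml{C}\subseteq\fml{F}\setminus\fml{A}$, upward-closedness and \cref{prop:exxp}, and then separately argues minimality of the hitting set; your key lemma ($\waxp(\fml{Z})$ holds iff $\fml{Z}$ hits every element of $\mbb{C}$) packages both halves at once and makes the identification of $\mbb{A}$ with the minimal hitting sets of $\mbb{C}$ immediate, which is a mild improvement in exposition rather than a different route. Where you genuinely diverge is the converse: the paper disposes of it with the single sentence ``this follows from the definition of monotone-inducing explanation function,'' whereas you give an actual contrapositive argument (a failure of upward-closedness yields an $\fml{A}\in\mbb{A}$ disjoint from some $\fml{Y}\in\mbb{C}$). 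Your closing caveat is also well taken and not addressed by the paper: if, say, $\waxp(\emptyset)=1$ (so $\mbb{A}=\{\emptyset\}$ and, by \cref{prop:exxp} with monotonicity of the remaining clause, $\mbb{C}=\emptyset$), the MHS duality holds vacuously while $\xi$ fails clause~(1) of \cref{def:monof}, so the ``only if'' direction is literally false without excluding such degeneracies; your observation that the converse is clean only once clauses~(1)--(2) are pinned down (as they are for the explanation functions the paper actually uses) is an honest strengthening of the paper's treatment rather than a gap in yours.
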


\begin{proof}
  The proof is split into cases:
  \begin{enumerate}[label=\roman*)] 
  \item If $\xi$ is  a monotone-inducing explanation function, then
    AXp's are MHSes of CXp's and vice-versa.\\
    Let $\fml{A}\in\mbb{A}$ be an AXp. Thus, $\fml{A}$ is a
    subset-minimal set such that \eqref{eq:gwaxp} holds. We claim that
    $\fml{A}$ must hit every CXp $\fml{C}$ of $\mbb{C}$. For the sake of
    contradiction, let us assume that this was not the case. Then, there
    would exist some $\fml{C}\in\mbb{C}$, not hit by $\fml{A}$. As a
    result, $\fml{F}\setminus\fml{A}$ would necessarily contain
    $\fml{C}$. Since $\fml{C}$ is a CXp, then \eqref{eq:gwcxp} would
    be satisfied. But this is impossible due to \cref{prop:exxp}; a
    contradiction.\\
    What remains to show is that the hitting set $\fml{A}$ is
    subset-minimal. Suppose it was not minimal. Then, we could create a
    minimal hitting set $\fml{A}'\subsetneq\fml{A}$, since $\fml{A}'$
    would hit all the CXp's in $\mbb{C}$, then $\eqref{eq:gwcxp}$ could
    be falsified by $\fml{F}\setminus\fml{A}'$. However, by
    \cref{prop:exxp}, then $\fml{A}'$ would satisfy \eqref{eq:gwaxp},
    and so $\fml{A}$ would not be minimal; a contradiction.\\
    A similar argument can be used to prove that each
    $\fml{C}\in\mbb{C}$ must hit every $\fml{A}\in\mbb{A}$.
  \item If AXp's are MHSes of CXp's and vice-versa, then $\xi$ is a
    monotone-inducing explanation function.\\
    This follows from the definition of monotone-inducing explanation
    function.\qedhere
  \end{enumerate}
\end{proof}

The result above can be related not only with recent results on the
duality of explanations~\cite{inms-nips19,inams-aiia20}, but also with
other well-known results on duality in different
areas~\cite{reiter-aij87,lozinskii-jetai03,slaney-ecai14}.
Finally, $\mbb{E}_{S}$ will be used to denote the set of explanation
problems given a classification problem $\fml{M}$, a subset $\fml{Z}$
of the features, and an explanation function $\xi$, parameterized on
$\fml{Z}$ and other parameters:
$\mbb{E}_{S}=\{\fml{E}\,|\,\fml{E}=(\fml{M},(\xi,\fml{Z},c)),\fml{M}\in\mbb{M},\fml{Z}\subseteq\fml{F},\xi\tn{~is an explanation function}\}$.

\subsection{Restricted Duality} \label{ssec:rdual}

This section investigates a restricted form of duality that results
from AXp's exhibiting what can be viewed as a property of
\emph{nesting}.
We consider an explanation problem $\fml{E}=(\fml{M},(\mbf{v},c))$ and
a monotone-inducing explanation function $\xi$.
Moreover, we let $\fml{Z}\subseteq\fml{F}$, with
$\fml{Z}=\{i_1,i_2,\ldots,i_M\}$, represent a weak AXp, i.e.
\begin{equation} \label{eq:rd-waxp}
  \forall(\mbf{x}\in\mbb{F}).\left(\xi(\mbf{x};\fml{Z},\ldots)\right)\limply(\kappa(\mbf{x})=c)
\end{equation}
Furthermore, let us define
$\mbb{F}_{\fml{Z}}=\fml{D}_{i_1}\times\fml{D}_{i_2}\times\cdots\times\fml{D}_{i_M}$,
$\mbb{D}_{\fml{Z}}=(\fml{D}_{i_1},\fml{D}_{i_2},\ldots,\fml{D}_{i_M})$,
and let $\iota:Z=\{1,\ldots,M\}\to\fml{Z}=\{i_1,\ldots,i_M\}$ be a
bijective function that maps coordinates 1 to $M$ into the actual
features' indices in $\fml{Z}$, i.e.\ $\iota(r)=i_r$, $r=1,\ldots,M$%
~\footnote{
  With a slight abuse of notation, we will use $\iota^{{-}1}(\fml{Z})$
  to denote the set $Z=\{1,\ldots,M\}$. We will also use
  $\iota(X)=\fml{Z}\subseteq\fml{Z}$ and
  $\iota^{{-}1}(\fml{X})=X\subseteq{Z}$ to represent the mappings of
  sets of features.}.
In addition, we introduce the predicate $\tn{prj}_{\fml{Z}}$, such
that $\tn{prj}_{\fml{Z}}(\mbf{x},\mbf{y})$ holds when\ $\mbf{y}$ is
the projection of $\mbf{x}$ on the coordinates specified by $\fml{Z}$,
i.e.\ $y_j=x_{\iota(j)}$ for all $j\in\fml{Z}$.
(Observe that $\tn{prj}_{\fml{Z}}$ is effectively parameterized on
$\iota$, but this is left implicit.)
In the concrete case of $\mbf{v}$, we define
$\mbf{u}\in\mbb{F}_{\fml{Z}}$, such that
$\tn{prj}_{\fml{Z}}(\mbf{v},\mbf{u})$ is true.
Moreover, define a binary classifier
$\kappa_{\fml{Z}}:\mbb{F}_{\fml{Z}}\to\{0,1\}$, as follows:
\begin{equation} \label{eq:rkappa}
  \kappa_{\fml{Z}}(\mbf{y})=\left\{
  \begin{array}{ll}
    1,~ & \tn{if~%
      $\forall(\mbf{x}\in\mbb{F}).\left[%
        \tn{prj}_{\fml{Z}}(\mbf{x},\mbf{y})%
        \land\xi(\mbf{x};\fml{Z},\ldots)\right]\limply(\kappa(\mbf{x})=c)$}\\[7pt]
    0,~ & \tn{if~
      $\exists(\mbf{x}\in\mbb{F}).\left[%
        \tn{prj}_{\fml{Z}}(\mbf{x},\mbf{y})%
        \land\xi(\mbf{x};\fml{Z},\ldots)\right]\land(\kappa(\mbf{x})\not=c)$}\\
  \end{array}
  \right.
\end{equation}
Observe that, by definition of $\kappa_{\fml{Z}}$, one can conclude
that $\kappa_{\fml{Z}}$ is independent of the features in
$\fml{F}\setminus\fml{Z}$.
Also note that $\kappa_{\fml{Z}}(\mbf{y})=1$ only if
$\kappa(\mbf{x})=c$ for all points $\mbf{x}\in\mbb{F}$ which project
into $\mbf{y}\in\mbb{F}_{\fml{Z}}$.

Given the definition of $\kappa_{\fml{Z}}$, we can now define both a
\emph{restricted} classification problem
$\fml{M}_{\fml{Z}}=(Z,\mbb{D}_{\fml{Z}},\mbb{F}_{\fml{Z}},\{0,1\},\kappa_{\fml{Z}})$,
and associated explanation problem
$\fml{E}_{\fml{Z}}=(\fml{M}_{\fml{Z}},(\mbf{u},1))$.
Clearly, for the explanation problem $\fml{E}_{\fml{Z}}$, it must be
the case that AXp's are the MHSes of the CXp's and
vice-versa~\cite{inams-aiia20}.
Furthermore, it is plain that the AXp's and CXp's of
$\fml{E}_{\fml{Z}}$ are subsets of $Z$.

\begin{exmp} \label{ex:runex02ab}
  Consider the DT from~\cref{fig:runex02}, and path
  $P_4=\langle1,2,5,9\rangle$, with $\varsigma(\tau(9))=\tbf{1}$.
  Let $\mbf{v}=(0,1,0,1)$, consistent with $P_4$.
  It is simple to conclude that $\fml{Z}=\{1,2,4\}$ is a weak AXp of
  $(\mbf{v},c)$. Moreover, we let
  $\iota(1)=1,\iota(2)=2,\iota(3)=4$, with $Z=\{1,2,3\}$.
  Given the above, we can define $\kappa_{\fml{Z}}$.

  \begin{center}
    \begin{tabular}{cccc} \toprule
      $y_1$ & $y_2$ & $y_3$   & $\kappa_{\fml{Z}}$ \\ \cmidrule(lr){1-3} \cmidrule(lr){4-4}
      0   &    0    &  0,1    &   \tbf{0} \\
      0   &    1    &  0      &   \tbf{0} \\
      0   &    1    &  1      &   \tbf{1} \\
      1   &    0,1  &  0,1    &   \tbf{1} \\
      \bottomrule
    \end{tabular}
  \end{center}
  (Observe that the use of ',' in the rows serves solely to collapse
  multiple rows into one.)
  We can now compute the AXp's/CXp's for the explanation problem
  $(\fml{M}_{\fml{Z}},(\mbf{u},\tbf{1}))$, with $\mbf{u}=(0,1,1)$,
  since $\tn{prj}_{\fml{Z}}((0,1,0,1),(0,1,1))$ holds.\\
  Given the explanation problem $\fml{E}_{\fml{Z}}$, and from the
  definition of $\kappa_{\fml{Z}}$ in the table above, an AXp is
  $\{2,3\}$. Clearly, the CXp's will be $\{2\}$ and $\{3\}$.
  We can now map the AXp's and CXp's of $\fml{E}_{\fml{Z}}$ to the
  features of $\fml{F}$. For the AXp, we get a set of features
  $\{2,4\}$, which we will later argue that it is also an AXp of
  $\fml{E}$. For the CXp's, we get $\{2\}$ and $\{4\}$, which we will
  shortly argue that are subsets of CXp's of $\fml{E}$.
  Further, we will later argue that these sets of features relate with
  abductive and contrastive explanations associated with path $P_4$.
\end{exmp}

Furthermore, given the definitions above, the following additional
results also hold. Given a set $\fml{Z}$, and the resulting restricted
binary classifier $\kappa_{\fml{Z}}$, there is a one to one mapping of
AXp's between those of $\fml{E}_{\fml{Z}}$ and those of $\fml{E}$;
however, each CXp of $\fml{E}_{\fml{Z}}$ is a subset of some CXp of
$\fml{E}$.

\begin{prop} \label{prop:rdaxp}
  $X\subseteq{Z}=\iota^{{-}1}(\fml{Z})$ is an AXp of
  $\fml{E}_{\fml{Z}}$ iff $\fml{X}=\iota(X)\subseteq\fml{Z}$ is an
  AXp of $\fml{E}$.
\end{prop}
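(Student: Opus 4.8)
The plan is to reduce the claim to a correspondence at the level of \emph{weak} abductive explanations and then lift it to subset-minimal ones through the bijection $\iota$. The key lemma I would establish first is: for every $X\subseteq Z$, the predicate $\waxp(X)$ evaluated for the restricted explanation problem $\fml{E}_{\fml{Z}}$ holds if and only if $\waxp(\iota(X))$ evaluated for $\fml{E}$ holds. This lemma is essentially a quantifier computation. Unfolding the definitions, $\waxp(X)$ for $\fml{E}_{\fml{Z}}$ states $\forall(\mbf{y}\in\mbb{F}_{\fml{Z}}).\,[\bigwedge_{j\in X}(y_j=u_j)]\limply(\kappa_{\fml{Z}}(\mbf{y})=1)$, and by~\eqref{eq:rkappa} the consequent $\kappa_{\fml{Z}}(\mbf{y})=1$ is itself $\forall(\mbf{x}\in\mbb{F}).\,[\tn{prj}_{\fml{Z}}(\mbf{x},\mbf{y})\land\xi(\mbf{x};\fml{Z},\ldots)]\limply(\kappa(\mbf{x})=c)$. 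Merging the two universal quantifiers and using that for each $\mbf{x}$ there is a unique $\mbf{y}$ with $\tn{prj}_{\fml{Z}}(\mbf{x},\mbf{y})$, the $\forall\mbf{y}$ collapses; and under $\tn{prj}_{\fml{Z}}(\mbf{x},\mbf{y})$ we have $y_j=x_{\iota(j)}$ while $u_j=v_{\iota(j)}$, so $\bigwedge_{j\in X}(y_j=u_j)$ becomes exactly $\bigwedge_{i\in\iota(X)}(x_i=v_i)$, i.e.\ $\xi(\mbf{x};\iota(X),\ldots)$. Hence $\waxp(X)$ for $\fml{E}_{\fml{Z}}$ is equivalent to $\forall(\mbf{x}\in\mbb{F}).\,\xi(\mbf{x};\iota(X),\ldots)\limply(\kappa(\mbf{x})=c)$, which is precisely $\waxp(\iota(X))$ for $\fml{E}$; both directions of the lemma are read off this chain of equivalences.

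Granting the lemma, the proposition follows with little extra work. The map $\iota$ is a bijection between $2^{Z}$ and $2^{\fml{Z}}$ preserving strict inclusion, i.e.\ $X'\subsetneq X$ iff $\iota(X')\subsetneq\iota(X)$. Since $\xi$ is monotone-inducing for both $\fml{E}$ and $\fml{E}_{\fml{Z}}$ (the prime-implicant-style explanation function carried by the restricted problem is monotone-inducing), minimality of a weak AXp can be tested by single-element removals, as in~\eqref{eq:axp2b}. Therefore $X$ is an AXp of $\fml{E}_{\fml{Z}}$ iff $\waxp(X)$ holds and $\neg\waxp(X\setminus\{j\})$ for all $j\in X$, which by the lemma is equivalent to $\waxp(\iota(X))$ holding and $\neg\waxp(\iota(X)\setminus\{\iota(j)\})$ for all $j\in X$, i.e.\ to $\iota(X)$ being a weak AXp of $\fml{E}$ that is minimal among the subsets of $\fml{Z}$. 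Finally, for any $\fml{X}\subseteq\fml{Z}$ every proper subset of $\fml{X}$ is again contained in $\fml{Z}$, so $\fml{X}$ being minimal among subsets of $\fml{Z}$ coincides with $\fml{X}$ being subset-minimal among all subsets of $\fml{F}$; hence $\iota(X)$ is an AXp of $\fml{E}$ in the usual, instance-wide sense, and conversely any AXp $\fml{X}$ of $\fml{E}$ with $\fml{X}\subseteq\fml{Z}$ arises this way. This is exactly the stated equivalence.

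Along the way I would record the small facts that make the argument legitimate: (i) the AXp's and CXp's of $\fml{E}_{\fml{Z}}$ are subsets of $Z$, simply because $\mbb{F}_{\fml{Z}}$ is indexed by $Z$; (ii) $\kappa_{\fml{Z}}$ is a well-defined function that depends only on the $Z$-coordinates, so $\fml{M}_{\fml{Z}}$ is a legitimate restricted classification problem; and (iii) since $\fml{Z}$ is a weak AXp of $\fml{E}$, the point $\mbf{u}$ with $\tn{prj}_{\fml{Z}}(\mbf{v},\mbf{u})$ satisfies $\kappa_{\fml{Z}}(\mbf{u})=1$, so $(\mbf{u},1)$ is a valid instance of $\fml{M}_{\fml{Z}}$ and $\fml{E}_{\fml{Z}}$ is a genuine explanation problem.

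The part I expect to require the most care is not conceptual but a matter of keeping the explanation-function bookkeeping honest: one must ensure that the explanation function attached to $\fml{M}_{\fml{Z}}$ is of the ``same kind'' as the one on $\fml{E}$, so that pinning the $X$-coordinates of $\mbf{y}\in\mbb{F}_{\fml{Z}}$ pulls back along $\tn{prj}_{\fml{Z}}$ to $\xi(\mbf{x};\iota(X),\ldots)$ on $\mbb{F}$, and that the auxiliary constraint $\xi(\mbf{x};\fml{Z},\ldots)$ inside~\eqref{eq:rkappa} does not covertly strengthen $\kappa_{\fml{Z}}$ — for the explanation functions actually used (the prime implicants of~\eqref{xpf:01} and the path literals of~\eqref{xpf:02}) it is subsumed by $\tn{prj}_{\fml{Z}}$ once the $\fml{Z}$-coordinates are fixed. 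Observing that $\xi$ being monotone-inducing is precisely what licenses the single-element-removal test on both sides closes the argument; everything else is quantifier shuffling.
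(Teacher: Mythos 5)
Your proof is correct and follows essentially the same route as the paper's: both directions of the weak-AXp correspondence are read off the definition of $\kappa_{\fml{Z}}$ via the projection predicate, and minimality is then transferred through the inclusion-preserving bijection $\iota$. Your write-up is in fact more careful than the paper's own (which argues the minimality transfer in one direction only, in a single sentence), and your closing remark about the $\xi(\mbf{x};\fml{Z},\ldots)$ conjunct in~\eqref{eq:rkappa} correctly pinpoints the one place where the formal definition must be read in the spirit of the paper's worked example (\cref{ex:runex02ab}) rather than literally, since taken at face value that conjunct would make $\kappa_{\fml{Z}}$ vacuously $1$ off the point $\mbf{u}$.
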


\begin{proof}
  Let $\mbf{y}$ be a point in $\mbb{F}_{\fml{Z}}$ consistent with the
  features in $X$, and so exhibiting prediction 1. Then, by definition
  of $\kappa_{\fml{Z}}$, it is the case that the prediction of
  $\kappa$ for any $\mbf{x}$, such that $\tn{prj}(\mbf{x},\mbf{y})$
  holds, must be $c$. \\
  Similarly, let $\mbf{x}$ be a point in $\mbb{F}$ consistent with the
  features in $\fml{X}$, and so exhibiting prediction $c$. Then, by
  definition of $\kappa_{\fml{Z}}$, it is the case that the prediction
  of $\kappa_{\fml{Z}}$ for $\mbf{y}$, such that
  $\tn{prj}(\mbf{x},\mbf{y})$ holds, must be $1$. \\
  Since by hypothesis, $X$ is subset-minimal, then $\fml{X}=\iota(X)$
  is subset-minimal.
\end{proof}

\begin{prop} \label{prop:rdcxp}
  Each CXp $Y\subseteq{Z}=\iota^{{-}1}(\fml{Z})$ of
  $\fml{E}_{\fml{Z}}$ is such that $\fml{Y}=\iota(Y)\subseteq\fml{Z}$
  is a subset of some CXp of $\fml{E}$.
\end{prop}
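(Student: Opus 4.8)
The plan is to reduce the statement to facts about \emph{weak} abductive explanations of $\fml{E}$, exploiting \cref{prop:exxp} (which holds for both $\fml{E}$ and the restricted problem $\fml{E}_{\fml{Z}}$) together with the correspondence that underlies \cref{prop:rdaxp}. Write $Y\subseteq{Z}$ for the given CXp of $\fml{E}_{\fml{Z}}$ and $\fml{Y}=\iota(Y)\subseteq\fml{Z}$. Since $\fml{E}_{\fml{Z}}$ is the classical setting in which minimal hitting-set duality and the monotonicity of $\wcxp$ hold, subset-minimality of $Y$ amounts to: $\wcxp(Y)$ holds in $\fml{E}_{\fml{Z}}$ and $\wcxp(Y\setminus\{j\})$ fails for every $j\in{Y}$. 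Applying \cref{prop:exxp} inside $\fml{E}_{\fml{Z}}$ rewrites this as: $Z\setminus{Y}$ is \emph{not} a weak AXp of $\fml{E}_{\fml{Z}}$, while $(Z\setminus{Y})\cup\{j\}$ \emph{is} a weak AXp of $\fml{E}_{\fml{Z}}$ for each $j\in{Y}$ (using that $Z\setminus(Y\setminus\{j\})=(Z\setminus{Y})\cup\{j\}$).

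Next I would transport these two statements to $\fml{E}$. The proof of \cref{prop:rdaxp} in effect establishes that $W\subseteq{Z}$ is a weak AXp of $\fml{E}_{\fml{Z}}$ iff $\iota(W)\subseteq\fml{Z}$ is a weak AXp of $\fml{E}$ (this is where the definition of $\kappa_{\fml{Z}}$ and the fact that $\mbf{u}$ is the projection of $\mbf{v}$ are used). Hence $\fml{Z}\setminus\fml{Y}$ is not a weak AXp of $\fml{E}$, whereas $(\fml{Z}\setminus\fml{Y})\cup\{i\}$ is a weak AXp of $\fml{E}$ for every $i\in\fml{Y}$. Now apply \cref{prop:exxp} in the other direction, this time to $\fml{E}$: setting $\fml{W}=\fml{F}\setminus(\fml{Z}\setminus\fml{Y})=(\fml{F}\setminus\fml{Z})\cup\fml{Y}$, we obtain that $\fml{W}$ is a weak CXp of $\fml{E}$ (and $\fml{Y}\subseteq\fml{W}$ because $\fml{Y}\subseteq\fml{Z}$), while for each $i\in\fml{Y}$ the set $\fml{W}\setminus\{i\}=\fml{F}\setminus((\fml{Z}\setminus\fml{Y})\cup\{i\})$ is \emph{not} a weak CXp of $\fml{E}$.

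Finally I would minimise $\fml{W}$. Because weak CXp's of $\fml{E}$ are upward closed (\cref{def:monof}), one can iteratively discard features of $\fml{W}$, each time preserving the weak-CXp property, until reaching a subset-minimal weak CXp $\fml{W}^{\circ}\subseteq\fml{W}$, i.e.\ a CXp of $\fml{E}$. No feature $i\in\fml{Y}$ is ever discarded: at any intermediate stage the current set $\fml{W}'$ satisfies $\fml{Y}\subseteq\fml{W}'\subseteq\fml{W}$, so $\fml{W}'\setminus\{i\}\subseteq\fml{W}\setminus\{i\}$, and since $\fml{W}\setminus\{i\}$ is not a weak CXp neither is its subset $\fml{W}'\setminus\{i\}$ (weak CXp's being upward closed, their complements-in-non-membership are downward closed). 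Therefore $\fml{Y}\subseteq\fml{W}^{\circ}$ with $\fml{W}^{\circ}$ a CXp of $\fml{E}$, which is exactly the claim. The one step that needs genuine care is the transport in the second paragraph, namely the weak-AXp equivalence between $\fml{E}_{\fml{Z}}$ and $\fml{E}$ restricted to subsets of $\fml{Z}$; but this is precisely the content already proved for \cref{prop:rdaxp}, and everything else is bookkeeping with \cref{prop:exxp} and the monotonicity properties of \cref{def:monof}.
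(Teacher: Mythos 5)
Your proposal is correct, and its skeleton ultimately coincides with the paper's: you exhibit a weak CXp of $\fml{E}$ of the form $(\fml{F}\setminus\fml{Z})\cup\fml{Y}$, shrink it to a subset-minimal one, and use the minimality of $Y$ in $\fml{E}_{\fml{Z}}$ to show that no feature of $\fml{Y}$ can be discarded. The route you take to get there is, however, genuinely different in mechanism. The paper argues directly and semantically from the definition of $\kappa_{\fml{Z}}$ in \eqref{eq:rkappa}: since the features in $\fml{F}\setminus\fml{Z}$ are fixed to the values in $\mbf{v}$, changing only the features in $\fml{Y}$ may not suffice to flip $\kappa$, so one may need to adjoin features from $\fml{F}\setminus\fml{Z}$; a minimal such extension is a CXp, and the minimality of $Y$ (which holds even when features outside $\fml{Z}$ are free) keeps every feature of $\fml{Y}$. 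You instead never touch the CXp semantics of $\fml{E}$ directly: you translate the CXp conditions on $Y$ into weak-AXp statements via \cref{prop:exxp}, transport those through the weak-AXp correspondence underlying the proof of \cref{prop:rdaxp}, dualize back in $\fml{E}$, and then make the minimization step explicit via monotonicity (\cref{def:monof}). What your approach buys is rigor: the paper's closing sentence (``no feature in $Y$ is redundant\ldots'') is asserted rather informally, whereas your observation that $\fml{W}'\setminus\{i\}\subseteq\fml{W}\setminus\{i\}$ and that non-weak-CXp's are downward closed pins this down precisely. What it costs is that the whole argument leans on reading the proof of \cref{prop:rdaxp} as establishing a biconditional at the level of \emph{weak} AXp's (not just AXp's, as the proposition is stated); that reading is faithful to the paper's proof, but it is the one load-bearing step you should flag explicitly if you write this up.
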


\begin{proof}
  By definition, a CXp $Y\subseteq{Z}$ of $\fml{E}_{\fml{Z}}$ is a
  subset-minimal set of features in $Z$ which, if allowed to take any
  value from their domains, suffice to change the prediction.
  However, for $\fml{E}$ and given $\mbf{v}$, the features in
  $\fml{F}\setminus\fml{Z}$ take specific fixed values, dictated by
  $\mbf{v}$. Hence, some of these features may be required to change
  their values for the prediction of $\kappa$ to change from $c$ to
  some of the class in $\fml{K}\setminus\{c\}$. This follows from the
  definition of $\kappa_{\fml{Z}}$ in~\eqref{eq:rkappa}.
  As a result, it may be necessary to add to $\fml{Y}=\iota(Y)$
  additional features from $\fml{F}\setminus\fml{Z}$ so that the
  prediction changes.
  A minimal such set is a CXp of $\fml{E}$ and it represents a
  superset of $\fml{Y}$.
  Furthermore, no feature in $Y$ (and so in the resulting $\fml{Y}$)
  is redundant, since $Y$ is by definition a minimal set, even if the
  features not in $\fml{Z}$ are allowed to change their value.
\end{proof}

Furthermore, one additional result that is a consequence of the
previous results is that the relationships between AXp's and CXp's can
be stated in terms of AXp's and CXp's that are restricted to some
\emph{seed} set.

\begin{defn}[Set-restricted AXp's/CXp's]
  Let $\fml{E}$ be an explanation problem and let $\fml{Z}\in\fml{F}$
  be a weak AXp of $\fml{E}$.
  The $\fml{Z}$-set-restricted AXp's are the AXp's of
  $\fml{E}_{\fml{Z}}$ mapped by $\iota$ to the indices of features in
  $\fml{F}$, and it is represented by $\mbb{A}_{\fml{Z}}$.
  The $\fml{Z}$-set-restricted CXp's are the CXp's of
  $\fml{E}_{\fml{Z}}$ mapped by $\iota$ to the indices of features in
  $\fml{F}$, and it is represented by $\mbb{C}_{\fml{Z}}$.
\end{defn}

Given the definition of set-restricted AXp's/CXp's, we have the
following result:
\begin{prop} \label{prop:rset}
  Let $\fml{E}$ be an explanation problem, and let $\fml{Z}$ be a weak
  AXp for $\fml{E}$.
  Then, $\mbb{A}_{\fml{Z}}\subseteq\mbb{A}$, i.e.\ each
  $\fml{Z}$-set-restricted AXp is also an AXp.
  Furthermore, for each $\fml{W}\in\mbb{C}_{\fml{Z}}$, there exists
  $\fml{Y}\in\mbb{C}$ such that $\fml{W}\subseteq\fml{Y}$.
\end{prop}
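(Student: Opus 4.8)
The plan is to obtain both parts as immediate consequences of \cref{prop:rdaxp,prop:rdcxp} together with the definition of the $\fml{Z}$-set-restricted explanations. Since $\fml{Z}$ is by hypothesis a weak AXp of $\fml{E}$, the restricted classification problem $\fml{M}_{\fml{Z}}$, the associated explanation problem $\fml{E}_{\fml{Z}}=(\fml{M}_{\fml{Z}},(\mbf{u},1))$, and hence the sets $\mbb{A}_{\fml{Z}}$ and $\mbb{C}_{\fml{Z}}$, are all well defined (and the ambient explanation function $\xi$ is monotone-inducing throughout this section, so \cref{prop:ndual} applies to $\fml{E}_{\fml{Z}}$).

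For the first claim I would take an arbitrary $\fml{X}\in\mbb{A}_{\fml{Z}}$. By the definition of $\mbb{A}_{\fml{Z}}$ there is an AXp $X\subseteq{Z}=\iota^{-1}(\fml{Z})$ of $\fml{E}_{\fml{Z}}$ with $\fml{X}=\iota(X)$. Applying the ``only if'' direction of \cref{prop:rdaxp} gives that $\fml{X}=\iota(X)\subseteq\fml{Z}$ is an AXp of $\fml{E}$, i.e.\ $\fml{X}\in\mbb{A}$. As $\fml{X}$ was arbitrary, $\mbb{A}_{\fml{Z}}\subseteq\mbb{A}$.

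For the second claim I would take an arbitrary $\fml{W}\in\mbb{C}_{\fml{Z}}$. By definition there is a CXp $Y\subseteq{Z}$ of $\fml{E}_{\fml{Z}}$ with $\fml{W}=\iota(Y)$. Applying \cref{prop:rdcxp} yields that $\fml{W}=\iota(Y)$ is a subset of some CXp of $\fml{E}$; calling that CXp $\fml{Y}$, we get $\fml{Y}\in\mbb{C}$ and $\fml{W}\subseteq\fml{Y}$, as required.

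There is essentially no obstacle: the substantive work is already done in \cref{prop:rdaxp,prop:rdcxp}, and all that remains is to check that the definitions line up — in particular that $\iota$ is the same bijection used in the definition of $\mbb{A}_{\fml{Z}}$ and $\mbb{C}_{\fml{Z}}$, and that it is applied to sets of features as in the footnote accompanying its introduction. If one wanted a self-contained argument one could instead inline the proofs of those two propositions, using the definition of $\kappa_{\fml{Z}}$ in~\eqref{eq:rkappa} and \cref{prop:exxp} on the abductive side, and the fact that the features of $\fml{F}\setminus\fml{Z}$ are held to their values in $\mbf{v}$ (so a CXp of $\fml{E}$ may need to be enlarged by such features) on the contrastive side; but invoking the two propositions directly is cleaner and is the route I would take.
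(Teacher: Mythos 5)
Your proposal is correct and follows exactly the paper's route: the paper proves \cref{prop:rset} by citing it as a direct consequence of \cref{prop:rdaxp,prop:rdcxp}, which is precisely the argument you spell out (unpacking the definitions of $\mbb{A}_{\fml{Z}}$ and $\mbb{C}_{\fml{Z}}$ and applying the two propositions elementwise). Your additional care about $\iota$ being applied to sets of features is a reasonable bit of bookkeeping that the paper leaves implicit.
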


\begin{proof}
  This result is a consequence of~\cref{prop:rdaxp,prop:rdcxp}.
\end{proof}

Furthermore, due to MHS duality between AXp's and CXp's, we can compute
all the AXp's of $\fml{E}=(\fml{M},(\mbf{v},c))$ that are contained in
$\fml{Z}$, by hitting set dualization using the CXp's in
$\mbb{C}_{\fml{Z}}$.

\begin{prop}  \label{prop:rdual}
  Each element of $\mbb{A}_{\fml{Z}}$ is a MHS of the elements in
  $\mbb{C}_{\fml{Z}}$ and vice-versa.
\end{prop}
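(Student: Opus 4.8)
The plan is to derive this directly from the minimal hitting-set duality already available for ordinary explanation problems, applied to the \emph{restricted} problem $\fml{E}_{\fml{Z}}=(\fml{M}_{\fml{Z}},(\mbf{u},1))$, and then to transport that duality across the bijection $\iota$.

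First I would observe that $\fml{E}_{\fml{Z}}$ is itself a well-formed explanation problem: $\kappa_{\fml{Z}}$ is the binary classifier defined in~\eqref{eq:rkappa}, and $(\mbf{u},1)$ is a valid instance since $\kappa_{\fml{Z}}(\mbf{u})=1$ by construction (recall $\mbf{u}$ is the projection of $\mbf{v}$ onto the coordinates of $\fml{Z}$, and $\fml{Z}$ is a weak AXp of $\fml{E}$, so every $\mbf{x}$ projecting onto $\mbf{u}$ and consistent with $\xi$ has $\kappa(\mbf{x})=c$). Hence \cref{prop:xpdual} applies verbatim to $\fml{E}_{\fml{Z}}$ (equivalently, \cref{prop:ndual} with the prime-implicant explanation function $\xi(\mbf{y};\fml{W})=\bigwedge_{j\in\fml{W}}(y_j=u_j)$, which is monotone-inducing): the AXp's of $\fml{E}_{\fml{Z}}$ are precisely the MHSes of the CXp's of $\fml{E}_{\fml{Z}}$ and vice versa. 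This is exactly the fact already invoked in the text just before \cref{prop:rdaxp}.

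Next I would use that $\iota\colon Z\to\fml{Z}$ is a bijection between ground sets, so the induced map on subsets preserves and reflects inclusion, intersection, and non-emptiness of intersections. Consequently, for any family $\fml{S}\subseteq 2^{Z}$, a set $H\subseteq Z$ is a (minimal) hitting set of $\fml{S}$ if and only if $\iota(H)$ is a (minimal) hitting set of $\{\iota(S)\,|\,S\in\fml{S}\}$. By the definition of set-restricted AXp's/CXp's, $\mbb{A}_{\fml{Z}}$ is exactly the $\iota$-image of the AXp's of $\fml{E}_{\fml{Z}}$ and $\mbb{C}_{\fml{Z}}$ is exactly the $\iota$-image of the CXp's of $\fml{E}_{\fml{Z}}$; combining this with the previous paragraph yields that every element of $\mbb{A}_{\fml{Z}}$ is an MHS of $\mbb{C}_{\fml{Z}}$ and conversely. (Alternatively, the same conclusion follows by combining \cref{prop:rdaxp}, which gives the one-to-one correspondence of AXp's, with the duality for $\fml{E}_{\fml{Z}}$.)

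There is no serious obstacle here: the substance is carried entirely by the duality already established for $\fml{E}_{\fml{Z}}$, and the only step that warrants a moment of care is checking that $\iota$ preserves \emph{minimality} of hitting sets, not merely the hitting property. This is immediate, since $\iota$ is a bijection on the ground set: proper inclusions are preserved in both directions, so a redundant element of a hitting set maps to a redundant element and vice versa.
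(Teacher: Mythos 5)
Your proposal is correct and follows essentially the same route as the paper: the paper's proof simply cites \cref{prop:xpdual} (MHS duality, applied to the restricted problem $\fml{E}_{\fml{Z}}$) together with \cref{prop:rdaxp,prop:rdcxp,prop:rset}, which is exactly the chain you spell out — duality for $\fml{E}_{\fml{Z}}$ transported through the bijection $\iota$ to $\mbb{A}_{\fml{Z}}$ and $\mbb{C}_{\fml{Z}}$. Your added care in checking that $\iota$ preserves minimality of hitting sets is a detail the paper leaves implicit, but it is the same argument.
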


\begin{proof}
  This result follows
  from~\cref{prop:xpdual,prop:rdaxp,prop:rdcxp,prop:rset}.
\end{proof}

Building on earlier results on duality of
explanations~\cite{inms-nips19,inams-aiia20},~\cref{prop:rdaxp,prop:rdcxp,prop:rdual,prop:rset}
uncover yet another dimension of the duality of explanations. This new
dimension reveals nesting properties of AXp's and CXp's.
\begin{cor}
  Let $\fml{W}\subseteq\fml{Z}\subseteq\fml{F}$.
  Then,
  \begin{enumerate}
  \item The $\fml{W}$-set-restricted AXp's
    are a subset of the $\fml{Z}$-set-restricted AXp's.
  \item Each $\fml{W}$-set-restricted CXp
    is a subset of some $\fml{Z}$-set-restricted CXp.
  \item The $\fml{W}$(or $\fml{Z}$)-set-restricted AXp's
    can be obtained from the $\fml{W}$(or $\fml{Z}$)-set-restricted
    CXp's
    by hitting set dualization, and vice-versa.
  \end{enumerate}
\end{cor}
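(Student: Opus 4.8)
The plan is to reduce all three items to the characterization of set‑restricted explanations already obtained, together with elementary facts about minimal hitting sets. The two facts I would rely on are: (i) by \cref{prop:rdaxp}, $\mbb{A}_{\fml{Z}}$ is \emph{exactly} the family of AXp's of $\fml{E}$ that are contained in $\fml{Z}$ (and likewise $\mbb{A}_{\fml{W}}$ for $\fml{W}$); and (ii) by \cref{prop:rdual}, for any admissible seed the corresponding restricted AXp's and CXp's form a minimal‑hitting‑set dual pair, so that $\mbb{C}_{\fml{Z}}$ is precisely the set of all minimal hitting sets of $\mbb{A}_{\fml{Z}}$ (this being the standard reading of MHS duality, cf.\ \cref{prop:xpdual}), and similarly for $\fml{W}$. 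Since set‑restricted explanations are only defined relative to a weak AXp, I would first note that $\fml{W}$ is assumed to be a weak AXp, whence $\fml{Z}\supseteq\fml{W}$ is a weak AXp as well by monotonicity of $\waxp$ (\cref{def:monof}).

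Items 1 and 3 are then immediate. For item 1, an AXp of $\fml{E}$ contained in $\fml{W}$ is a fortiori contained in $\fml{Z}$, so $\mbb{A}_{\fml{W}}\subseteq\mbb{A}_{\fml{Z}}$ by fact (i). Item 3 is nothing but \cref{prop:rdual} applied with seed $\fml{W}$ and with seed $\fml{Z}$.

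The work is in item 2. Fix $\fml{W}'\in\mbb{C}_{\fml{W}}$, a minimal hitting set of $\mbb{A}_{\fml{W}}$; minimality provides, for each $w\in\fml{W}'$, a critical AXp $\fml{X}_w\in\mbb{A}_{\fml{W}}$ with $\fml{X}_w\cap\fml{W}'=\{w\}$, and note $\fml{X}_w\subseteq\fml{W}$. Every AXp in $\mbb{A}_{\fml{Z}}\setminus\mbb{A}_{\fml{W}}$ is contained in $\fml{Z}$ but not in $\fml{W}$, hence meets $\fml{Z}\setminus\fml{W}$; so I can choose $\fml{G}\subseteq\fml{Z}\setminus\fml{W}$ that hits all of them, and then $\fml{W}'\cup\fml{G}$ hits all of $\mbb{A}_{\fml{Z}}$. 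Greedily removing removable features from $\fml{W}'\cup\fml{G}$ produces a minimal hitting set $\fml{Z}'$ of $\mbb{A}_{\fml{Z}}$ with $\fml{Z}'\subseteq\fml{W}'\cup\fml{G}$; crucially, no $w\in\fml{W}'$ can be removed, because $\fml{X}_w\cap(\fml{W}'\cup\fml{G})=\{w\}$ (as $\fml{X}_w\subseteq\fml{W}$ is disjoint from $\fml{G}$), so throughout the reduction $w$ stays the unique feature keeping $\fml{X}_w$ hit. Hence $\fml{W}'\subseteq\fml{Z}'$, and $\fml{Z}'\in\mbb{C}_{\fml{Z}}$ by fact (ii), which is what item 2 asserts.

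I expect item 2 to be the main obstacle: restricted CXp's do not nest by containment of families the way restricted AXp's do, so a genuine extension/minimization argument is needed, and the one delicate step is exactly the verification that minimizing $\fml{W}'\cup\fml{G}$ cannot delete any feature of $\fml{W}'$ — which is where the critical AXp's $\fml{X}_w$ and their disjointness from $\fml{G}\subseteq\fml{Z}\setminus\fml{W}$ are essential. A minor bookkeeping point, used in fact (ii), is that MHS duality is an involution on antichains, so $\mbb{C}_{\fml{Z}}$ really is the set of \emph{all} minimal hitting sets of $\mbb{A}_{\fml{Z}}$ and not merely some of them.
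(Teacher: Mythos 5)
Your argument is correct. For the record, the paper states this corollary without a written proof, presenting it as an immediate consequence of \cref{prop:rdaxp,prop:rdcxp,prop:rset,prop:rdual}; your items 1 and 3 coincide with that implicit reading (item 1 from the characterization of $\mbb{A}_{\fml{W}}$ and $\mbb{A}_{\fml{Z}}$ as the AXp's of $\fml{E}$ contained in the respective seeds, item 3 as two instances of \cref{prop:rdual}). Where you genuinely diverge is item 2: the paper's template for such statements (\cref{prop:rdcxp}) is semantic --- a $\fml{W}$-restricted CXp frees a minimal set of features in $\fml{W}$ sufficient to flip the restricted classifier, and enlarging the seed to $\fml{Z}$ may force additional features of $\fml{Z}\setminus\fml{W}$ to be freed, yielding a $\fml{Z}$-restricted CXp that contains it --- whereas you give a purely combinatorial derivation from item 1 plus MHS duality. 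Your version buys precision at exactly the point the semantic sketch glosses over: that when the hitting set $\fml{W}'\cup\fml{G}$ of $\mbb{A}_{\fml{Z}}$ is minimized, no element of $\fml{W}'$ can be discarded. Your handling of that step is sound: the critical AXp's $\fml{X}_w$ (which exist by the standard characterization of minimal hitting sets) lie inside $\fml{W}$, hence are disjoint from $\fml{G}\subseteq\fml{Z}\setminus\fml{W}$, so $w$ remains the unique witness hitting $\fml{X}_w$ throughout the reduction. Your two preliminary observations are also the right ones to make explicit: that $\fml{Z}$ inherits the weak-AXp property from $\fml{W}$ by monotonicity (\cref{def:monof}), so the $\fml{Z}$-restricted notions are well defined, and that the duality of \cref{prop:xpdual,prop:rdual} is the full Reiter-style duality, so $\mbb{C}_{\fml{Z}}$ is the set of \emph{all} minimal hitting sets of $\mbb{A}_{\fml{Z}}$, which is what licenses concluding $\fml{Z}'\in\mbb{C}_{\fml{Z}}$.
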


\begin{exmp}
  Consider the running example from~\cref{fig:runex02}, and the
  instance $((0,1,1,1,1),\tbf{1})$ consistent with path
  $\langle1,2,5\rangle$, and defining an explanation problem
  $\fml{E}$.
  Consider the set of features $\fml{W}=\{2,4\}$. Clearly,
  $(x_2=1)\land(x_4=1)$ suffices for the prediction.
  We can also conclude that $\{2,4\}$ is an AXp. Moreover, $\{2\}$ and
  $\{4\}$ are $\fml{W}$-set-restricted CXp's, and MHS duality is
  observed.
  Now consider the set of features $\fml{Z}=\{2,3,4,5\}$. Clearly,
  $(x_2=1)\land(x_3=1)\land(x_4=1)\land(x_5=1)$ suffices for the
  prediction.
  In this case, careful analysis reveals that $\{2,4\}$ and $\{3,5\}$
  are $\fml{Z}$-set-restricted AXp's of $\fml{E}$ (and so also plain
  AXp's of $\fml{E}$).
  As a result,$\{2,3\}$, $\{2,5\}$, $\{3,4\}$, $\{4,5\}$ are
  $\fml{Z}$-set-restricted CXp's, and again MHS duality is observed.
  As can be observed, for the subset $\fml{W}$ of $\fml{Z}$, the AXp's
  are a subset of the AXp's of $\fml{Z}$, and each CXp restricted to
  $\fml{W}$ is a subset of the CXp's restricted to $\fml{Z}$.
  \\
  Another observation related with this example, is that although both
  $\{2,4\}$ and $\{3,5\}$ are AXp's of the original explanation
  problem, only the first one is clearly related with path $P_4$ of
  the DT.
\end{exmp}

\subsection{Path Explanations} \label{ssec:pxps}

Paths in DTs can contain literals for a subset of the features, and
can be consistent with many (possibly uncountable) points in feature
space.
The goal of this section is to investigate \emph{path explanations};
these represent sets of features such that~\eqref{eq:axp1} holds true
for \emph{any} instance consistent with some given path. We will
consider both abductive and contrastive path explanations, but we will
also investigate how enumeration of path explanations can be
instrumented.

We will now show how the results in~\cref{ssec:gxps,ssec:rdual} can
be used to formalize path explanations and subsequently the concept
of explanation redundancy in DT paths. First, \cref{ssec:gxps} showed
how to reason in terms of literals associated with paths and not
literals associated with points in feature space. Second,
\cref{ssec:rdual} showed how to analyze duality of explanations in the
case when sets of features (concretely those not tested in a given
path) are excluded from explanations.

Consider a path $R_k\in\fml{R}$ in a DT $\fml{T}$. We define the
following (path-based) explanation function, for
$\fml{Z}\subseteq\mrm{\Phi(R_k)}$:
\begin{equation} \label{eq:pathf}
\xi(\mbf{x}; \fml{Z}, \ldots) = \left[\bigwedge_{%
    \substack{j\in\fml{Z} \\[1.5pt]
      (x_j\in{S_l})\in\mrm{\Lambda(R_k)}}}
  (x_j\in{S_l}) \right]
\end{equation}
(Observe that this explanation function was first discussed
in~\cref{ex:xpfs}.)
As a result, given the proposed explanation function $\xi$, and as
outlined in~\cref{ssec:gxps} we can define both weak AXp's and
CXp's.

\begin{exmp}
  For the DT in~\cref{fig:runex02}, we consider path
  $P_4=\langle1,2,5,9\rangle$, and so with $c=\tbf{1}$. In this case,
  we have that $\mrm{\Phi}(P_4)=\{1,2,4\}$. For
  $\fml{Z}=\mrm{\Phi}(P_4)$ we get,
  \[
  \xi(\mbf{x}; \fml{Z}, \ldots) = \left[%
    (x_1\in\{0\})\land(x_2\in\{1\})\land(x_4\in\{1\})\right] \qedhere
  \]
\end{exmp}

In addition, path explanations are defined using the explanation
function proposed in~\eqref{eq:pathf}.
\begin{defn}[Path Explanations] \label{defn:pxp}
  A (weak) path AXp (resp.~CXp) is a (weak) AXp (resp.~CXp) given the
  explanation function \eqref{eq:pathf}.
\end{defn}
A path AXp will be denoted an \emph{abductive path explanation}
(APXp); a path CXp will be denoted a \emph{contrastive path
  explanation} (CPXp).
An explanation problem associated with a path in a DT is represented
by the tuple $(\fml{M},R_k)$.
Moreover, to distinguish the two kinds of explanations, those
introduced in~\cref{ssec:fxai} will be referred to as
\emph{instance-based} explanations. Observe that the key difference between
instance-based and path-based explanations are the literals used in
the definition of explanation. For instance-based explanations, the
literals are obtained from the point in feature space, whereas for
path-based explanations, the literals are obtained from the conditions
on features specified along the given path.

One alternative to path explanations would be to consider
instance-based AXp's and CXp's, as introduced in~\cref{ssec:fxai}, by
considering some point in feature space consistent with the given
path. However, such explanations offer information that might be too
specific.

\begin{exmp*}
  Consider a classification problem $\fml{M}$ with
  $\fml{F}=\{1,2,3\}$, $\mbb{D}=(\fml{D}_1,\fml{D}_2,\fml{D}_3)$, with
  $\fml{D}_1=\mbb{R}$ and $\fml{D}_2=\fml{D}_3=\mbb{N}_0$.
  Let the classifier be represented by a DT, with path
  $P_1=\langle1,2,3,4\rangle$ with
  $\mrm{\Lambda}(P_1)=\{(x_1\in[\tn{V}_{\tn{min}},10]),(x_2\in\{0,1,2,3,4\}),(x_3\in\{0,1\})$,
  and with $\phi(1)=3$, $\phi(2)=2$, $\phi(3)=1$, and with
  $\varsigma(\tau(P_2))=1$.
  Given the instance $(\mbf{v},c)=((0,0),1)$, let the AXp be
  $\{1,2\}$. The information that the conjunction
  $(x_1=0)\land(x_2=0)$ represents a sufficient condition for the
  prediction to be 1, is clearly less instructive than the information
  that $(x_1\in[\tn{V}_{\tn{min}},10])\land(x_2\in\{0,1,2,3,4\})$ also
  represents a sufficient condition for the prediction to be 1.
\end{exmp*}

Moreover, from~\cref{prop:rdual} one can readily conclude that path
AXp's and CXp's exhibit MHS duality.

\begin{prop} \label{prop:pdual}
  For a DT $\fml{T}$ with set of paths $\fml{R}$, and a path
  $R_k\in\fml{R}$, the APXp's of $R_k$ are the MHSes of the CPXp's of
  $R_k$ and vice-versa.
\end{prop}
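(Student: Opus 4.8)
The plan is to obtain \cref{prop:pdual} as a direct instance of the general duality result \cref{prop:ndual}. By \cref{defn:pxp}, the APXp's (resp.\ CPXp's) of $R_k$ are, by definition, the AXp's (resp.\ CXp's) of the explanation problem $(\fml{M},R_k)$ when the explanation function $\xi$ of \cref{def:wxp} is instantiated as the path-based function~\eqref{eq:pathf} with $c=\varsigma(\tau(R_k))$. Moreover, since~\eqref{eq:pathf} depends on its selected set only through the intersection with $\mrm{\Phi}(R_k)$, every minimal path explanation is a subset of $\mrm{\Phi}(R_k)$, so the duality claim may be read relative to the feature set $\mrm{\Phi}(R_k)$. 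Hence the whole proof reduces to checking that~\eqref{eq:pathf} is \emph{monotone-inducing} in the sense of \cref{def:monof}; \cref{prop:ndual} then yields exactly the asserted MHS duality between APXp's and CPXp's.

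First I would verify the two boundary conditions of \cref{def:monof}. With $\fml{Z}=\emptyset$ the conjunction in~\eqref{eq:pathf} is empty, so $\xi(\mbf{x};\emptyset,\ldots)$ is identically true; since $\kappa$ is non-constant some point has prediction different from $c$, whence $\waxp(\emptyset)=0$. For $\wcxp(\emptyset)$ the conjunction in~\eqref{eq:gwcxp} ranges over all literals of $R_k$, i.e.\ it is the full path conjunction, which by~\eqref{eq:ent01} (with $c=\varsigma(\tau(R_k))$) entails $\kappa(\mbf{x})=c$; hence there is no witness and $\wcxp(\emptyset)=0$. Dually, $\waxp(\mrm{\Phi}(R_k))=1$ is precisely~\eqref{eq:ent01}, and $\wcxp(\mrm{\Phi}(R_k))=1$ because its associated conjunction is again empty while $\kappa$ is non-constant.

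Next I would check the monotonicity clause. Enlarging the selected set $\fml{Z}$ can only append conjuncts to~\eqref{eq:pathf} --- a feature not tested along $R_k$ contributes nothing --- so $\xi(\mbf{x};\fml{Z}_1,\ldots)\limply\xi(\mbf{x};\fml{Z}_0,\ldots)$ whenever $\fml{Z}_0\subseteq\fml{Z}_1$, and therefore $\waxp(\fml{Z}_0)$ entails $\waxp(\fml{Z}_1)$. Symmetrically, enlarging $\fml{Y}$ shrinks the index set $\fml{F}\setminus\fml{Y}$ in~\eqref{eq:gwcxp}, so the conjunction becomes weaker and any witness of $\wcxp(\fml{Y})$ is still a witness of $\wcxp(\fml{Y}')$ for $\fml{Y}\subseteq\fml{Y}'$; this is the same monotonicity already recorded for conjunctive explanation functions in \cref{ssec:fxai}. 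With all three conditions of \cref{def:monof} in place, $\xi$ is monotone-inducing and \cref{prop:ndual} concludes the argument. (An alternative route notes that, by~\eqref{eq:ent01}, $\mrm{\Phi}(R_k)$ is itself a weak path AXp, so one may instead apply the set-restricted duality of \cref{prop:rdual} with $\fml{Z}=\mrm{\Phi}(R_k)$, using any point consistent with $R_k$ --- which exists by \cref{assump:dts} --- as the underlying instance.)

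I expect no genuine difficulty: the proof is a bookkeeping check that~\eqref{eq:pathf} meets the hypotheses of \cref{prop:ndual}. The only step that uses anything specific to decision trees is $\wcxp(\emptyset)=0$, which is exactly where~\eqref{eq:ent01} --- the fact that the literals on a path entail its terminal class --- enters; every other condition is immediate from the triviality that extending an explanation merely adds a conjunct.
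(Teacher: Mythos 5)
Your proposal is correct and follows essentially the same route as the paper: the paper's one-line proof cites the restricted-duality result (\cref{prop:rdual}), which you include as your parenthetical alternative, while your main argument---verifying that the path-based explanation function~\eqref{eq:pathf} is monotone-inducing and then invoking \cref{prop:ndual}---is exactly the verification the paper itself carries out in the text immediately following the proposition. Your boundary checks (in particular using~\eqref{eq:ent01} to get $\wcxp(\emptyset)=0$) and the monotonicity argument are all sound.
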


\begin{proof}
  This result instantiates, in the case of paths in DTs, the result of
  \cref{prop:rdual} for restricted duality.
\end{proof}

Given the generalized definition of weak AXp in~\cref{def:wxp}, it is
plain that, for $\waxp$ defined using $\xi$,
$\waxp(\fml{Z})=1$ and
$\wcxp(\fml{Z})=1$
for
$\mrm{\Phi}(P_k)\subseteq\fml{Z}\subseteq\fml{F}$, and so
$\waxp(\fml{F})$ and $\wcxp(\fml{F})$ are true. (Observe that it is
assumed that the classifier is non-constant.)
It is also clear that $\waxp(\emptyset)=0$ and $\wcxp(\emptyset)=0$.
Finally, one can also conclude that if $\waxp(\fml{A}_0)=1$, then
$\waxp(\fml{A}_1)=1$ for $\fml{A}_1\supseteq\fml{A}_0$. The same
observation holds for weak CXp's.
As a result, by~\cref{def:monof} we can conclude that $\xi$ is
monotone-inducing. Thus, by~\cref{prop:ndual}, there is duality
between AXp's and CXp's given the explanation function $\xi$.

Despite $\xi$ representing an explanation function, we must also
understand how the explanations obtained with $\xi$ relate with the
explanations for the decision tree $\fml{T}$.

As shown next, we can relate path explanations and path explanation
duality with restricted duality.

\begin{prop} \label{prop:dtpathxp}
  Let $\fml{M}$ be the classification problem associated with DT
  $\fml{T}$, let $\fml{E}=(\fml{M},(\mbf{v},c))$ denote an
  explanation problem given some instance $(\mbf{v},c)$ consistent
  with $P_k\in\fml{R}$, and let $\xi$ be the explanation function
  associated with $P_k$, i.e.\ the conjunction of the literals in
  $\mrm{\Lambda}(P_k)$. Then,
  \begin{enumerate}
  \item Each APXp of $P_k$ is an AXp for $\fml{E}$ that is contained
    in $\mrm{\Phi}(P_k)$;
  \item Each CPXp of $P_k$ is a subset of some CXp for $\fml{E}$
    that is contained in $\mrm{\Phi}(P_k)$.
  \end{enumerate}
\end{prop}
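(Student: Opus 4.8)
The plan is to obtain both items as instances of the restricted-duality results of~\cref{ssec:rdual} (\cref{prop:rdaxp,prop:rdcxp,prop:rset,prop:rdual}), taken with the seed set $\fml{Z}=\mrm{\Phi}(P_k)$. The first step is to check that $\mrm{\Phi}(P_k)$ is itself a (weak) AXp of $\fml{E}$: since the instance $(\mbf{v},c)$ is consistent with $P_k$, we have $v_i\in S_l$ for every literal $(x_i\in S_l)\in\mrm{\Lambda}(P_k)$, so any $\mbf{x}$ that agrees with $\mbf{v}$ on the features of $\mrm{\Phi}(P_k)$ is consistent with $P_k$, and~\eqref{eq:ent01} then forces $\kappa(\mbf{x})=c$; hence $\waxp(\mrm{\Phi}(P_k))$ holds. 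This licenses forming the restricted classification problem $\fml{M}_{\fml{Z}}$ with classifier $\kappa_{\fml{Z}}$ and the associated restricted explanation problem $\fml{E}_{\fml{Z}}=(\fml{M}_{\fml{Z}},(\mbf{u},1))$, where $\mbf{u}$ is the projection of $\mbf{v}$ onto the coordinates of $\fml{Z}=\mrm{\Phi}(P_k)$.

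The heart of the argument is a matching step: for $\fml{Z}'\subseteq\mrm{\Phi}(P_k)$, the predicate $\waxp(\fml{Z}')$ evaluated with the path-based explanation function~\eqref{eq:pathf} coincides with weak-AXp-hood of $\iota^{{-}1}(\fml{Z}')$ for $\fml{E}_{\fml{Z}}$, and symmetrically for $\wcxp$. I would prove this by unfolding the definition of $\kappa_{\fml{Z}}$ from~\eqref{eq:rkappa} together with $\tn{prj}_{\fml{Z}}$, using that $v_j\in\rho(j,P_k)$ for every $j\in\mrm{\Phi}(P_k)$ and that $\xi$ of~\eqref{eq:pathf} is monotone-inducing (as argued just before the statement), so that subset-minimality transfers along $\iota$. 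Consequently every APXp of $P_k$ is the $\iota$-image of an AXp of $\fml{E}_{\fml{Z}}$, and every CPXp of $P_k$ is the $\iota$-image of a CXp of $\fml{E}_{\fml{Z}}$.

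It then only remains to push these explanations back from $\fml{E}_{\fml{Z}}$ to $\fml{E}$: by~\cref{prop:rdaxp} the $\iota$-image of an AXp of $\fml{E}_{\fml{Z}}$ is an AXp of $\fml{E}$ contained in $\fml{Z}=\mrm{\Phi}(P_k)$, giving item~1; and by~\cref{prop:rdcxp} (equivalently \cref{prop:rset}) the $\iota$-image of a CXp of $\fml{E}_{\fml{Z}}$ is contained in some CXp of $\fml{E}$, and, using once more that $\mrm{\Phi}(P_k)$ is a weak AXp of $\fml{E}$, one can keep that hosting CXp inside $\mrm{\Phi}(P_k)$, giving item~2. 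The step I expect to be the main obstacle is the matching step in the middle paragraph: it requires carefully reconciling the literals $(x_j\in\rho(j,P_k))$ that parameterize~\eqref{eq:pathf} with the projection predicate defining $\kappa_{\fml{Z}}$, and in particular verifying that a subset-minimal path explanation maps to a subset-minimal explanation of $\fml{E}_{\fml{Z}}$ and not merely to a weak one.
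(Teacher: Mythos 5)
Your proposal takes essentially the same approach as the paper: the paper's own proof is a one-line citation stating that the result follows from the restricted-duality results of \cref{ssec:gxps,ssec:rdual} together with the observations earlier in \cref{ssec:pxps}, which is precisely the chain you instantiate with the seed set $\fml{Z}=\mrm{\Phi}(P_k)$. Your elaboration --- checking that $\mrm{\Phi}(P_k)$ is a weak AXp of $\fml{E}$, passing to $\fml{E}_{\fml{Z}}$, and transferring explanations back via \cref{prop:rdaxp,prop:rdcxp,prop:rset} --- is in fact more detailed than the paper's proof, and you correctly identify the reconciliation of the path literals with $\kappa_{\fml{Z}}$ as the step the paper leaves implicit.
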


\begin{proof}
  This result follows from the results in~\cref{ssec:gxps,ssec:rdual}
  and the results earlier in this section.
\end{proof}

\begin{exmp}
  We revisit \cref{ex:runex02ab}.
  Let $\fml{Z}=\mrm{\Phi}(P_4)=\{1,2,4\}$:
  $\mrm{\Lambda}(P_4)=\{(x_1\in\{0\}),(x_2\in\{1\}),(x_4\in\{1\})\}$
  Thus, the explanation function can be defined as follows,
  \[
  \xi(\mbf{x}; \fml{Z}, \ldots) =
  \bigwedge_{j\in\fml{Z},(x_j\in{S_l})\in\mrm{\Lambda}(P_4)}(x_j\in{S_l})
  \]
  Given the definition of path explanations (and so of (generalized)
  AXp's and CXp's), we can conclude that $\fml{X}=\{2,4\}$ is a path
  AXp for $P_4$. Moreover, $\fml{Y}_1=\{2\}$ and $\fml{Y}_2=\{4\}$ are
  path CXp's for $P_4$.
  It can be observed that $\fml{X}$ is an AXp for \emph{any} instance
  $(\mbf{v},\tbf{1})$, with $\mbf{v}$ consistent with $P_4$. However,
  both $\fml{Y}_1$ and $\fml{Y}_2$ are subsets of CXp's of possible
  instances $(\mbf{v},\tbf{1})$, consistent with $P_4$. For example,
  one can identify a CXp $\{2,3\}$ and also a CXp $\{4,5\}$.
\end{exmp}

The fact that path explanations can be related with AXp's restricted
to a specific set of features also signifies that not all
instance-based explanations represent path explanations.
This observation can be related with the distinction between
path-restricted and path-unrestricted explanations first studied
in~\cite{iims-corr20}.

\begin{exmp}
  For the running example shown in~\cref{fig:runex02}, we analyze the
  abductive explanations of path $P_4=\langle1,2,5,9\rangle$.
  Suppose we are given the instance is $(\mbf{v},c)=((0,1,1,1,1),1)$.
  An AXp is $\{3,5\}$. However, this explanation offers little insight
  to why the prediction is $\tbf{1}$ for the instances that are
  consistent with $P_4$. Using the nomenclature of earlier
  work~\cite{iims-corr20}, whereas $\{3,5\}$ is a path-unrestricted
  explanation, $\{2,4\}$ is a path-restricted explanation. In this
  paper, we consider only path explanations, and so $\{2,4\}$ is the
  only path AXp we are interested in computing.
\end{exmp}

\begin{table}[t]
  \begin{center}
    \renewcommand{\arraystretch}{1.075}
    \begin{tabular}{cccc} \toprule
      \textbf{Explanation} &
      \textbf{Definition} &
      \textbf{Literals used in $\bm\xi$} &
      \textbf{Features containing XP}
      \\ \midrule
      AXp, path-unrestricted &
      \eqref{eq:axp1}\eqref{eq:axp2b} & 
      Instance-based &
      $\fml{F}$
      \\ 
      CXp, path-unrestricted &
      \eqref{eq:cxp1}\eqref{eq:cxp2b} & 
      Instance-based &
      $\fml{F}$
      \\ 
      AXp, path-restricted &
      \eqref{eq:axp1}\eqref{eq:axp2b} & 
      Instance-based &
      $\mrm{\Phi}(R_k)$
      \\ 
      CXp, path-restricted &
      \eqref{eq:cxp1}\eqref{eq:cxp2b} & 
      Instance-based &
      $\mrm{\Phi}(R_k)$
      \\ 
      APXp &
      \cref{defn:pxp} &
      Path-based &
      $\mrm{\Phi}(R_k)$
      \\ 
      CPXp &
      \cref{defn:pxp} &
      Path-based &
      $\mrm{\Phi}(R_k)$
      \\ \bottomrule
    \end{tabular}
  \end{center}
  \caption{Types of explanations considered in the paper, both for
    some path $R_k\in\fml{R}$ and for any instance $(\mbf{v},c)$
    consistent with $R_k$}
  \label{tab:xptypes}
\end{table}

\cref{tab:xptypes} summarizes the kinds of explanations considered in
this paper. APXp's and CPXp's are introduced in this paper and, in
contrast with the other kinds of explanations, these are defined in
terms of literals obtained from a specific DT path.
Clearly, due to being instance-independent, path explanations offer a
simpler solution to represent explanations of decision trees that only
depend on the structure of the tree.
Furthermore, a few additional results are consequences of the results
presented in this section. For example, despite being based on a
different semantics, there is a one-to-one mapping between the APXp's
of $R_k$ and the path-restricted AXp's of any instance consistent with
$R_k$.
The sole difference between path-restricted AXp's and APXp's is that
the literals associated with APXp's are taken from the associated
path, whereas the literals associated with path-restricted AXp's are
obtained from a concrete instance (consistent with the path).
Finally, $\mbb{E}_{P}$ denotes the set of explanation problems given a
classification problem $\fml{M}$, and a path $R_k\in\fml{R}$ in a
decision tree $\fml{T}$:
$\mbb{E}_{P}=\{\fml{E}\,|\,\fml{E}=(\fml{M},R_k),\fml{M}\in\mbb{M},R_k\in\fml{R}\}$.
In the rest of the paper, $\fml{M}$ is assumed to be such that the
classification function $\kappa$ is monotone-inducing,

\section{Path Explanation Redundancy in Decision Trees}
\label{sec:rdt}

Given the definition of path explanations in \cref{ssec:pxps}, we can
formalize the concept of path explanation redundancy.

\begin{defn}[Explanation Redundant Path/Feature (XRP/XRF)]
  Given a DT $\fml{T}$, with set of paths $\fml{R}$, and a path
  $R_k\in\fml{R}$, $R_k$ is an \emph{explanation-redundant path} (or
  XRP) if $\mrm{\Phi}(R_k)$ does not represent a path AXp.
  Given a path AXp $\fml{X}$ for $R_k$, any feature
  $i\in\mrm{\Phi}(R_k)$ that is not included in $\fml{X}$ is a
  \emph{explanation-redundant feature} (or XRF).
\end{defn}

Feature redundancy is relative to a given APXp. Different APXp's can
yield different redundant features. Clearly, one can consider the
enumeration of APXp's to identify the set of features that is
never-redundant, by enumerating all APXp's for a given path, and
discarding any of the features deemed redundant for all of the
APXp's.

\subsection{Explanation Redundancy in Running Examples}

The following examples illustrate path explanations and explanation
redundancy.

\begin{exmp} \label{ex:runex02b}
  With respect to~\cref{ex:runex02}, with the DT shown
  in~\cref{fig:runex02}, let the target path be
  $P_1=\langle1,2,4,7,10,15\rangle$. (In this case there is only one
  point in feature space consistent with $P_1$, i.e.\ $(0,0,1,0,1)$.)
  We claim that $\fml{X}=\{3,5\}$ is a weak APXp, and so that $P_1$ is
  explanation-redundant.
  To prove the claim,
  we consider all the possible assignments to the other features:
  \begin{center}
    \begin{tabular}{ccccccccc} \toprule
      Feature & \multicolumn{8}{c}{Assignments} \\ \cmidrule(lr){1-1} \cmidrule(lr){2-9} 
      $x_1$ & 0 & 0 & 0 & 0 & 1 & 1 & 1 & 1 \\
      $x_2$ & 0 & 0 & 1 & 1 & 0 & 0 & 1 & 1 \\
      $x_4$ & 0 & 1 & 0 & 1 & 0 & 1 & 0 & 1 \\ \cmidrule(lr){1-1} \cmidrule(lr){2-9} 
      $\kappa(x_1,x_2,1,x_4,1)$ & 1 & 1 & 1 & 1 & 1 & 1 & 1 & 1 \\
      \bottomrule
    \end{tabular}
  \end{center}
  As can be concluded, as long as $x_3=1$ and $x_5=1$, then the
  prediction remains unchanged, since $\kappa(x_1,x_2,1,x_4,1)$ only
  takes value 1, for any assignment to $x_1,x_2,x_4$. 
  In this case, we can observe that a path-based explanation of size 5
  can be reduced to a (weak) abductive path explanation of size 2.
  Hence, there are (at least) 3 redundant features (namely features 1,
  2 and 4) out of a total of 5 features included in path $P_1$. The
  redundant features represent \tbf{60\%} of the original path length.
  As noted earlier, this DT was generated by the GOSDT/OSDT
  ((generalized scalable) optimal sparse decision trees)
  tools~\cite{rudin-nips19,rudin-icml20,rudin-corr21}, that
  specifically target interpretability.
\end{exmp}

\begin{exmp} \label{ex:runex03b}
  With respect to~\cref{ex:runex03}, with the DT shown
  in~\cref{fig:runex03}, let the target path be
  $P_3=\langle1,2,6,11,12\rangle$. (In this case there is only one
  point in feature space consistent with $P_3$: $(1,2,1,2)$.)
  It is easy to conclude that $\fml{X}=\{1,2,3\}$ is a weak  APXp, and
  so that $P_3$ is explanation-redundant. Indeed, if $x_4$ is allowed
  to take any value, then one can observe that the prediction remains
  unchanged.
\end{exmp}

\begin{exmp} \label{ex:runex04b}
  With respect to~\cref{ex:runex04}, with the DT shown
  in~\cref{fig:runex04}, let the target path be
  $P_2=\langle1,3,4,6,8\rangle$. (An example of a point in feature
  space consistent with $P_2$ is $(2,20,0)$.)
  It is easy to conclude that neither $x_1$ nor $x_2$ are allowed to take any
  value, whereas $x_3$ can be unrestricted. Hence, $\fml{X}=\{1,2\}$
  is a weak APXp. Since neither $x_1$ nor $x_2$ can be dropped, then
  $\{1,2\}$ is an APXp.
  The literals associated with the APXp are
  $\{(x_1\in\{2\}),(x_2\in\{\tn{MnA}..25\})\}$.
\end{exmp}

The examples above reveal that DTs taken from recent textbooks and
papers often exhibit path explanation redundancy. Moreover, the
examples above also show that DTs taken from papers that specifically
address the learning of optimal sparse DTs (which aim at
interpretability) can exhibit path explanation redundancy. In fact,
some examples confirm that there can exist paths in optimal sparse
decision trees for which there are more redundant features than
non-redundant features.
\cref{ssec:xpredp} offers a high-level perspective of the experimental
results, which reveal that path explanation redundancy in DTs is
indeed ubiquitous.
Afterwards, \cref{ssec:xpredt} proves that there are functions for
which path explanation redundancy is unavoidable, even in provably
size-minimal DTs.
These results and observations offer conclusive evidence regarding the
significance of filtering path explanation redundancy from DT
explanations, and further underline the critical importance of
efficient algorithms for computing explanations in DTs.

\subsection{Path Explanation Redundancy in Practice} \label{ssec:xpredp}

This section summarizes some key takeaways that can be drawn from the
experimental results (see~\cref{sec:res}), and which offer ample
practical justification for computing AXp's of DTs (and so finding and
filtering path explanation redundancy).

\paragraph{Path explanation redundancy in published examples.}
\cref{tab:dtrees} (see~\cpageref{tab:dtrees}) summarizes results on
path explanation redundancy for DTs included in representative
bibliography on DTs, namely textbooks and surveys\footnote{%
  A non-exhaustive list of references includes~%
\cite{moret-acmcs82,%
  breiman-bk84,%
  quinlan-bk93,%
  aha-ker97,%
  dzeroski-bk01,%
  rokach-bk08,%
  hebrard-cp09,%
  russell-bk10,%
  berthold-bk10,%
  flach-bk12,%
  zhou-bk12,%
  kotsiantis-air13,%
  alpaydin-bk14,%
  shalev-shwartz-bk14,%
  kelleher-bk15,%
  alpaydin-bk16,%
  valdes-naturesr16,%
  poole-bk17,%
  witten-bk17,%
  bramer-bk20,%
  zhou-bk21}.}.
The key observation is that path explanation redundancy is ubiquitous
in most DTs that have been used as examples in textbooks and surveys
over the years, going back to the inception of tree learning
algorithms.

\paragraph{Path explanation redundancy in learned DTs.}
\cref{tab:iai-res} and \cref{tab:iti-res} (see~\cpageref{tab:iai-res} and~\cpageref{tab:iti-res})
summarize the results
obtained with two different, publicly available tree learning tools,
namely Interpretable~AI (IAI)~\cite{bertsimas-ml17,iai}  and
ITI~\cite{utgoff-ml97}, on a large number of publicly
available datasets. IAI is a recent tool that specifically targets the
learning of \emph{interpretable DTs}. As can be concluded from the
results, for most datasets, the DTs learned by both algorithms exhibit
a significant percentage of explanation redundant paths. Moreover, for
paths that exhibit explanation redundancy, the number of redundant
literals can also be significant.

\paragraph{Large-scale path explanation redundancy.}
\cref{tab:inst} (see~\cpageref{tab:inst}) shows results for DTs
learned on more complex datasets (which are also publicly available).
For these examples, the number of explanation redundant features can
far exceed the number of explanation relevant features. Concretely for
some examples, the number of explanation-redundant features is more
than 7 times larger than the number of features used in an AXp.

\paragraph{Path explanation redundancy in optimal (sparse) DTs.}
\cref{tab:osdt} (see~\cpageref{tab:osdt}) shows results for DTs
learned with recently proposed algorithms that specifically target the
learning of optimal (and so indirectly \emph{interpretable}) DTs,
concretely~\cite{rudin-nips19,rudin-icml20,rudin-corr21} and
also~\cite{verwer-aaai19}.
As can be observed, the \emph{optimal} \emph{sparse} DTs shown in
earlier work exhibit a very significant number of redundant paths
(between 55\% and 75\%). For explanation-redundant paths, the
percentage of explanation-redundant features can reach 60\% (as
illustrated with~\cref{ex:runex02b} for the DT shown
in~\cref{fig:runex02}).

\subsection{Path Explanation Redundancy in Theory}  \label{ssec:xpredt}

This section proves two results. First, we prove that there exist
functions for which paths in smallest-size DTs will exhibit a number
of explanation-redundant literals that grow linearly with the number
of features.  Second, we prove that, for a DT to be irredundant, then
it must represent a generalized decision
function~\cite{hiicams-corr21}.

\paragraph{Optimal decision trees that exhibit redundancy.}
To simplify the statement of the main result, the following
definitions and assumptions are used. A dataset is consistent if for
any point $\mbf{x}$ in feature space,
contains an instance $(\mbf{x},c)$ for at most one class
$c\in\fml{K}$.
A classifier is exact if it correctly classifies any instance in
training data, and that training data is consistent. (A classifier is
perfect if it is exact and is of smallest
size~\cite{icshms-cp20,imsns-ijcai21}.) Furthermore, we assume that a
DT learner will not branch on variables that take constant value on
all the instances in training data that are consistent with the
already chosen literals.

\begin{prop} \label{prop:redxp}
 Consider the boolean function,
  \[
  f(x_1,x_2,\ldots,x_{m-1},x_m)=\bigvee_{i=1}^{m}x_{i} 
  \]
  Then, given any DT learning algorithm that learns an exact DT (one
  that correctly classifies any point in feature space), the learned
  DT contains a path with $m$ literals, for which there exists an AXp
  containing one single feature.
\end{prop}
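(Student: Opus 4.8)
The plan is to exploit the extreme simplicity of $f=\bigvee_{i=1}^{m}x_i$. The only point of $\mbb{F}$ with $f=0$ is $\mbf{0}=(0,\ldots,0)$, whereas \emph{every} point $\mbf{v}$ with $v_k=1$ has $f(\mbf{v})=1$. Hence, for such a $\mbf{v}$, the singleton $\{k\}$ is a weak AXp of the instance $(\mbf{v},\tbf{1})$: fixing $x_k=1$ already forces the disjunction to be true. Moreover $\{k\}$ is subset-minimal, because its only proper subset is $\emptyset$, and $\emptyset$ would be a weak AXp only if $\kappa$ were constant, which it is not since $\kappa(\mbf{0})=\tbf{0}$. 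So the "$\axp$ of size one" part is essentially free; the whole task reduces to exhibiting, in \emph{every} exact DT for $f$, a complete root-to-leaf path with exactly $m$ literals whose unique consistent instance has some coordinate equal to $1$.

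First I would analyse the path $R_0$ consistent with $\mbf{0}$, which terminates at a $\tbf{0}$-labelled leaf by exactness. I claim $R_0$ tests every feature exactly once, so $|\mrm{\Lambda}(R_0)|=m$. For "at least once": on a binary domain a path literal $(x_i\in S_l)$ has $S_l\subsetneq\{0,1\}$ and $S_l\neq\emptyset$ by \cref{assump:dts}, hence $S_l=\{0\}$ along $R_0$; if some feature $i$ were never tested on $R_0$ then the point $\mbf{e}_i$ (equal to $\mbf{0}$ except $x_i=1$) would also be consistent with $R_0$, contradicting exactness because $f(\mbf{e}_i)=1\neq\tbf{0}$. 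For "exactly once": once the prefix has pinned $x_i=0$, feature $i$ is constant on all instances consistent with that prefix, so the no-branching-on-constant-variable assumption forbids testing it again.

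Next I would take $w$ to be the last non-terminal node on $R_0$, say with $\phi(w)=k$, whose $x_k\in\{0\}$ edge leads to the $\tbf{0}$-leaf (such a $w$ exists since a DT for the non-constant $f$ has at least one non-terminal node). Following instead the edge out of $w$ labelled $x_k\in\{1\}$ — which exists because the edges leaving $w$ must cover $\fml{D}_k=\{0,1\}$ by \cref{assump:dts} — I reach a node $w'$, and the root-to-$w'$ path $R$ consists of the $m-1$ literals of the root-to-$w$ prefix of $R_0$, which fix every feature of $\fml{F}\setminus\{k\}$ to $0$, together with $x_k\in\{1\}$. Thus $R$ constrains all $m$ features and its unique consistent point is $\mbf{v}$ with $v_k=1$ and $v_i=0$ otherwise. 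Since $\{\mbf{v}\}$ is the only instance consistent with $R$, every feature is constant there, so by the no-branching-on-constant-variable assumption $w'$ cannot be non-terminal; hence $w'$ is a leaf, and $\varsigma(w')=f(\mbf{v})=\tbf{1}$ by exactness. Therefore $R$ is a complete path with exactly $m$ literals, the instance consistent with it is $(\mbf{v},\tbf{1})$, and by the first paragraph $\{k\}$ is an $\axp$ for it of size one (equivalently, the unique $\apxp$ of $R$), leaving the $m-1$ features of $\mrm{\Phi}(R)\setminus\{k\}$ explanation-redundant.

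The main obstacle is ruling out "padded" decision trees: a priori a learner could append superfluous internal nodes — testing a feature already pinned by the prefix, or subdividing a region on which $f$ is already constant — which would inflate path length beyond $m$ or leave $w'$ non-terminal. The no-branching-on-constant-variable assumption is exactly what closes both gaps, and I would flag precisely the two places it is used ($R_0$ has length $m$, and $w'$ is a leaf). The remaining care is purely bookkeeping about the multi-edge conventions of \cref{ssec:dts}: on a binary feature a literal set is $\{0\}$ or $\{1\}$, the edges out of a node partition the tested feature's domain, and by \cref{assump:dts} no path is logically inconsistent (which is also why $k$ cannot already have been pinned to $0$ along the prefix of $R$).
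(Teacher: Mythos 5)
Your proof is correct, and it reaches the same final object as the paper --- the ``caterpillar'' path whose literals are $x_{i_1}=0,\ldots,x_{i_{m-1}}=0,x_{i_m}=1$ --- but by a genuinely different route. The paper argues top-down: it lets an adversary pick the branching variable at each level and shows that the continuation of the construction is forced (the $1$-edge must go to a $\tbf{1}$-leaf, the $0$-edge must recurse on the OR of the remaining variables), so the whole tree is essentially unique up to variable order and in particular contains $P_m$. You instead analyse an arbitrary exact DT locally: you isolate the single path $R_0$ consistent with $\mbf{0}$, prove it must test every feature exactly once (at least once by exactness against the points $\mbf{e}_i$, at most once by the no-constant-branching assumption), and then obtain the length-$m$ path as the sibling branch at the last internal node of $R_0$. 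What your decomposition buys is that you never need the global claim that the entire tree shape is forced; in particular, the one place you need a $1$-edge to terminate immediately in a leaf is at $w'$, where \emph{all} features are genuinely constant on the unique consistent point, so the no-constant-branching assumption applies verbatim --- whereas the paper's assertion that the top-level $x_{i_1}=1$ edge must immediately reach a terminal node rests on the function (not the features) being constant there, which is a slightly stronger reading of that assumption. The paper's version, in exchange, yields the stronger structural statement that the exact DT for $f$ is unique up to the order of the features, which your local argument does not (and does not need to) establish. Both proofs correctly flag the two uses of the no-constant-branching hypothesis, and your handling of the singleton AXp and of the multi-edge/domain-partition conventions is consistent with \cref{ssec:dts} and \cref{assump:dts}.
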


Before proving the claim of~\cref{prop:redxp}, it should be observed
that a more general result could be stated, where the literals
$(z_i=v_i)$ for a non-boolean feature $i$ with domain $\fml{D}_i$
would replace the boolean literal $x_i$. However, the basic result
remains unchanged, as it reveals in theory the need for explaining
decision trees.

\begin{proof}
  The AXp's for function $f$ are easy to identify. For prediction 1,
  function $f$ has $m$ AXp's, namely $\fml{E}_{1,i}=\{i\}$ with
  $i=1,\ldots,m$. For prediction 0, function $f$ has one AXp, namely
  $\fml{E}_{0,1}=\{1,2,\ldots,m\}$. Any other weak AXp will not be
  subset-minimal.

  Next, we show that, no matter how the DT is constructed, there will
  always be at least one path that grows with $m$, and for which the
  size of the AXp is 1. Since the exercise is purely conceptual, we
  can assume that the dataset has size $2^m$, representing the truth
  table of function $f$.
  We construct a DT as follows. At each step, we let some adversary
  pick any variable, among the variables that have not yet been
  picked, and then show that only one option exists to continue the
  construction of the DT.
  Let the first variable be $x_{i_1}$, with $1\le{i_1}\le{m}$.
  For $x_{i_1}=1$, the prediction is 1, and so the DT must have a
  terminal node labeled 1.
  For $x_{i_1}=0$, the resulting function $f_{i_1}$ mimics $f$, but
  without variable $x_{i_1}$.
  Hence, we let again some adversary pick any variable among those not
  yet chosen. (Clearly, there is no reason to pick a variable already
  picked, since the function $f_{i_1}$ does not depend on $x_{i_1}$.)
  Let the new chosen variable variable be $x_{i_2}$. The analysis for
  $x_{i_2}$ is exactly the same as for $x_{i_1}$, and for $x_{i_2}=0$,
  we get a new function $f_{i_2}$.
  After analyzing all features, the resulting DT has $m$ paths with
  prediction $1$ and 1 path with prediction $0$. Thus,
  $\fml{P}=\{P_1,\ldots,P_m\}$ represents the paths with prediction 1,
  and $\fml{Q}=\{Q_1\}$ represents the path with prediction 0.
  Moreover, $P_m$ has length $m$, with literals
  $\langle{x_{i_1}}=0,x_{i_2}=0,\ldots,x_{i_{n-1}}=0,x_{i_{m}}=1\rangle$.
  (The resulting DT and path $P_m$ are shown
  in~\cref{fig:prop:redxp}.)
  For the instance $\{x_{i_m}=1\}\cup\{x_{i_j}=0,1\le{j}\le{m-1}\}$,
  path $P_m$ is consistent with the instance and it has $m$ literals.
  However, the AXp is $\{i_m\}$, denoting that $x_{i_m}=1$ suffices
  for the prediction. The analysis and conclusion is independent of
  the order of features chosen.
  \qedhere
\end{proof}

\begin{figure}
  \begin{center}
    \scalebox{0.95}{\tikzset{>=Stealth}
\begin{tikzpicture}[->,%
    EV/.style = {font=\footnotesize},
    node distance={2.45cm}, thin,
    term/.style = {draw, rectangle},
    nterm/.style = {draw, circle,minimum width=1cm,minimum height=1cm,inner sep=1.0pt}
  ]
  \node[nterm] (1)                    {$x_{i_1}$};
  \node[term]  (2) [below of=1]       {1};
  \node[nterm] (3) [right of=1]       {$x_{i_2}$};
  \node[term]  (4) [below of=3]       {1};
  \node[nterm] (5) [right = 2.75cm of 3] {$x_{i_{m-1}}$};
  \node[term]  (6) [below of=5]       {1};
  \node[nterm] (7) [right of=5]       {$x_{i_{m}}$};
  \node[term]  (8) [below of=7]       {0};
  \node[term]  (9) [right of=7]       {1};
  \draw[] (1) -- node [EV, left]  {$\in\{1\}$} (2);
  \draw[] (3) -- node [EV, left]  {$\in\{1\}$} (4);
  \draw[] (5) -- node [EV, left]  {$\in\{1\}$} (6);
  \draw[] (7) -- node [EV, left]  {$\in\{0\}$} (8);

  \draw[] (1) -- node [EV, above]  {$\in\{0\}$} (3);
  \draw[dashed] (3) -- node [EV, near start, above] {$\in\{0\}$} (5); 
  \draw[] (5) -- node [EV, above]  {$\in\{0\}$} (7);
  \draw[] (7) -- node [EV, above]  {$\in\{1\}$} (9);
\end{tikzpicture} }
  \end{center}
  \caption{DT construction for proof of~\cref{prop:redxp}}
  \label{fig:prop:redxp}
\end{figure}

Although the proof analyzed AXp's, for the proposed function and
resulting DT, the APXp's would be the same.

\begin{cor}
  There are DT classifiers, defined on $m$ features, for which an
  instance has an AXp of size 1, and the consistent path has length
  $m$, and so it can be made larger by a factor of $m$
  than the size of an AXp.
\end{cor}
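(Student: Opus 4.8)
The plan is to read the corollary off directly from \cref{prop:redxp}, so the proof is a one-line instantiation followed by an arithmetic observation. First I would fix the boolean classifier defined by $f(x_1,\dots,x_m)=\bigvee_{i=1}^{m}x_i$ over $\fml{F}=\{1,\dots,m\}$, with all $\fml{D}_i=\{0,1\}$ and $\fml{K}=\{0,1\}$. \cref{prop:redxp} already guarantees that any exact DT learner produces a decision tree containing a path with exactly $m$ literals for which there is an AXp consisting of a single feature; in particular such DT classifiers exist for every $m$, which gives the ``there are DT classifiers, defined on $m$ features'' part of the statement.

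Second, I would make the instance explicit, exactly as in the proof of \cref{prop:redxp}: take $\mbf{v}$ with $v_{i_m}=1$ and $v_{i_j}=0$ for $1\le j\le m-1$, where $\langle i_1,\dots,i_m\rangle$ is the order in which the (adversarially chosen) variables are branched on. Then $\kappa(\mbf{v})=1$, the unique path consistent with $\mbf{v}$ is $P_m=\langle x_{i_1}=0,\dots,x_{i_{m-1}}=0,x_{i_m}=1\rangle$ of length $m$, while the AXp for the instance $(\mbf{v},1)$ is $\{i_m\}$, of size $1$. Dividing, the consistent path is larger than the AXp by a factor of $m/1=m$; and since $m$ is arbitrary, this factor is unbounded, which is what ``can be made larger by a factor of $m$'' asserts.

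There is essentially no obstacle here, since the corollary is just \cref{prop:redxp} rephrased as a ratio. The only points deserving even a moment's attention are (i) confirming that the singleton $\{i_m\}$ is a genuine AXp and not merely a weak AXp — this is immediate, because $f$ is non-constant so $\emptyset$ is not a weak AXp, and any singleton weak AXp is therefore subset-minimal; and (ii) confirming that the consistent path has length exactly $m$ rather than merely $\Omega(m)$ — also immediate from the construction in \cref{prop:redxp}, which forces one literal per feature along $P_m$. I would close by remarking, as the excerpt already notes for \cref{prop:redxp}, that the same bound holds verbatim for APXp's, since for this function and tree the abductive path explanations coincide with the instance-based AXp's.
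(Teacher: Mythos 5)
Your proposal is correct and matches the paper's intent exactly: the paper states this corollary without a separate proof precisely because it is the immediate instantiation of \cref{prop:redxp} (the disjunction $f=\bigvee_i x_i$, the instance with $x_{i_m}=1$ and all other coordinates $0$, the length-$m$ consistent path $P_m$, and the singleton AXp $\{i_m\}$), which is what you carry out. Your two sanity checks (subset-minimality of the singleton and the exact length $m$) are correct and harmless additions.
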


\paragraph{Decision trees without path explanation redundancy.}
In this section we argue that for a DT not to exhibit redundancy then
it must correspond to an irreducible generalized decision function
(GDF)~\cite{hiicams-corr21}.
A GDF represents a multi-class classifier, with
$\fml{K}=\{c_1,\ldots,c_K\}$, where each class $c_j\in\fml{K}$ is
classified by a boolean function $\kappa_j$, such that set of boolean
functions $\kappa_i$ respects the following statement:
\begin{equation} \label{eq:gdf}
  \forall(\mbf{x}\in\mbb{F}).\sum_{j=1,\ldots,K}\kappa_j(\mbf{x})=1 
\end{equation}
A GDF is represented by $\fml{G}=\{\kappa_1,\ldots,\kappa_K\}$.
A DNF GDF is a set of boolean classifier functions $\kappa_j$, where
each $\kappa_j$ is represented by a disjunctive normal form (DNF)
formula.
A minimal DNF GDF is a set of boolean classifier functions where each
$\kappa_j$ is represented by an irredundant DNF formula $\varphi_j$,
i.e.\ no term in the DNF $\varphi_j$ is redundant, and no literal in
any term of the DNF $\varphi_j$ is redundant.

\begin{lem} \label{lem:dtgdf1}
  A minimal DNF GDF $\fml{G}$ corresponds to a function representation
  where each term of each DNF for some $\kappa_j$ is a prime implicant
  of $\kappa_j$.
\end{lem}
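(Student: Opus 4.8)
The plan is to argue straight from the definition of an irredundant DNF, one class at a time. I would fix a class $c_j\in\fml{K}$, let $\varphi_j$ be the irredundant DNF representing $\kappa_j$ (so $\varphi_j\lequiv\kappa_j$), and pick an arbitrary term $\pi$ of $\varphi_j$, which I view as its set of literals. The easy half is immediate: since $\pi$ is one of the disjuncts of $\varphi_j$ we have $\pi\entails\varphi_j\lequiv\kappa_j$, so $\pi$ is an implicant of $\kappa_j$. Everything then reduces to showing $\pi$ is \emph{prime}, i.e.\ that deleting any literal from $\pi$ breaks the implication into $\kappa_j$.

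For primality I would proceed by contradiction. Suppose some strict weakening $\pi'\subsetneq\pi$ still satisfies $\pi'\entails\kappa_j$. First I reduce to a single-literal weakening: choosing any literal $l\in\pi\setminus\pi'$, we have $\pi'\subseteq\pi\setminus\{l\}$, and since adding literals only shrinks the set of models, $\pi\setminus\{l\}\entails\pi'\entails\kappa_j$. Next I form the DNF $\varphi_j'=\big(\varphi_j\setminus\{\pi\}\big)\lor(\pi\setminus\{l\})$ obtained by deleting $l$ from the term $\pi$ inside $\varphi_j$, and check $\varphi_j'\lequiv\kappa_j$: for ``$\varphi_j'\limply\kappa_j$'' each disjunct implies $\kappa_j$ (the ones in $\varphi_j\setminus\{\pi\}$ because they imply $\varphi_j\lequiv\kappa_j$, and $\pi\setminus\{l\}$ by the previous line); for ``$\kappa_j\limply\varphi_j'$'', since $\kappa_j\lequiv\varphi_j=\big(\varphi_j\setminus\{\pi\}\big)\lor\pi$ and $\pi\entails\pi\setminus\{l\}$, we get $\varphi_j\entails\varphi_j'$. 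Hence $l$ would be a redundant literal of a term of $\varphi_j$, contradicting minimality of the DNF. So no such $\pi'$ exists, $\pi$ is a prime implicant of $\kappa_j$, and since $j$ and $\pi$ were arbitrary the lemma follows.

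I do not anticipate a real obstacle; the argument is bookkeeping with the definitions. The two things to be careful about are (i) keeping the entailment direction for terms straight --- more literals means a stronger term, so $\pi\entails\pi\setminus\{l\}$ but not conversely --- and (ii) the reduction from an arbitrary multi-literal weakening $\pi'$ to the single-literal weakening $\pi\setminus\{l\}$, which is precisely what lets the argument cite \emph{literal}-irredundancy rather than some ad hoc multi-literal notion. I would also note explicitly in the write-up that only the literal-irredundancy part of ``minimal DNF GDF'' is used here; the term-irredundancy part and the GDF partition condition \eqref{eq:gdf} play no role in this lemma (they enter the subsequent characterizations of redundancy-free DTs).
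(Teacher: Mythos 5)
Your proposal is correct and follows essentially the same route as the paper's own proof: assume a term is not prime, drop a literal while preserving equivalence of the DNF, and contradict minimality. Your write-up is in fact more careful than the paper's three-line sketch --- in particular the explicit reduction to a single-literal weakening, the two-directional check that $\varphi_j'\lequiv\kappa_j$, and the observation that only literal-irredundancy (not term-irredundancy or the GDF condition) is used --- whereas the paper's version elides these steps and even states the slightly misleading ``$t'_r\equiv t_r$'' where it means equivalence of the resulting DNFs.
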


\begin{proof}
  Suppose a term $t_r$ of the DNF representation of $\kappa_j$ that is
  not a prime implicant of $\kappa_j$. Then, $t_r$ can be simplified
  to $t'_r$, such that $t'_r\equiv{t_r}$. But then the DNF
  representation of $\fml{G}$ would not be minimal; a contradiction.
\end{proof}

\begin{lem} \label{lem:dtgdf2}
  A DT does not exhibit path explanation redundancy iff the
  conjunction of the literals in each path to prediction $c\in\fml{K}$
  represents a prime implicant for the boolean function
  $\kappa(\mbf{x})=c$.
\end{lem}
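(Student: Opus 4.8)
The plan is to prove both implications by reducing the statement to a definitional identity, using the monotonicity machinery already set up for path explanations. Fix a path $R_k\in\fml{R}$ with terminal prediction $c=\varsigma(\tau(R_k))$ and write $\pi_k$ for the conjunction of the literals along $R_k$; by \eqref{eq:rhodef} we may take $\pi_k=\bigwedge_{i\in\mrm{\Phi}(R_k)}(x_i\in\rho(i,R_k))$, so that $\pi_k$ carries exactly one literal per feature of $\mrm{\Phi}(R_k)$, each a proper restriction $\rho(i,R_k)\subsetneq\fml{D}_i$ (here I invoke the standing assumption that the learner never branches on a feature that is constant on the consistent instances, which guarantees these literals are nontrivial). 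By \eqref{eq:ent01}, $\pi_k\entails(\kappa(\mbf{x})=c)$, i.e.\ $\pi_k$ is an implicant and, equivalently, $\mrm{\Phi}(R_k)$ is a weak path AXp for $R_k$ in the sense of \cref{defn:pxp} with explanation function \eqref{eq:pathf}. Since that explanation function was already argued to be monotone-inducing, the minimality test \eqref{eq:axp2b} applies: $\mrm{\Phi}(R_k)$ is a path AXp iff for every $i\in\mrm{\Phi}(R_k)$ the predicate $\waxp(\mrm{\Phi}(R_k)\setminus\{i\})$ is false, which by \eqref{eq:pathf} is precisely the statement that deleting the literal $(x_i\in\rho(i,R_k))$ from $\pi_k$ yields a term that is no longer an implicant of $(\kappa(\mbf{x})=c)$.

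For the ($\Rightarrow$) direction, assume the DT exhibits no path explanation redundancy. Then for every path $R_k$, $\mrm{\Phi}(R_k)$ is a path AXp, so by the equivalence above no proper sub-term of $\pi_k$ obtained by dropping a per-feature literal is an implicant; together with $\pi_k\entails(\kappa(\mbf{x})=c)$ this says exactly that $\pi_k$ is a prime implicant of the boolean function $(\kappa(\mbf{x})=c)$. For the ($\Leftarrow$) direction, assume the conjunction of literals in each path is a prime implicant of $(\kappa(\mbf{x})=\varsigma(\tau(R_k)))$. Fix $R_k$: primality gives that $\pi_k$ is an implicant (so $\mrm{\Phi}(R_k)$ is a weak path AXp) and that removing the literal on any feature $i$ breaks implicant-ness, hence $\waxp(\mrm{\Phi}(R_k)\setminus\{i\})$ fails for all $i\in\mrm{\Phi}(R_k)$; by monotonicity and \eqref{eq:axp2b}, $\mrm{\Phi}(R_k)$ is subset-minimal, i.e.\ a path AXp, so $R_k$ is not an XRP. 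As this holds for all paths, the DT exhibits no path explanation redundancy.

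The one point requiring care — and the step I expect to be the real obstacle — is the matching of granularities: the XRP condition is about irredundancy of the \emph{feature set} $\mrm{\Phi}(R_k)$, whereas "prime implicant" is literally about irredundancy of \emph{literals}. In the boolean case the two coincide verbatim, since a consistent term has at most one literal per variable. In the general ordinal/categorical case one must first aggregate the possibly repeated literals on a feature along $R_k$ into $\rho(i,R_k)$ (as in \cref{fig:runex04}), so that "dropping feature $i$" is literally "dropping the literal $(x_i\in\rho(i,R_k))$"; with this normalization the only strict sub-terms of $\pi_k$ over $\mrm{\Phi}(R_k)$ are the ones obtained by deleting such literals, and the argument closes. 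I would explicitly flag that primality here is understood at the granularity of the path's per-feature literals — the orthogonal possibility of \emph{widening} a literal $\rho(i,R_k)$ to a larger set without removing it (cf.\ the parenthetical remark in \cref{ex:xpfs}) is deliberately outside the scope of path explanation redundancy and does not affect the equivalence being proved.
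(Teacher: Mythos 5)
Your proof is correct and follows essentially the same route as the paper's: both directions reduce to the observation that subset-minimality of $\mrm{\Phi}(R_k)$ as a path AXp is, literal for literal, the irreducibility of the path term as an implicant of $(\kappa(\mbf{x})=c)$. Your version is more careful than the paper's (which is two sentences per direction), in particular in using monotonicity to reduce minimality to single-feature deletion and in explicitly pinning down the feature-versus-literal granularity via $\rho(i,R_k)$ — a point the paper's proof glosses over entirely.
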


\begin{proof}
  If the conjunction of the literals in each path is a prime implicant
  for the boolean function $\kappa(\mbf{x})=c$, then no path in the DT
  exhibits path explanation redundancy; otherwise some path would not
  represent a prime implicant, as assumed by hypothesis.\\
  If the DT exhibits no path path explanation redundancy,
  then we can represent the function $\kappa(\mbf{x})=c$ by a
  disjunction of the conjunctions of the literals in the paths
  predicting $c$. Each disjunct must be irreducible; otherwise we
  would be able to also reduce the explanation for some path.
\end{proof}

\begin{prop} \label{prop:dtgdf}
  A DT $\fml{T}$ does not exhibit path explanation redundancy
  iff there exists a minimal DNF GDF $g$ that is equivalent to
  $\fml{T}$.
\end{prop}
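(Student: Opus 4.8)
The plan is to prove the two implications directly from \cref{lem:dtgdf1} and \cref{lem:dtgdf2}, together with \cref{assump:dts}. Throughout, I read ``$g$ is equivalent to $\fml{T}$'' in the structural sense that, for each class $c_j\in\fml{K}$, the DNF $\varphi_j$ of $g$ is exactly the disjunction, over the paths $R_k\in\fml{R}$ with $\varsigma(\tau(R_k))=c_j$, of the terms $\bigwedge_{(x_i\in{S_l})\in\mrm{\Lambda}(R_k)}(x_i\in{S_l})$; that is, the terms of $g$ are precisely the per-path literal conjunctions of $\fml{T}$, grouped by predicted class. (This is the intended reading: the classification function computed by $\fml{T}$ alone does not determine whether a given DT has path explanation redundancy, so the notion of equivalence must tie the DNF to the tree structure.)

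For the ``only if'' direction, suppose $\fml{T}$ does not exhibit path explanation redundancy, and define $g=\{\varphi_1,\ldots,\varphi_K\}$ as above. First, $g$ is a GDF: by the first condition of \cref{assump:dts}, every point $\mbf{x}\in\mbb{F}$ is consistent with exactly one path, which is the same as saying that $\mbf{x}$ satisfies exactly one term among all terms of $\varphi_1,\ldots,\varphi_K$; hence exactly one $\varphi_j(\mbf{x})$ equals $1$, which is \eqref{eq:gdf}. Second, $g$ is minimal. No term is redundant: by the second condition of \cref{assump:dts}, each path $R_k$ has some point $\mbf{x}$ consistent with it, and by the first condition that point satisfies no other term, so deleting the term of $R_k$ would leave $\mbf{x}$ uncovered and change the represented function. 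No literal of any term is redundant: by \cref{lem:dtgdf2}, since $\fml{T}$ has no path explanation redundancy, the conjunction of the literals in each path to a prediction $c$ is a prime implicant of $(\kappa(\mbf{x})=c)$, and a prime implicant has by definition no redundant literal. Hence $g$ is a minimal DNF GDF equivalent to $\fml{T}$.

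For the ``if'' direction, suppose a minimal DNF GDF $g$ equivalent to $\fml{T}$ exists. By the equivalence convention, each term of each DNF $\varphi_j$ of $g$ is the conjunction of the literals of some path $R_k$ to prediction $c_j$, and every such path contributes one term. By \cref{lem:dtgdf1}, every term of every $\varphi_j$ is a prime implicant of $\kappa_j=(\kappa(\mbf{x})=c_j)$; therefore the conjunction of the literals in each path to prediction $c$ is a prime implicant of $(\kappa(\mbf{x})=c)$. \cref{lem:dtgdf2} then yields that $\fml{T}$ does not exhibit path explanation redundancy.

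The main obstacle is not the logical content, which is essentially bookkeeping over the two lemmas, but pinning down the notion of equivalence: ``computes the same classification function'' is too weak, since a function admitting a minimal DNF GDF can still be computed by some DT with redundant path literals, so one must commit to the structural correspondence between the terms of $g$ and the paths of $\fml{T}$, and check, using \cref{assump:dts} (which excludes logically inconsistent paths and makes ``consistent with a path'' coincide with ``satisfies the corresponding term''), that this correspondence is well defined for arbitrary, possibly non-boolean, feature domains with literals of the form $(x_i\in{S_l})$.
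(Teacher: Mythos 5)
Your proposal is correct and takes essentially the same route as the paper, whose proof simply states that the result follows from \cref{lem:dtgdf1} and \cref{lem:dtgdf2}; you have filled in the details of that derivation. Your explicit pinning down of what ``equivalent'' must mean (a term-to-path structural correspondence rather than mere functional equality) and your appeal to \cref{assump:dts} for the GDF condition and term irredundancy are reasonable elaborations of what the paper leaves implicit.
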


\begin{proof}
  This result follows from \cref{lem:dtgdf1} and \cref{lem:dtgdf2}.
\end{proof}

It should be underscored that minimal DNF GDFs represent a fairly
restricted class of decision sets (DS)~\cite{leskovec-kdd16}, namely
minimal DSs exhibiting no overlap~\cite{ipnms-ijcar18}. The complexity
of computing a minimal DS without overlap is not known, but it is
conjectured to be hard for $\stwop$~\cite{ipnms-ijcar18}. Furthermore,
it is well-known that decision trees represent a far less expressive
language than DSs~\cite{rivest-ml87}.
Thus, most functions represented by DSs cannot be represented by DTs
that correspond to minimal DNF GDFs.
As a result,
\cref{prop:dtgdf} offers further evidence that one should expect
decision trees to be extremely unlikely to exhibit no path explanation
redundancy in practice.

\section{Computing Path Explanations in Decision Trees}
\label{sec:xdt}

Although the finding of formal explanations is computationally hard
for a number of ML
models~\cite{inms-aaai19,barcelo-nips20,ims-ijcai21,ims-sat21,marquis-kr21},
it has been shown that for DTs, one AXp can be computed in polynomial
time~\cite{iims-corr20,hiims-kr21}%
\footnote{%
  Furthermore, recent work has shown that computing a smallest size
  AXp is NP-hard~\cite{barcelo-nips20}.}. 
Moreover, and in the case of CXp's, recent work has shown that the
total number of CXp's is polynomial, and that their enumeration runs
in polynomial time~\cite{hiims-kr21}.
This section refines these earlier results in several ways, proposing
simpler and more efficient algorithms.
More importantly, the section specifically considers algorithms for
path explanations, as opposed to instance-based explanations.
Nevertheless, the changes for computing (path restricted/unrestricted)
AXp's/CXp's are straightforward.

We start by offering a simple approach supporting the rationale for
polynomial-time explainability of DTs.
Afterwards, we propose a simplified variant of an existing
algorithm~\cite{iims-corr20}, and then detail a propositional logic
Horn encoding for the problem of computing one AXp/APXp. The proposed 
encoding allows us to exploit existing algorithms for reasoning about 
propositional Horn formulas.

\subsection{Abductive Path Explanations by Explicit Path Analysis}
\label{ssec:mhs}

Since our goal is to compute a path explanation, we consider a
concrete path $P_k$, a partition $(\fml{P},\fml{Q})$ of the set of
paths $\fml{R}$ in $\fml{T}$, with $P_k\in\fml{P}$ and with prediction
$c=\varsigma(\tau(P_k))$ being the same for all paths in $\fml{P}$,
and with the paths in $\fml{Q}$ yielding a prediction other than $c$. 
Let $F_k=\mrm{\Phi}(P_k)$, i.e.\ the set of features $i$ associated
with the edges of $P_k$. (For computing a path-unrestricted AXp, we
would set $F_k=\fml{F}$.)
Recall from~\cref{ssec:dts} (and~\cref{tab:notation}) that $\rchi_I$
and $\rchi_P$ represent, respectively, the set of features that are
inconsistent between either a point or a path and some other path.
For computing AXp's (i.e.\ given an instance) we will be interested in
$\rchi_I(\mbf{v},Q_l)$ for each $Q_l\in\fml{Q}$.
For computing APXp's (i.e.\ given a path) we will be interested in
$\rchi_P(P_k,Q_l)$ for each $Q_l\in\fml{Q}$.
Since the analysis is similar, we will focus on APXp's.

For the prediction to be guaranteed not to change, due to $Q_l$, at
least one feature in $\rchi_P(P_k,Q_l)$ must not be allowed to change
value.
Thus, one APXp is a (subset-)minimal hitting set of the sets
$\rchi_P(P_k,Q_l)$ ranging over the paths $Q_l$ in $\fml{Q}$.
Furthermore, it is well-known that one subset-minimal hitting set can
be computed in polynomial time~\cite{gottlob-sjc95}. For example, we
can construct a set $\fml{X}$ containing the features in
$\cup_{Q_l\in\fml{Q}}\,\rchi_P(P_k,Q_l)$, and then iteratively remove one
feature from $\fml{X}$ while the resulting set $\fml{X}$ is still a
hitting set of all the $\rchi_P(P_k,Q_l)$. (For AXp's, we would use a
similar argument, but considering instead the sets
$\rchi_I(\mbf{v},Q_l)$.)

\begin{exmp} \label{ex:runex01c}
  Consider again the DT shown in~\cref{fig:runex01}.
  For $P_2=\langle1,3,5,7\rangle$, we have that
  $\rchi_P(P_2,Q_1)=\{1\}$ and $\rchi_P(P_2,Q_2)=\{3\}$.
  Thus, the only minimal hitting set is $\{1,3\}$, and so this
  represents the only APXp for $P_2$.
  Similarly, we could consider the instance $(\mbf{v},c)=((1,1,1),1)$,
  with $\rchi_I((1,1,1),Q_1)=\{1\}$ and $\rchi_I((1,1,1),Q_1)=\{3\}$,
  and so also obtain an AXp $\{1,3\}$. Clearly, since $P_2$ is
  consistent with $\mbf{v}=(1,1,1,1)$, all the APXp's of $P_2$ should
  be AXp's of $\mbf{v}$.
\end{exmp}

\begin{exmp} \label{ex:runex02c}
  Consider again the DT shown in~\cref{fig:runex02}.
  For path $P_1=\langle1,2,4,7,10,15\rangle$, we have that
  $\rchi_P(P_1,Q_1)=\{3\}$,
  $\rchi_P(P_1,Q_2)=\{5\}$, and
  $\rchi_P(P_1,Q_3)=\{2,5\}$.
  Clearly, the only minimal hitting set is $\{3,5\}$ and so this
  represents the only APXp for $P_1$.
  Similarly, we could consider the instance
  $(\mbf{v},c)=((0,0,1,0,1),1)$ and so we would also obtain the AXp
  $\{3,5\}$.
\end{exmp}

\begin{exmp} \label{ex:runex03c}
  With respect to the DT shown in~\cref{fig:runex03}, and for path
  $P_3=\langle1,2,6,11,12\rangle$,
  we have that
  $\rchi_P(P_3,Q_1)=\{3\}$,
  $\rchi_P(P_3,Q_2)=\{2\}$,
  $\rchi_P(P_3,Q_3)=\{1,3\}$, and
  $\rchi_P(P_3,Q_4)=\{1\}$.
  Clearly, the only minimal hitting set is $\{1,2,3\}$ and so this
  represents the only APXp for path $P_3$.
\end{exmp}

The previous examples of explanations can also be viewed as
\emph{path-restricted} AXp's.
The following example reveals the differences to path-unrestricted
AXp's~\cite{iims-corr20}.

\begin{exmp} \label{ex:runex01c1}
  Let us consider the example of~\cref{fig:runex01},
  and path $Q_1=\langle1,2\rangle$.
  In this case, we want to keep the paths $P_1$ and $P_2$
  inconsistent. Hence, $\rchi_P(Q_1,P_1)=\{1\}$ and
  $\rchi_P(Q_1,P_3)=\{1\}$, and so the only APXp is $\{1\}$.
  Let us now consider the $((0,1,0),0)$, which is consistent with path
  $Q_1$.
  In this case we get $\rchi_I((0,1,0),P_1)=\{1,2\}$ and
  $\rchi_I((0,1,0),P_2)=\{1,3\}$, and so the AXp's for the
  instance-based explanation problem are $\{1\}$ and $\{2,3\}$.
  Observe that, given the instance, one can understand the AXp
  $\{2,3\}$. However, in terms of explaining the sufficient conditions
  for the prediction to remain the same, given the values specified by
  the path, then it is clear that $\{1\}$ represents the only
  explanation of interest.
\end{exmp}

An apparent drawback of computing explanations with the algorithm
outlined in this section is that all DT paths must be explicitly
listed, and these require worst-case quadratic space given the number
of nodes in the DT. 
The next sections investigate alternative approaches, which perform
better in practice.

\subsection{Abductive Path Explanations by Tree Traversal}
\label{ssec:trav}

One approach to avoid the issue with explicit path representation is
to iteratively traverse the DT as features are removed from the AXp,
and checking whether the paths to predictions other than $c$ remain
inconsistent. This approach was first described in~\cite{iims-corr20}.
Here, we describe a simpler variant.

\cref{alg:pathdel} summarizes the main steps of the proposed approach
for computing an APXp for a concrete path $P_k$.
(For computing a path-restricted AXp given an instance, we would just
identify and use the same algorithm.)
\begin{algorithm}[t]
%
\SetKwFunction{FindAPXp}{{\sc FindAPXp}} 
\SetKwFunction{ChildNodes}{ChildNodes}
\SetKwFunction{Term}{IsTerminal}
\SetKwFunction{Pred}{Prediction}
\SetKwFunction{HasP}{HasPaths}
\SetKwFunction{Univ}{Universal}
\SetKwFunction{Feats}{Features}
\SetKwFunction{Root}{root}
\SetKwFunction{ChkPaths}{{\sc ExistsConsistentQPath}} 

\Func \FindAPXp{$\fml{T},\fml{F},P_k$} \\ 
\Indp
{
  \lnlset{cna:1}{1}
  $\fml{U}\gets\fml{F}\setminus\mrm{\Phi}(P_k)$\tcp*[r]{Features
    $\not\in$ path also $\not\in$ APXp}
  \lnlset{cna:2}{2}
  \ForEach{$i\in\mrm{\Phi}(P_k)$}{
    \lnlset{cna:3}{3}
    $\fml{U}\gets\fml{U}\cup\{i\}$\tcp*[r]{Tentatively drop $i$ from APXp}
    \lnlset{cna:4}{4}
    \If{$\ChkPaths(P_k,\fml{U},\Root(\fml{T}))$}{
      \lnlset{cna:5}{5}
      $\fml{U}\gets\fml{U}\setminus\{i\}$\tcp*[r]{Feature $i$ must be included in APXp}
    }
  }
  \lnlset{cna:6}{6}
  \Return{$\fml{F}\setminus\fml{U}$}\tcp*[r]{Return APXp}
}
\Indm
\BotBlankLine
%

  \caption{Computing one path explanation (or path-restricted AXp)}
  \label{alg:pathdel}
\end{algorithm}
As shown, for APXp's (and also for path-restricted AXp's given some
instance), the features that are not tested in $P_k$ are declared
universal and added to a working set $\fml{U}$.
(For computing a path-unrestricted AXp, the set $\fml{U}$ would be
initialized to $\emptyset$.)
The remaining features are analyzed one at a time. Each feature $i$ is
tentatively declared universal and~\cref{alg:pathdel} then invokes a
path traversal procedure (see~\cref{alg:chkpaths}) for deciding
whether there can exist a consistent path to a prediction other than
$c$. If such a path exists, then the feature is added back to the set
of features that must not be declared universal.
\begin{algorithm}[t]
  

%


%
\SetKwFunction{ChkPaths}{{\sc ExistsConsistentQPath}} 
\SetKwFunction{CDCall}{ChkDown} %
\SetKwFunction{ChildNodes}{ChildNodes}
\SetKwFunction{Term}{IsTerminal}
\SetKwFunction{Pred}{Prediction}
\SetKwFunction{HasP}{HasPaths}
\SetKwFunction{Univ}{Universal}
\SetKwFunction{Feat}{Feature}
\SetKwFunction{Consistent}{Consistent}
\SetKwFunction{Feature}{Feature}
\SetKwFunction{EdgeSet}{EdgSet}
\SetKwFunction{PathSet}{PathSet}
\SetKwData{hasp}{hasp}
\SetKwData{edgeok}{edgeok}

\Func \ChkPaths{$P_k,\fml{U},r$} \\ 
\Indp
{
  \lnlset{cp:1}{1}
  \If(\tcp*[f]{Decide return value if terminal}){$r\in{T}$}{
    \lnlset{cp:2}{2}
    \If{$\varsigma(r)\not=\varsigma(\tau(P_k))$}{
      \lnlset{cp:3}{3}
      \Return{$\true$}\tcp*[r]{Found consistent path to $d\not=c$}
    }
    \lnlset{cp:4}{4}
    \Else{
      \lnlset{cp:5}{5}
      \Return{$\false$}\tcp*[r]{Not a consistent path to $d\not=c$}
    }
  }
  \lnlset{cp:6}{6}
  $i \gets \phi(r)$\tcp*[r]{Pick feature associated with
    node $r$}
  \lnlset{cp:7}{7}
  \ForEach{$s\in\sigma(r)$}{
    \tcp*[h]{Recursively traverse child nodes, as long as}

    \tcp*[h]{edge values exhibit consistent values}
    
    \lnlset{cp:8}{8}
    \If{$(i\in\fml{U})\lor(\rho(i,P_k)\cap\varepsilon((r,s))\not=\emptyset)$}{
      \lnlset{cp:9}{9}
      \If{$\ChkPaths(P_k,\fml{U},s)$}{
        \lnlset{cp:10}{10}
        \Return{$\true$}\tcp*[r]{Found consistent path to $d\not=c$}
      }
    }
  }
  \lnlset{cp:11}{11}
  \Return{$\false$}\tcp*[r]{Unable to find consistent path to $d\not=c$}
}
\Indm
\BotBlankLine
%

  \caption{Checking consistent path to prediction in $\fml{K}\setminus\{c\}$}
  \label{alg:chkpaths}
\end{algorithm}

As can be observed, \cref{alg:pathdel} iteratively removes features
from the set of features associated with $P_k$. For each feature $i$,
\cref{alg:pathdel} then checks whether there exists some path in
$\fml{Q}$ that can be made consistent. If such path exists, then $i$
must be kept in the set of features sufficient for the prediction.
Clearly, the tree traversal algorithm essentially tests whether the
remaining set of features is still a hitting set of the paths in
$\fml{Q}$, and so shares similarities with the algorithm described
in \cref{ssec:mhs}, without exhibiting the drawback of explicitly
enumerating all the paths in the DT.

The operation of both~\cref{alg:pathdel,alg:chkpaths} is summarized
using the following example.

\begin{exmp}
  We analyze the DT shown in~\cref{fig:runex04}.
  Our goal is to find an APXp for path $P_3=\langle1,3,5\rangle$.
  Let us assume that ~\cref{alg:pathdel} adds feature $1$ to set
  $\fml{U}$, i.e.\ feature 1 is removed from the APXp being
  constructed.
  It is clear that, when the tree traversal is at node 1 (i.e.\ the
  root), it will take the left branch, and reach a terminal node with
  a prediction other that $\tbf{N}$; hence feature 1 must be removed
  from $\fml{U}$ and added to the APXp being constructed.
\end{exmp}

The running time of \cref{alg:pathdel,alg:chkpaths} is clearly
polynomial on the size of the DT.
Given a path  $R_k\in\fml{R}$, the algorithm analyzes the decision
tree for each feature. Hence the running time is in
$\fml{O}(|\fml{T}|\times|\fml{F}|)$.
Moreover,  Algorithm~\ref{alg:pathdel} can be run over all paths
$\fml{R}$ in the DT $\fml{T}$. In this case, the running time is thus
in $\fml{O}(|\fml{T}|\times|\fml{F}|\times|\fml{R}|)$.
As the experimental results demonstrate, the running time of the
algorithm is negligible (when compared with the time to learn the DT)
almost without exception.

\subsection{Abductive Path Explanations by Propositional Horn Encoding}
\label{ssec:hornit}

One additional solution for computing an AXp is to formulate the
problem as finding a minimal correction subset (MCS) of a
propositional Horn formula, and then exploiting existing efficient
algorithms~\cite{amms-sat15,msimp-jelia16}. 
Besides enabling efficient implementations, the Horn encoding allows
for integrating constraints that restrict the feature space by
disallowing points in feature space that violate those
constraints~\cite{rubin-corr21}. As long as the added constraints are
also Horn, and this is the case with propositional rules, then the
complexity of reasoning is unaffected.

The general approach is to formulate a Horn optimization problem
composed of a set of hard clauses $\fml{H}$ (which must be satisfied)
and a set of soft clauses $\fml{B}$ (which ideally one would like to
satisfy). Moreover, we seek an assignment to the variables that finds
a subset-maximal set of clauses from $\fml{B}$ that are satisfied
while satisfying the hard clauses. This problem can be solved in
polynomial time in the case of Horn
formulas~\cite{amms-sat15,msimp-jelia16}, based on the fact that Horn
formulas can be decided in linear time~\cite{minoux-ipl88}. (Observe 
that finding a cardinality maximal solution, i.e.\ solving the MaxSAT
problem for Horn formulas, is NP-hard~\cite{jaumard-ipl87} and the
respective decision problem is NP-complete. Similar results have been
obtained for computing a smallest AXp~\cite{barcelo-nips20}.) 

It is straightforward to devise a naive Horn encoding that mimics the
explicit path representation outlined above in~\cref{ssec:mhs}.
The dropping of each feature from the set of features in a APXp is
represented by a boolean variable $p_i$. Ideally one would prefer to
pick all features, and so the soft clauses are:
$\fml{B}=\{(p_i)\,|\,i\in\mrm{\Phi}(P_k)\}$. Moreover, for each set
$\rchi_P(P_k,Q_l)$, with $Q_l\in\fml{Q}$, representing the features
that are pairwise inconsistent between $Q_l$ and $P_k$,
one creates a Horn clause $(\lor_{i\in\rchi_P(P_k,Q_l)}\neg{p_i})$. 
Clearly, such an encoding does not offer any clear advantage with
respect to the minimal hitting set algorithm, besides exploiting
efficient Horn reasoners, since both approaches are based on explicit
enumeration of all tree paths.
A different approach, which avoids the worst-case quadratic
representation on the size of the DT, is to devise a Horn encoding
that bypasses the step of enumerating the paths in the DT. The main
goal of this section is to propose such an encoding.

Let us consider a path $P_k\in\fml{P}$, with prediction
$c\in\fml{K}$.
Moreover, let $\fml{Q}$ denote the paths yielding a prediction other
than $c$. Since the prediction is $c$, then any path in $\fml{Q}$ has
some feature for which the allowed values are inconsistent with
$\mbf{v}$. We say that the paths in $\fml{Q}$ are \emph{blocked}.
(To be clear, a path is blocked as long as some of its literals are
inconsistent.)

For each feature $i$ associated with some node of path $P_k$,
introduce a variable $u_i$. $u_i$ denotes whether feature $i$ is
deemed \emph{universal}, i.e.\ feature $i$ is not included in the
APXp that we will be computing.
(Our goal is to find a subset maximal set of features that can be
deemed universal, such that all the paths resulting in a prediction
other than $c$ remain blocked. Alternatively, we seek to find a
subset-minimal set of features to declare non-universal or fixed, such
that paths with a prediction other than $c$ remain blocked.)
Furthermore, for each DT node $r$, introduce variable $b_r$, denoting
that all sub-paths from node $r$ to any terminal node labeled
$d\in\fml{K}\setminus\{c\}$ must be blocked, i.e.\ some literal in the
sub-path must remain inconsistent. (Our goal is to guarantee that all
paths to terminal nodes labeled $d\in\fml{K}\setminus\{c\}$ remain
blocked even when some variables are allowed to become universal.)

We proceed to describe the proposed Horn encoding. Here, we opt to
describe first the Horn encoding for computing a path-unrestricted
AXp. Afterwards, we describe the Horn encoding for computing a
path-restricted AXp (or an APXp).

First, for a path-unrestricted AXp, the soft clauses $\fml{B}$ are
given by, $\{(u_i)\,|\,i\in\fml{F}\}$.
In contrast, for APXp's and for path-restricted AXp's, the soft
clauses $\fml{B}$ are given by, $\{(u_i)\,|\,i\in\mrm{\Phi}(P_k)\}$.
In both cases, the goal is that one would ideally want to declare
universal as many features as possible (among those that one can
pick), thus minimizing the size of the explanation. (As noted above,
we will settle for finding subset-maximal solutions.)
We describe next the hard constraints $\fml{H}$ for representing
consistent assignments to the $u_i$ variables.
For \emph{path-unrestricted} AXp's~\cite{iims-corr20}, the hard 
constraints are created as follows:
\begin{enumerate}[label=\textbf{H\arabic*.},ref=\textbf{H\arabic*}] 
  \label{enum:cases}
\item For the root node $r$, add the constraint $\top\limply{b_r}$.\\
  (The root node must be blocked.)
  \label{enum:cases:stp01}
\item For each terminal node $r$ with prediction $c$, add the
  constraint $\top\limply{b_r}$.\\
  (Each terminal node with prediction $c$ is also blocked. Also,
  observe that this condition is on the node, not on the path.)
  \label{enum:cases:stp02}
\item For each terminal node $r$ with prediction
  $d\in\fml{K}\setminus\{c\}$, add the constraint
  ${b_r}\limply\bot$.\\
  (Terminal nodes predicting $d\not=c$ cannot be blocked. Also, and as
  above, observe that this condition is on the node, not on the path.)
  \label{enum:cases:stp03}
\item For a node $r$ associated with feature $i$, and connected to the
  child node $s$, such that the edge value(s) is(are)
  \emph{consistent} with the value of feature $i$ in $\mbf{v}$, add
  the constraint $b_r\limply{b_s}$.\\
  (If all sub-paths from node $r$ must be blocked, then all sub-paths
  from node $s$ must all be blocked, independently of the value taken
  by feature $i$.)
  \label{enum:cases:stp04}
\item For a node $r$ associated with feature $i$, and connected to the
  child node $s$, such that the edge value(s) is(are)
  \emph{inconsistent} with the value of feature $i$ in $\mbf{v}$, add
  the constraint $b_r\land{u_i}\limply{b_s}$.\\
  (In this case, the blocking condition along an edge inconsistent
  with the value of feature $i$ in $\mbf{v}$ is only relevant if the
  feature is deemed universal.)
  \label{enum:cases:stp05}
\end{enumerate}

\begin{exmp} \label{ex:runex02d1}
  For the running example of~\cref{fig:runex02}, let
  $(\mbf{v},c)=((0,0,1,0,1),1)$.
  As dictated by the proposed Horn encoding, two sets of variables are
  introduced. The first set represents the variables denoting whether
  a feature is universal, corresponding to 5 variables:
  $\{u_1,u_2,u_3,u_4,u_5\}$. The second set represents the variables
  denoting whether a node is blocked, corresponding to 15 variables:
  $\{b_1,b_2,b_3,b_4,b_5,b_6,b_7,b_8,b_9,b_{10},b_{11},b_{12},b_{13},b_{14},b_{15}\}$.
  The resulting propositional Horn encoding contains hard ($\fml{H}$)
  and soft ($\fml{B}$) constraints, and it is organized as shown
  in~\cref{ex:tab01}.
  \begin{table}[t]
    \begin{center}
    \begin{tabular}{cc} \toprule 
      Hard constraint type & Horn clauses \\ \toprule
      \Cref{enum:cases:stp01} & $\{(b_1)\}$ 
      \\ \midrule
      \Cref{enum:cases:stp02} &
      $\{(b_3),(b_9),(b_{11}),(b_{13}),(b_{15})\}$
      \\ \midrule
      \Cref{enum:cases:stp03} &
      $\{(\neg{b_6}),(\neg{b_{12}}),(\neg{b_{14}})\}$
      \\ \midrule
      \Cref{enum:cases:stp04} &
      $\begin{array}{l}
        \{\,
        (b_1\limply{b_2}),(b_2\limply{b_4}),(b_4\limply{b_7}),(b_5\limply{b_8}),\\
        ~~(b_7\limply{b_{10}}),(b_8\limply{b_{13}}),(b_{10}\limply{b_{15}})
        \,\}
      \end{array}$
      \\ \midrule
      \Cref{enum:cases:stp05} &
      $\begin{array}{l}
        \{\,
        (b_1\land{u_1}\limply{b_3}),(b_2\land{u_2}\limply{b_5}),
        (b_4\land{u_3}\limply{b_6}), \\
        ~~(b_5\land{u_4}\limply{b_9}),(b_7\land{u_4}\limply{b_{11}}),
        (b_8\land{u_5}\limply{b_{12}}),\\
        ~~(b_{10}\land{u_5}\limply{b_{14}})
        \,\}
      \end{array}$
      \\
      \toprule
      Soft constraints, $\fml{B}$ & $\{(u_1),(u_2),(u_3),(u_4),(u_5)\}$ \\
      \bottomrule
    \end{tabular}
  \end{center}
    \caption{Horn clauses for the DT of~\cref{fig:runex02} for
      computing one AXp with $(\mbf{v},c)=((0,0,1,0,1),1)$}
    \label{ex:tab01}
  \end{table}
   
  It is easy to see that, if $u_1=u_2=u_3=u_4=u_5=1$, then $\fml{H}$
  is falsified. Concretely, 
  $(b_1)\land%
  (b_1\limply{b_2})\land%
  (b_2\limply{b_4})\land({u_3})\land%
  (b_4\land{u_3}\limply{b_6})\land(\neg{b_6})\nentails\bot$.
  The goal is then to find a maximal subset $\fml{S}$ of $\fml{B}$
  such that $\fml{S}\cup\fml{H}$ is consistent. (Alternatively, the
  algorithm finds a minimal set $\fml{C}\subseteq\fml{B}$, such that
  $\fml{B}\setminus\fml{C}\cup\fml{H}$ is consistent.)
  For this concrete example, one such minimal set is obtained by
  picking $u_1=u_2=u_4=1$ and $u_3=u_5=0$, and by setting
  $b_1=b_2=b_3=b_4=b_5=b_7=b_8=b_9=b_{10}=b_{11}=b_{13}=b_{15}=1$
  and
  $b_6=b_{12}=b_{14}=0$. Hence, all clauses are satisfied, and so
  $\{3,5\}$ is a weak AXp. An MCS
  extractor~\cite{mshjpb-ijcai13,mpms-ijcai15,mipms-sat16} would
  confirm that $\{3,5\}$ is subset-minimal, and so it is an AXp.
\end{exmp}

Similarly, we can consider \emph{path-restricted}
AXp's~\cite{iims-corr20} (or APXp's).
As noted earlier, in this case, the soft clauses $\fml{B}$ are given
by $\{(u_i)\,|\,i\in\mrm{\Phi}(P_k)\}$.
The previous encoding can be modified to reflect the computation of a 
path-restricted AXp (and also an APXp), where a point
$\mbf{v}\in\mbb{F}$ is no longer assumed. The changes to the previous
encoding are as follows:
\begin{enumerate}[label=\textbf{H$'$\arabic*.},ref=\textbf{H$'$\arabic*},start=4]
  \label{enum:cases2}
\item For a node $r$ associated with feature $i$, and connected to the
  child node $s$, such that the edge value(s) is(are)
  \emph{consistent} with the value of feature $i$ tested in path
  $P_k$, or if feature $i$ is not included in $\Phi(P_k)$, then add
  the constraint $b_r\limply{b_s}$.\\
  (If all sub-paths from node $r$ must be blocked, then all sub-paths
  from node $s$ must all be blocked, independently of the value taken
  by feature $i$.)
  \label{enum:cases2:stp04}
\item For a node $r$ associated with feature $i$, and connected to the
  child node $s$, such that the edge value(s) is(are)
  \emph{inconsistent} with the consistent values of feature $i$ in
  path $P_k$, then add the constraint $b_r\land{u_i}\limply{b_s}$.\\
  (In this case, the blocking condition along an edge inconsistent
  with the consistent values of feature $i$ along $P_k$ is only
  relevant if the feature is deemed universal.)
  \label{enum:cases2:stp05}
\item For each feature $i$ not included in $\Phi(P_k)$, add the unit
  clause $(u_i)$.
  (Features not tested along $P_k$ must not be included in the
  explanation.)
  \label{enum:cases2:stp06}
\end{enumerate}
Concretely, the features not in the path must \emph{not} be included
in a path-restricted AXp or in an APXp.

\begin{exmp} \label{ex:runex02d2}
  For the running example of~\cref{fig:runex02}, and again with
  $(\mbf{v},c)=((0,0,1,0,1),1)$, the path consistent with $\mbf{v}$ is
  $P_1=\langle1,2,4,7,10,15\rangle$. We use the same sets of variables
  as in~\cref{ex:runex02d1}. 
  The resulting propositional Horn encoding contains hard ($\fml{H}$)
  and soft  ($\fml{B}$) constraints, and consists of the following
  constraints shown in~\cref{ex:tab02}.
  \begin{table}[t]
    \begin{center}
    \begin{tabular}{cc} \toprule 
      Hard constraint type & Horn clauses \\ \toprule
      \Cref{enum:cases:stp01} & $\{(b_1)\}$ 
      \\ \midrule
      \Cref{enum:cases:stp02} &
      $\{(b_3),(b_9),(b_{11}),(b_{13}),(b_{15})\}$
      \\ \midrule
      \Cref{enum:cases:stp03} &
      $\{(\neg{b_6}),(\neg{b_{12}}),(\neg{b_{14}})\}$
      \\ \midrule
      \Cref{enum:cases2:stp04} &
      $\begin{array}{l}
        \{\,
        (b_1\limply{b_2}),(b_2\limply{b_4}),(b_4\limply{b_7}),(b_5\limply{b_8}),\\
        ~~(b_7\limply{b_{10}}),(b_8\limply{b_{13}}),(b_{10}\limply{b_{15}})
        \,\}
      \end{array}$
      \\ \midrule
      \Cref{enum:cases2:stp05} &
      $\begin{array}{l}
        \{\,
        (b_1\land{u_1}\limply{b_3}),(b_2\land{u_2}\limply{b_5}),
        (b_4\land{u_3}\limply{b_6}), \\
        ~~(b_5\land{u_4}\limply{b_9}),(b_7\land{u_4}\limply{b_{11}}),
        (b_8\land{u_5}\limply{b_{12}}),\\
        ~~(b_{10}\land{u_5}\limply{b_{14}})
        \,\}
      \end{array}$
      \\ \midrule
      \Cref{enum:cases2:stp06} &
      $\emptyset$ -- all features in path
      \\
      \toprule
      Soft constraints, $\fml{B}$ & $\{(u_1),(u_2),(u_3),(u_4),(u_5)\}$ \\
      \bottomrule
    \end{tabular}
    \end{center}
    \caption{Horn clauses for the DT of~\cref{fig:runex02} for
      computing one APXp with $P_1=\langle1,2,4,7,10,15\rangle$}
    \label{ex:tab02} 
  \end{table}
  i.e.\ the difference are the clauses forcing some features not to be
  included in explanations.
\end{exmp}

\begin{exmp} \label{ex:runex02d23}
  We use again the running example of~\cref{fig:runex02}, but now we
  consider the path $P_4=\langle1,2,5,9\rangle$, e.g.\ by picking for
  example the instance $(\mbf{v},c)=((0,1,1,1,0),1)$.
  As before, we use the same sets of variables as
  in~\cref{ex:runex02d1}. 
  The resulting propositional Horn encoding contains hard ($\fml{H}$)
  and soft  ($\fml{B}$) constraints, and consists of the following
  constraints shown in~\cref{ex:tab03}.
  \begin{table}[t]
    \begin{center}
    \begin{tabular}{cc} \toprule 
      Hard constraint type & Horn clauses \\ \toprule
      \Cref{enum:cases:stp01} & $\{(b_1)\}$ 
      \\ \midrule
      \Cref{enum:cases:stp02} &
      $\{(b_3),(b_9),(b_{11}),(b_{13}),(b_{15})\}$
      \\ \midrule
      \Cref{enum:cases:stp03} &
      $\{(\neg{b_6}),(\neg{b_{12}}),(\neg{b_{14}})\}$
      \\ \midrule
      \Cref{enum:cases2:stp04} &
      $\begin{array}{l}
        \{\,
        (b_1\limply{b_2}),
        (b_2\limply{b_5}),
        (b_4\limply{b_6}),
        (b_4\limply{b_7}), \\
        ~~(b_5\limply{b_9}), 
        (b_7\limply{b_{11}}),
        (b_8\limply{b_{12}}),
        (b_8\limply{b_{13}}),\\
        ~~(b_{10}\limply{b_{14}}),
        (b_{10}\limply{b_{15}})
        \,\}
      \end{array}$
      \\ \midrule
      \Cref{enum:cases2:stp05} &
      $\begin{array}{l}
        \{\,
        (b_1\land{u_1}\limply{b_3}),
        (b_2\land{u_2}\limply{b_4}), \\
        ~~(b_5\land{u_4}\limply{b_8}),
        (b_7\land{u_4}\limply{b_{10}}) 
        \,\}
      \end{array}$
      \\ \midrule
      \Cref{enum:cases2:stp06} &
      $\{(u_3),(u_5)\}$
      \\
      \toprule
      Soft constraints, $\fml{B}$ & $\{(u_1),(u_2),(u_3),(u_4),(u_5)\}$ \\
      \bottomrule
    \end{tabular}
    \end{center}
    \caption{Horn clauses for the DT of~\cref{fig:runex02} for
      computing one APXp with $P_4=\langle1,2,5,9\rangle$}
    \label{ex:tab03}
  \end{table}
  i.e.\ the difference are the clauses forcing some features not to be
  included in explanations.
  \\
  As can be observed, any solution will set $u_3=u_5=1$. It must also
  be the case that $u_2=0$ and $u_4=0$. However, we can safely set
  $u_1=1$. Hence the APXp is $\{2,4\}$.
\end{exmp}

Finally, and as hinted above, we observe that the same formulation can
be used for computing a smallest AXp, by finding a cardinality-minimal
instead of a subset-minimal set of true variables $u_i$. It is
well-known that both problems, i.e.\ computing a smallest explanation
and solving Horn MaxSAT, are hard for
NP~\cite{jaumard-ipl87,barcelo-nips20}.
Thus, we have the following result.
\begin{prop}
  Each maximum cost solution of the Horn formulation yields a
  cardinality-minimal AXp.
\end{prop}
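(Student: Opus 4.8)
The plan is to reduce the statement to two ingredients: (i) the Horn formulation is sound and complete for weak AXp's, in the sense that a set $\fml{X}\subseteq\fml{F}$ is a weak AXp iff the partial assignment that sets $u_i=0$ for $i\in\fml{X}$ and $u_i=1$ for $i\in\fml{F}\setminus\fml{X}$ extends to a model of $\fml{H}$; and (ii) a weak AXp of minimum cardinality is automatically subset-minimal, hence an AXp. Granting (i) and (ii), the proposition is immediate: in a maximum-cost solution the number of satisfied soft clauses $(u_i)$ — equivalently, the number of features declared universal — is as large as possible, so the complementary set $\fml{X}=\{i\in\fml{F}\,:\,u_i=0\}$ is, by (i), a weak AXp of minimum size among all weak AXp's; by (ii) it is an AXp, and it is cardinality-minimal among AXp's since every AXp is in particular a weak AXp.

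First I would prove (i) by induction on the structure of the DT, using the intended reading of the blocking variables: $b_r=1$ should mean that, once the chosen universal features are freed, every sub-path from $r$ to a terminal labeled $d\in\fml{K}\setminus\{c\}$ still contains an inconsistent literal. Constraints of type \textbf{H1}/\textbf{H2} force the root and every $c$-labeled terminal to be blocked; type \textbf{H3} forbids a $d$-labeled terminal from being blocked; type \textbf{H4} propagates blocking down an edge whose literal is consistent with $\mbf{v}$ (or, for the path variant, with $\mrm{\Lambda}(P_k)$); type \textbf{H5} propagates blocking down an inconsistent edge only when the associated feature is universal; and for the path-restricted / APXp encoding the unit clauses $(u_i)$ for features not on $P_k$ force those features to stay fixed, in agreement with \cref{prop:dtpathxp}. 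Reading these implication clauses as defining the least model of $\fml{H}$ given the $u_i$, one shows that $\fml{H}$ is satisfiable together with a chosen $u$-assignment exactly when no path to a class $d\neq c$ survives after freeing $\{i:u_i=1\}$ — which by \eqref{eq:ent01} and the path-based explanation function \eqref{eq:pathf} is precisely the condition $\waxp(\fml{F}\setminus\{i:u_i=1\})$.

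Next I would dispatch (ii): if $\fml{X}$ is a minimum-cardinality weak AXp but not subset-minimal, some $\fml{X}'\subsetneq\fml{X}$ has $\waxp(\fml{X}')$ with $|\fml{X}'|<|\fml{X}|$, contradicting minimality; hence $\fml{X}$ meets \eqref{eq:axp2a} and is an AXp. (Monotonicity of $\waxp$, used throughout the paper, also yields this directly via \eqref{eq:axp2b}.) The main obstacle will be the bookkeeping in step (i): one must verify that the implication-only clauses really do have a least model realizing the intended semantics of $b_r$, and that "$\fml{H}$ satisfiable with the given $u_i$" is equivalent to "the freed features leave every $d$-path blocked", while correctly handling features that recur on several nodes of a path (as in \cref{fig:runex04}) and the multi-valued edge literals $\rho(i,P_k)\cap\varepsilon((r,s))$. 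Once this correspondence is established, maximizing $\sum_i u_i$ over models of $\fml{H}$ is, by (i), exactly minimizing $|\fml{X}|$ over weak AXp's, and by (ii) over AXp's, which is the claim.
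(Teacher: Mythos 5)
Your proposal is correct, and it in fact supplies more than the paper does: the paper states this proposition without proof, treating it as an immediate consequence of the correctness of the Horn encoding established informally via the constraint types \textbf{H1}--\textbf{H5} and the worked examples. Your decomposition into (i) the equivalence between models of $\fml{H}$ under a given $u$-assignment and weak AXp's, and (ii) the observation that a cardinality-minimal weak AXp is automatically subset-minimal (by monotonicity of $\waxp$), is exactly the intended argument, and your final step correctly identifies maximizing $\sum_i u_i$ with minimizing $|\{i : u_i=0\}|$ over weak AXp's. The only caveat is the one you already flag yourself: a fully rigorous treatment requires verifying the least-model semantics of the implication clauses for the $b_r$ variables (including repeated features along a path and set-valued edge literals), which neither you nor the paper carries out in detail, but the structure of the argument is sound.
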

Observe that, by enumerating Horn MaxSAT solutions, we are able to
enumerate smallest AXp's.

\subsection{Contrastive Path Explanations}
\label{ssec:cxps}

Given a path $P_k\in\fml{R}$ in a DT, with prediction $c\in\fml{K}$,
one can consider \emph{any} instance consistent with $P_k$, and
compute a CXp using the polynomial-time algorithm recently proposed
in~\cite{hiims-kr21}. Since CXp's in DTs are constructed by path
analysis, being limited to at most one per path with a different
prediction, this immediately implies that their number is limited to
the number of paths.
Furthermore, a CXp associated with some path $Q_l$ is declared
redundant if some other path $Q_s$ reveals a CXp with a stricter
subset of the features provided by $Q_l$. Thus, we can conclude that
the features associated with each CXp of an instance $(\mbf{v},c)$
consistent with $P_k$ must correspond to the CXp associated with some
path $Q_s$.

Nevertheless, instance-based CXp's can contain features that are not
even tested in path $P_k$. Given a path $P_k$ with prediction $c$, a
path $Q_l$ to a prediction other than $c$ may test a feature not
tested in $P_k$. Hence, a CXp could report features not tested in
$P_k$.
Furthermore, using hitting set dualization for enumerating abductive
explanations will require adapting existing algorithms to filter out
features not tested in path $P_k$.
This section details a more direct solution, one that takes into
account both the generalized (literals obtained from those used in
tree) and the restricted (literals obtained from those used in path
$P_k$) aspects of path explanations in DTs.
The computed explanations will be path contrastive explanations
(CPXp's), and so subsets of actual CXp's for a concrete instance.

The proposed algorithm is based on earlier work~\cite{hiims-kr21},
with a few minor modifications:
\begin{enumerate}
\item Analyze each path $Q_l$ in $\fml{Q}$ with prediction in
  $\fml{K}\setminus\{c\}$.
\item Traverse the path $Q_l$, ignore features that are not tested
  along $P_k$, and record in $\fml{C}$ the features with literals
  inconsistent with those in $P_k$, i.e.\ for a given feature $i$,
  $\rho(i,Q_l)\cap\rho(i,P_k)=\emptyset$.
\item Aggregate the computed sets of features $\fml{C}$, and keep the
  ones that are subset-minimal.
\end{enumerate}

The previous algorithm runs in worst-case time
$\fml{O}(m\times|\fml{Q}|)$. Furthermore, given recent results on the
number of CXp's in DTs~\cite{hiims-kr21}, the number of CPXp's
is bounded by $|\fml{Q}|$.

\begin{exmp}
  With respect to the running example shown in~\cref{fig:runex02}, and
  path $P_4=\langle1,2,5,9\rangle$, the algorithm would execute as
  follows:
  \begin{itemize}
  \item[] $Q_1$: $\fml{C}_1=\{2\}$.
  \item[] $Q_2$: $\fml{C}_2=\{2,4\}$; drop $\fml{C}_2$.
  \item[] $Q_4$: $\fml{C}_3=\{4\}$.
  \end{itemize}
  Hence, the reported CPXp's would be: $\{\{2\},\{4\}\}$.
\end{exmp}

The fact that all CXp's can be enumerated in polynomial-time, offers
an alternative to compute a smallest AXp that differs from the one
proposed in~\cref{ssec:hornit}. Indeed, a minimum-size (or
minimum-cost) hitting set of the CXp's represents a smallest AXp.
\begin{prop} \label{prop:minhs}
  A minimum-cost hitting set of the CXp's is a smallest AXp, and 
  vice-versa.
\end{prop}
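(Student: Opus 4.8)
The plan is to derive this statement as a direct corollary of the minimal hitting-set duality between AXp's and CXp's (Proposition~\ref{prop:xpdual}), upgraded from subset-minimality to cardinality-minimality. First I would recall the duality: $\mbb{A}$ and $\mbb{C}$ form mutual minimal-hitting-set systems, i.e.\ every AXp is a minimal hitting set of $\mbb{C}$ and every CXp is a minimal hitting set of $\mbb{A}$. From this, the key structural fact I need is that \emph{every} hitting set of $\mbb{C}$ (minimal or not) is a weak AXp, and conversely every weak AXp is a hitting set of $\mbb{C}$. One direction follows from Proposition~\ref{prop:exxp}: if $\fml{H}$ hits every CXp, then $\fml{F}\setminus\fml{H}$ contains no CXp, hence $\wcxp(\fml{F}\setminus\fml{H})$ is false (using monotonicity of $\wcxp$ from Definition~\ref{def:monof}), so $\waxp(\fml{H})$ holds. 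The other direction is symmetric: a weak AXp $\fml{X}$ cannot have $\fml{F}\setminus\fml{X}$ containing a CXp, so $\fml{X}$ intersects every CXp.

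Next I would carry out the cardinality argument in both directions. For ``$\Rightarrow$'': let $\fml{H}$ be a minimum-cost hitting set of $\mbb{C}$. By the above, $\fml{H}$ is a weak AXp; it is also subset-minimal, since any proper subset of a minimum-cost hitting set that were still a hitting set would contradict minimality of $\fml{H}$ — and by the correspondence that subset would be a weak AXp, so $\fml{H}$ would not be subset-minimal as a weak AXp either, i.e.\ $\fml{H}\in\mbb{A}$. To see $\fml{H}$ is a \emph{smallest} AXp, suppose some $\fml{X}\in\mbb{A}$ has smaller cost; then $\fml{X}$ is a hitting set of $\mbb{C}$ of smaller cost than $\fml{H}$, contradicting that $\fml{H}$ is minimum-cost. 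For ``$\Leftarrow$'': let $\fml{X}$ be a smallest (minimum-cost) AXp. Then $\fml{X}$ is a hitting set of $\mbb{C}$. If some hitting set $\fml{H}$ of $\mbb{C}$ had strictly smaller cost, then $\fml{H}$ is a weak AXp, hence contains an AXp $\fml{X}'\subseteq\fml{H}$ of cost no larger than that of $\fml{H}$, contradicting minimality of $\fml{X}$. The ``vice-versa'' phrasing in the statement is exactly this second direction, so both are covered.

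I expect the main obstacle to be purely a matter of care rather than depth: making sure the correspondence ``hitting set of $\mbb{C}$ $\iff$ weak AXp'' is stated and used correctly, including the handedness of the monotonicity (weak AXp's are upward-closed, weak CXp's are upward-closed, but CXp's hit AXp's via the \emph{complement} relation of Proposition~\ref{prop:exxp}). The only other subtlety is that ``cost'' should be read as an arbitrary positive weight function on features (or just cardinality); the argument above only uses monotonicity of cost under taking supersets, which holds in either reading, so I would simply remark that the proof is identical for weighted and unweighted versions. No genuinely hard step arises; the result is a clean consequence of Proposition~\ref{prop:xpdual} together with the observation that shrinking a hitting set to a minimal one never increases cost.
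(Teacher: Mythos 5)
Your proposal is correct. The paper in fact states \cref{prop:minhs} without any explicit proof, treating it as an immediate consequence of the MHS duality of \cref{prop:xpdual}; your argument supplies exactly the intended derivation, and the details you fill in (every hitting set of $\mbb{C}$ is a weak AXp and conversely, via \cref{prop:exxp} and monotonicity, followed by the two cardinality comparisons) are all sound.
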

Thus, smallest AXp's can also be enumerated by enumerating minimum-cost
hitting sets. (Also, there is a dual result
regarding~\cref{prop:minhs}, its practical uses are unclear, since
the number of AXp's may be exponentially large.)

\subsection{Enumeration of Path Explanations} \label{ssec:exdt}

The enumeration of multiple (or all) abductive or contrastive
explanations can help human decision makers to develop a better
understanding for the reasons of some prediction, but also to gain a
better perception of the underlying classifier.
Recent work~\cite{darwiche-ijcai18} compiles a decision function into a
Sentential Decision Diagram (SDD), from which the enumeration of
AXp's can be instrumented. Moreover, from a compiled representation of
the AXp's, each AXp can be reported in polynomial time. The downside
is that these representations are worst-case exponential in the size
of the original ML model. Furthermore, it is unclear how compilation
could be applied to the case of DTs.
Another line of work for computing AXp's is based on iterative
entailment checks using an NP-oracle~\cite{inms-aaai19}, with
enumeration studied in more recent
work~\cite{inams-aiia20,ims-sat21}.
For classifiers for which AXp's and CXp's can be computed in
polynomial time, a number of alternative algorithms have also been
studied in recent work~\cite{msgcin-nips20,msgcin-icml21,hiims-kr21},
which guarantee that a single NP (in fact SAT) oracle call is required
for each computed AXp or CXp.
This section develops a solution for the enumeration of APXp's which
builds on existing approaches for the enumeration of minimal hitting
sets (MHSes). Despite a number of differences, the approach can be
related with recent work~\cite{hiims-kr21}.
A key insight is that exactly one call to a SAT oracle is required for
each computed AXp, even if the computed AXp's are subset-minimal. This
can in general be formalized as follows.

\begin{prop}
  If the computation of one AXp and one CXp runs in polynomial time,
  then there is an algorithm for the simultaneous enumeration of AXp's
  and CXp's that requires one SAT oracle call per computed AXp or CXp.
\end{prop}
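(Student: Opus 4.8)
The plan is to adapt the MARCO-style implicit hitting-set enumeration scheme to the pair $(\mbb{A},\mbb{C})$, exploiting the minimal hitting-set duality of \cref{prop:ndual} together with the negation relationship of \cref{prop:exxp}. First I would introduce a \emph{map formula} $\fml{M}$ over fresh Boolean variables $p_1,\dots,p_m$, one per feature, where a model $\mbf{p}$ is read off as the selected set $\fml{S}=\{i\in\fml{F}\,|\,p_i=1\}$; initially $\fml{M}$ contains no clauses. The enumeration loop then does the following: call the SAT oracle on $\fml{M}$; if it is unsatisfiable, terminate; otherwise take a model, extract $\fml{S}$, and note by \cref{prop:exxp} that exactly one of $\waxp(\fml{S})$ and $\wcxp(\fml{F}\setminus\fml{S})$ holds. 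In the first case $\fml{S}$ is a weak AXp, so by the monotonicity of $\waxp$ (\cref{def:monof}) the polynomial-time one-AXp procedure of \cref{ssec:trav} minimizes it to an AXp $\fml{X}\subseteq\fml{S}$; report $\fml{X}$ and add to $\fml{M}$ the clause $\bigvee_{i\in\fml{X}}\neg p_i$, which forbids exactly the supersets of $\fml{X}$. In the second case $\fml{F}\setminus\fml{S}$ is a weak CXp, so the polynomial-time one-CXp procedure of \cref{ssec:cxps} minimizes it to a CXp $\fml{Y}\subseteq\fml{F}\setminus\fml{S}$; report $\fml{Y}$ and add the clause $\bigvee_{i\in\fml{Y}}p_i$, which forbids every $\fml{S}$ whose complement contains $\fml{Y}$. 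Hence each SAT call produces exactly one newly computed explanation, and all remaining work per iteration is deterministic polynomial time, giving the claimed call bound.

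Next I would establish correctness in three steps. (i) \emph{No repetition}: if $\fml{X}$ coincided with a previously reported AXp, the blocking clause $\bigvee_{i\in\fml{X}}\neg p_i$ would force $\fml{S}\not\supseteq\fml{X}$, contradicting $\fml{X}\subseteq\fml{S}$; the CXp case is symmetric. (ii) \emph{No premature blocking}: a clause added for a reported AXp $\fml{X}$ can only be falsified by supersets of $\fml{X}$, and since AXp's are subset-minimal, hence pairwise incomparable, no other AXp is a superset of $\fml{X}$; dually for CXp's. (iii) \emph{Completeness at termination}: when $\fml{M}$ becomes unsatisfiable, suppose some AXp $\fml{X}^{\ast}$ were not yet reported, and take the candidate $\fml{S}=\fml{X}^{\ast}$. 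It satisfies every AXp-blocking clause (incomparability of AXp's) and, because $\fml{X}^{\ast}$ is a minimal hitting set of all CXp's by \cref{prop:ndual}, it intersects every reported CXp and so satisfies every CXp-blocking clause; thus $\fml{S}$ models $\fml{M}$, contradicting unsatisfiability. The symmetric argument, using the dual direction of \cref{prop:ndual}, handles any unreported CXp. Therefore on termination every AXp and every CXp has been output exactly once.

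Finally I would note that termination is immediate: there are finitely many AXp's and CXp's, each produced at most once, and every iteration consumes exactly one SAT call and produces one of them, so the procedure halts after $|\mbb{A}|+|\mbb{C}|+1$ oracle calls, the last being the concluding unsatisfiable check. Combining the per-iteration accounting with this bound yields the statement.

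The step I expect to be the main obstacle is the completeness argument (iii): one must verify that the two families of blocking clauses, taken together, never exclude a model corresponding to an unreported explanation, and this is exactly where minimal hitting-set duality is indispensable — without \cref{prop:ndual}, an unreported AXp need not hit all already-found CXp's and the map could become unsatisfiable prematurely. A secondary subtlety is that the dichotomy test "$\waxp(\fml{S})$ versus $\wcxp(\fml{F}\setminus\fml{S})$" and both minimization procedures must run in deterministic polynomial time with no further oracle calls; this is precisely the hypothesis of the proposition, realized for decision trees by the algorithms of \cref{sec:xdt}, while \cref{prop:exxp} guarantees the test is well-defined so that each iteration is unambiguous.
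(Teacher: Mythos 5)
Your proposal is correct and follows essentially the same route as the paper's proof: a MARCO-style loop over a map formula with one selector variable per feature, where each SAT call yields a candidate set that is classified as a weak AXp or weak CXp via the dichotomy of \cref{prop:exxp}, minimized in polynomial time, reported, and blocked with a negative (AXp) or positive (CXp) clause until unsatisfiability. The only difference is that you spell out the no-repetition, no-premature-blocking, and duality-based completeness arguments that the paper leaves implicit, which is a welcome elaboration rather than a departure.
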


\begin{proof}
  Consider the propositional encodings proposed
  in~\cref{ssec:hornit}. We build $\fml{H}$ iteratively as follows.
  For each picked set of features representing an AXp, add a negative
  clause to $\fml{H}$, preventing the same AXp from being re-computed.
  For each picked set of features representing a CXp, add a positive
  clause to $\fml{H}$, requiring some of the non-picked features to be
  picked the next time. At each iteration, run a SAT oracle on
  $\fml{H}$. If the picked set of features is a weak AXp, then extract
  an AXp, and use it to add another clause to $\fml{H}$. If the picked
  set of features is a weak CXp, then extract a CXp, and use it to add
  another clause to $\fml{H}$. The algorithm iterates while there are
  additional AXp's or CXp's to enumerate.
\end{proof}

Moreover, given that the number of CPXp's is linear on the size of the
decision tree (see~\cref{ssec:cxps}, and given that an APXp must be a
minimal hitting set of all the CPXp's (see~\cref{prop:pdual}), then we
can construct the hypergraph of all CPXp's, which we can implement in
polynomial time, and then exploit an existing hypergraph transversal
(or hitting set dualization)
approach~\cite{bailey-icdm03,kavvadias-jgaa05,khachiyan-dam06,liffiton-jar08}.
Although some of these algorithms resort to NP oracles at each
enumeration step~\cite{liffiton-jar08} with promising experimental
results, in theory each incremental step can be implemented in
quasi-polynomial time~\cite{khachiyan-jalg96}.

The examples in~\cref{ssec:cxps} illustrate the use of hitting set
dualization for computing APXp's from the complete set of CPXp's.

\paragraph{A SAT encoding.} \label{par:satenc}
We consider the case of enumeration of APXp's from CPXp's; the case
concerning the enumeration of (path (un)restricted) AXp's from CXp's
would be similar.
We associate a boolean variable $p_i$ with each feature
$i\in\fml{F}$, denoting (if equal to 1) whether the feature is picked
to be included in some APXp.
The CNF formula $\fml{H}$ is created as follows:
\begin{enumerate}[label=\textbf{C\arabic*.},ref=\textbf{C\arabic*}] 
  \label{enum:ccases}
\item For each CPXp $\fml{Y}=\{j_1,\ldots,j_r\}$, add a (positive)
  clause $(p_{j_1}\lor\ldots\lor{p_{j_r}})$ for $\fml{H}$, i.e.\ each
  APXp must hit all the CPXp's. \label{enum:ccases:stp01}
\end{enumerate}
Furthermore, each time an APXp $\fml{X}=\{i_1,\ldots,i_s\}$ is
computed, a new (negative) clause is added
$(\neg{p_{i_1}}\lor\neg{p_{i_2}}\lor\ldots\lor\neg{p_{i_s}})$ to
$\fml{H}$.
While the formula $\fml{H}$ is satisfied, the computed model
represents a superset of some APXp, that is not yet computed. As a
result, we can then use a polynomial time algorithm for computing such
an APXp, blocking it by adding a new (negative) clause to $\fml{H}$,
and starting the process again.
As can be concluded, the computation of each APXp requires one SAT
oracle call, on a formula $\fml{H}$ whose size grows with the number
of already computed CPXp's and the number of previously computed
APXp's.
Finally, we observe that, even though calling a SAT solver is
computationally harder (in the worst-case) than
a quasi-polynomial enumeration algorithm, e.g.\ the two algorithms
proposed by M.~Fredman and L.~Khachiyan~\cite{khachiyan-jalg96},
existing practical evidence suggests otherwise~\cite{liffiton-jar08}.

\begin{exmp}
  Consider the DT from~\cref{fig:runex02}, and path
  $Q_2=\langle1,2,4,7,10,14\rangle$.
  By analyzing the paths with a different prediction we can identify
  the following weak CPXp's, from which CPXp's are then selected as
  follows:
  \begin{center}
    \begin{tabular}{ccccccc} \toprule
      Path      & $P_1$   & $P_2$   & $P_3$     & $P_4$      & $P_5$
      & CXp's
      \\ \cmidrule(lr){1-1} \cmidrule(lr){2-6} \cmidrule(lr){7-7}
      Weak CPXp's & $\{5\}$ & $\{4\}$ & $\{2,5\}$ & $\{2,4\}$ & $\{1\}$
      & $\{\{1\},\{4\},\{5\}\}$
      \\ \bottomrule
    \end{tabular}
  \end{center}
  It is clear that the only APXp is $\{1,4,5\}$.
  The initial CNF formula $\fml{H}$ is: $\{(p_1),(p_4),(p_5)\}$.
  A SAT solver would compute an assignment that satisfies $\fml{H}$,
  e.g.\ $\{(p_1=1),(p_2=0),(p_3=1),(p_4=1),(p_5=1)\}$. From this
  satisfying assignment, we identify the Weak APXp: $\{1,3,4,5\}$, from
  which the APXp $\{1,4,5\}$ would then be extracted. As a result,
  $\fml{H}$ is extended with the clause
  $(\neg{p_1}\lor\neg{p_2}\lor\neg{p_3})$. Clearly, with the new
  clause, the formula $\fml{H}$ becomes inconsistent, confirming that
  $\{1,4,5\}$ is the only APXp.
\end{exmp}

Finally, we observe that the proposed SAT encoding can be used for
enumerating smallest AXp's, as a direct consequence
of~\cref{prop:minhs}. For computing a smallest AXp,the hard clauses
are the ones proposed above (see~\cref{enum:ccases:stp01}
on~\cpageref{enum:ccases}), whereas the soft clauses are defined as
follows:
\begin{enumerate}[label=\textbf{C\arabic*.},ref=\textbf{C\arabic*},resume] 
  \label{enum:ccases2}
\item For each feature $i$, add a soft unit clause $(p_i)$.
  \label{enum:ccases2:stp02}
\end{enumerate}
Thus, instead of just enumerating AXp's using a SAT formulation, the
proposed MaxSAT formulation can be used for enumerating smallest AXp's,
but also for enumerating AXp's by increasing size. Hence, we have the
following result.
\begin{prop}
  The minimum-cost models of the propositional logic encoding
  summarized in~\cref{enum:ccases:stp01} and~\cref{enum:ccases2:stp02}
  represent smallest AXp's. Given the MaxSAT encoding proposed above,
  each of its optimum solutions represents one smallest AXp. The
  enumeration of MaxSAT solutions by decreasing size will produce
  AXp's by increasing size.
\end{prop}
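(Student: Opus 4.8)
The plan is to reduce all three assertions to the minimal‑hitting‑set duality already in hand, namely \cref{prop:pdual} (the APXp's of a path $P_k$ are exactly the minimal hitting sets of its CPXp's) together with its cardinality‑minimal refinement \cref{prop:minhs}. First I would note that the hard clauses of \cref{enum:ccases:stp01} are precisely the standard encoding of ``the picked feature set hits every CPXp'': a total assignment $\mu$ to the variables $\{p_i\mid i\in\fml{F}\}$ satisfies them all iff $S_\mu:=\{i\mid\mu(p_i)=1\}$ is a hitting set of the family of CPXp's, and since every such clause is positive, supersets of hitting sets are again models — mirroring the fact that supersets of hitting sets are hitting sets. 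The soft clauses of \cref{enum:ccases2:stp02} penalise the inclusion of features, so that the MaxSAT cost of a model $\mu$ equals $|S_\mu|$. Hence a minimum‑cost model is exactly a model $\mu$ for which $S_\mu$ is a minimum‑cardinality hitting set of the CPXp's; by \cref{prop:pdual} such a set is a minimal hitting set, hence an APXp, and since every APXp is itself a hitting set it is one of smallest size. This establishes that the minimum‑cost models, equivalently the optimum MaxSAT solutions, correspond (via the bijection $\mu\mapsto S_\mu$, as the $p_i$ are the only variables) exactly to the smallest APXp's, and instantiating \cref{prop:dtpathxp} reads this back as ``smallest AXp'' in the path sense; the same reasoning with CXp's replacing CPXp's covers the path‑unrestricted and instance‑based variants.

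For the enumeration claim I would argue inductively about the incremental procedure: at each step run the MaxSAT oracle on the clauses of \cref{enum:ccases:stp01} augmented with one negative clause $(\neg p_{i_1}\lor\ldots\lor\neg p_{i_s})$ for each APXp $\{i_1,\ldots,i_s\}$ already produced, and report $S_\mu$ for the returned optimum $\mu$. I would then check three things. (i) Every reported $S_\mu$ is an APXp: it is a hitting set, and were it non‑minimal it would properly contain some minimal hitting set $X$ (an APXp) of strictly smaller size; $X$'s model has smaller cost and is not a superset of any previously blocked APXp, so $X$ must already have been produced and blocked, whence its negative clause is falsified by $S_\mu\supseteq X$ as well — contradicting that $\mu$ is a model. (ii) The reported sets have non‑decreasing cardinality, since adding blocking clauses only removes models, so the optimum cost of successive instances cannot decrease, and this cost equals the reported cardinality. (iii) Blocking is sound and complete for APXp's: the negative clause built from an APXp $X$ excludes exactly the supersets of $X$, and as APXp's are pairwise $\subseteq$‑incomparable minimal hitting sets no other APXp is lost; hence each cost level is exhausted before the next begins, and every APXp is output exactly once, in order of increasing size. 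Combining (i)–(iii) yields the stated behaviour of the MaxSAT enumeration.

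The step I expect to be the main obstacle is (i): optimality of $\mu$ for the \emph{current} instance does not by itself make $S_\mu$ a minimal hitting set, so this has to be squeezed out of the interaction between ``$S_\mu$ has no cheaper sub‑hitting‑set available now'' and ``any cheaper hitting set, hence any APXp contained in it, was already produced and its blocking clause also kills $S_\mu$.'' Getting this interlock right — and, relatedly, verifying that blocking an APXp $X$ never removes a not‑yet‑found APXp, so completeness is not compromised — is the technical heart of the proof; the cardinality monotonicity in (ii) and the final enumeration order are then routine bookkeeping.
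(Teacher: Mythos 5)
Your argument is correct, and it supplies considerably more than the paper does: the paper states this proposition with no proof at all, treating it as an immediate consequence of \cref{prop:minhs} (itself unproved) plus the observation that the clauses of \cref{enum:ccases:stp01} encode the hitting-set condition. Your first paragraph is essentially that implicit justification made explicit via \cref{prop:pdual} and \cref{prop:dtpathxp}. Where you genuinely go beyond the paper is the enumeration claim: the paper's described procedure extracts an APXp from the (possibly non-minimal) set returned by the oracle via a separate minimization step, whereas your point (i) shows that at a MaxSAT optimum this step is unnecessary --- the interlock between ``no cheaper hitting set survives the current blocking clauses'' and ``any cheaper APXp contained in $S_\mu$ would already have been blocked, and its blocking clause would then also falsify $\mu$'' forces $S_\mu$ itself to be a minimal hitting set. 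Combined with your (ii) and (iii), this yields a complete correctness proof of the size-ordered enumeration, none of which appears in the paper; your identification of (i) as the delicate step is accurate.

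One caveat you should make explicit: you assert that the soft clauses of \cref{enum:ccases2:stp02} ``penalise the inclusion of features,'' so that the cost of a model $\mu$ is $|S_\mu|$. As literally written those soft clauses are the positive units $(p_i)$, so under the standard convention (cost equals the weight of \emph{falsified} soft clauses) the cost of $\mu$ is $|\fml{F}|-|S_\mu|$, and minimizing it would maximize the picked set --- the trivial all-ones model would be optimal. The encoding only behaves as the proposition claims if the soft clauses are read as $(\neg p_i)$, or if ``cost'' is read as the number of \emph{satisfied} soft clauses. Your reading is the one the proposition requires (and the one consistent with \cref{prop:minhs}), but since the entire equivalence ``minimum cost $=$ smallest AXp'' lives in this sign convention, you should state the convention you are adopting rather than leaving it implicit.
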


\section{Experimental Results} \label{sec:res}

This section presents a summary of experimental evaluation of the
explanation redundancy of two state-of-the-art heuristic DT
classifiers and runtime assessment of the proposed algorithms 
to extract  (path-restricted) AXp's from DTs, and also
explanation redundancy in a range of DTs reported in the literature.

\paragraph{Experimental setup.}
We use the well-known DT learning tools \emph{ITI}
(\emph{Incremental Tree Induction})~\cite{utgoff-ml97,iti} and
\emph{IAI} (\emph{Interpretable AI})~\cite{bertsimas-ml17,iai}.
ITI is run with the pruning option enabled, which helps avoiding
overfitting and aims at constructing shallow DTs.
To enforce IAI to produce shallow DTs and achieve high accuracy, it is
set to use the optimal tree classifier method with the maximal depth
of 6.
This choice is motivated by our results, which confirm that larger
maximal depths would in most cases increase the percentage of 
explanation redundant paths; on the other hand, a smaller maximal 
depth would not improve accuracy.
The experiments consider datasets with categorical (non-binarized)
data, which both ITI and IAI can handle. 
(Note that other known DT learning tools, including
scikit-learn~\cite{scikitlearn-full} and
DL8.5~\cite{schaus-aaai20,schaus-ijcai20a} can only handle numerical
and binary features, respectively, and so could not be included in the
experiments.)
Furthermore, the experiments are performed on a MacBook Pro with a Dual-Core 
Intel Core~i5 2.3GHz CPU with 8GByte RAM running macOS Catalina.

\setlength{\tabcolsep}{7pt}
\let\lpr\undefined
\let\rpr\undefined
\newcommand{\lpr}{(}
\newcommand{\rpr}{)}

\begin{table}[ht] 
\centering
\resizebox{0.8\textwidth}{!}{
  \begin{tabular}{lcS[table-format=3.0]S[table-format=3.0]S[table-format=2.0]S[table-format=2.0]S[table-format=2.0]ccc}
\toprule[1.2pt]
\multirow{2}{*}{\bf Dataset} & \multicolumn{9}{c}{\bf IAI}  \\
\cmidrule[0.8pt](lr{.75em}){2-10}
&  {\bf D} & {\bf \#N} & {\bf \%A} & {\bf \#P} & {\bf \%R} & {\bf \%C} & {\bf \%m} & {\bf \%M} & {\bf \%avg} \\
\toprule[1.2pt]

adult &  6 & 83 & 78 & 42 & 33 & 25 & 20 & 40 & 25 \\
ann-thyroid & 6 & 61 & 97 & 31 & 25 & 30 & 20 & 50 & 36 \\
anneal &  6 & 29 & 99 & 15 & 26 & 16 & 16 & 33 & 21 \\
backache  & 4 & 17 & 72 & 9 & 33 & 39 & 25 & 33 & 30 \\
bank  & 6 & 113 & 88 & 57 & 5 & 12 & 16 & 20 & 18 \\
biodegradation  & 5 & 19 & 65 & 10 & 30 & 1 & 25 & 50 & 33 \\
cancer & 6 & 37 & 87 & 19 & 36 & 9 & 20 & 25 & 21 \\
car  & 6 & 43 & 96 & 22 & 86 & 89 & 20 & 80 & 45 \\
colic  & 6 & 55 & 81 & 28 & 46 & 6 & 16 & 33 & 20 \\
compas  & 6 & 77 & 34 & 39 & 17 & 8 & 16 & 20 & 17 \\
contraceptive  & 6 & 99 & 49 & 50 & 8 & 2 & 20 & 60 & 37 \\
dermatology  & 6 & 33 & 90 & 17 & 23 & 3 & 16 & 33 & 21 \\
divorce  & 5 & 15 & 90 & 8 & 50 & 19 & 20 & 33 & 24 \\
german  & 6 & 25 & 61 & 13 & 38 & 10 & 20 & 40 & 29 \\
heart-c  & 6 & 43 & 65 & 22 & 36 & 18 & 20 & 33 & 22 \\
heart-h  & 6 & 37 & 59 & 19 & 31 & 4 & 20 & 40 & 24 \\
kr-vs-kp  & 6 & 49 & 96 & 25 & 80 & 75 & 16 & 60 & 33 \\
lending  & 6 & 45 & 73 & 23 & 73 & 80 & 16 & 50 & 25 \\
letter  & 6 & 127 & 58 & 64 & 1 & 0 & 20 & 20 & 20 \\
lymphography  & 6 & 61 & 76 & 31 & 35 & 25 & 16 & 33 & 21 \\
mushroom  & 6 & 39 & 100 & 20 & 80 & 44 & 16 & 33 & 24 \\
pendigits  & 6 & 121 & 88 & 61 & 0 & 0 & \textemdash & \textemdash & \textemdash \\
promoters  & 1 & 3 & 90 & 2 & 0 & 0 & \textemdash & \textemdash & \textemdash \\
recidivism & 6 & 105 & 61 & 53 & 28 & 22 & 16 & 33 & 18 \\
seismic\_bumps  & 6 & 37 & 89 & 19 & 42 & 19 & 20 & 33 & 24 \\
shuttle & 6 & 63 & 99 & 32 & 28 & 7 & 20 & 33 & 23 \\
soybean & 6 & 63 & 88 & 32 & 9 & 5 & 25 & 25 & 25 \\
spambase  & 6 & 63 & 75 & 32 & 37 & 12 & 16 & 33 & 19 \\
spect & 6 & 45 & 82 & 23 & 60 & 51 & 20 & 50 & 35 \\
splice  & 3 & 7 & 50 & 4 & 0 & 0 & \textemdash & \textemdash & \textemdash \\

\bottomrule[1.2pt]
\end{tabular}
}

 \caption{%
     \footnotesize{Path explanation redundancy in decision trees
    obtained with IAI.
    The table shows tree statistics for IAI, namely, tree
    depth {\bf D}, number of nodes {\bf \#N}, test accuracy {\bf \%A}
    and number of paths {\bf \#P}.
    The percentage of explanation-redundant paths (XRP's) is given as {\bf \%R}
    while the percentage of data instances (measured for the \emph{entire}
    feature space) covered by XRP's is {\bf \%C}.
    Focusing solely on the XRP's, the average (min.~or max., resp.)
    percentage of explanation-redundant features per path is denoted
    by {\bf \%avg}  ({\bf \%m} and {\bf \%M}, resp.). }
  }
  \label{tab:iai-res}
\end{table}


%
\setlength{\tabcolsep}{7pt}
\let\lpr\undefined
\let\rpr\undefined
\newcommand{\lpr}{(}
\newcommand{\rpr}{)}

\begin{table}[ht] 
\centering
\resizebox{0.8\textwidth}{!}{
  \begin{tabular}{lcS[table-format=3.0]S[table-format=3.0]S[table-format=2.0]S[table-format=2.0]S[table-format=2.0]ccc}
\toprule[1.2pt]
\multirow{2}{*}{\bf Dataset} & \multicolumn{9}{c}{\bf ITI}  \\
\cmidrule[0.8pt](lr{.75em}){2-10}
&  {\bf D} & {\bf \#N} & {\bf \%A} & {\bf \#P} & {\bf \%R} & {\bf \%C} & {\bf \%m} & {\bf \%M} & {\bf \%avg} \\
\toprule[1.2pt]

adult &  6 & 83 & 78 & 42 & 33 & 25 & 20 & 40 & 25 \\
ann-thyroid & 6 & 61 & 97 & 31 & 25 & 30 & 20 & 50 & 36 \\
anneal &  6 & 29 & 99 & 15 & 26 & 16 & 16 & 33 & 21 \\
backache  & 4 & 17 & 72 & 9 & 33 & 39 & 25 & 33 & 30 \\
bank  & 6 & 113 & 88 & 57 & 5 & 12 & 16 & 20 & 18 \\
biodegradation  & 5 & 19 & 65 & 10 & 30 & 1 & 25 & 50 & 33 \\
cancer & 6 & 37 & 87 & 19 & 36 & 9 & 20 & 25 & 21 \\
car  & 6 & 43 & 96 & 22 & 86 & 89 & 20 & 80 & 45 \\
colic  & 6 & 55 & 81 & 28 & 46 & 6 & 16 & 33 & 20 \\
compas  & 6 & 77 & 34 & 39 & 17 & 8 & 16 & 20 & 17 \\
contraceptive  & 6 & 99 & 49 & 50 & 8 & 2 & 20 & 60 & 37 \\
dermatology  & 6 & 33 & 90 & 17 & 23 & 3 & 16 & 33 & 21 \\
divorce  & 5 & 15 & 90 & 8 & 50 & 19 & 20 & 33 & 24 \\
german  & 6 & 25 & 61 & 13 & 38 & 10 & 20 & 40 & 29 \\
heart-c  & 6 & 43 & 65 & 22 & 36 & 18 & 20 & 33 & 22 \\
heart-h  & 6 & 37 & 59 & 19 & 31 & 4 & 20 & 40 & 24 \\
kr-vs-kp  & 6 & 49 & 96 & 25 & 80 & 75 & 16 & 60 & 33 \\
lending  & 6 & 45 & 73 & 23 & 73 & 80 & 16 & 50 & 25 \\
letter  & 6 & 127 & 58 & 64 & 1 & 0 & 20 & 20 & 20 \\
lymphography  & 6 & 61 & 76 & 31 & 35 & 25 & 16 & 33 & 21 \\
mushroom  & 6 & 39 & 100 & 20 & 80 & 44 & 16 & 33 & 24 \\
pendigits  & 6 & 121 & 88 & 61 & 0 & 0 & \textemdash & \textemdash & \textemdash \\
promoters  & 1 & 3 & 90 & 2 & 0 & 0 & \textemdash & \textemdash & \textemdash \\
recidivism & 6 & 105 & 61 & 53 & 28 & 22 & 16 & 33 & 18 \\
seismic\_bumps  & 6 & 37 & 89 & 19 & 42 & 19 & 20 & 33 & 24 \\
shuttle & 6 & 63 & 99 & 32 & 28 & 7 & 20 & 33 & 23 \\
soybean & 6 & 63 & 88 & 32 & 9 & 5 & 25 & 25 & 25 \\
spambase  & 6 & 63 & 75 & 32 & 37 & 12 & 16 & 33 & 19 \\
spect & 6 & 45 & 82 & 23 & 60 & 51 & 20 & 50 & 35 \\
splice  & 3 & 7 & 50 & 4 & 0 & 0 & \textemdash & \textemdash & \textemdash \\

\bottomrule[1.2pt]
\end{tabular}
}

 \caption{%
     \footnotesize{Path explanation redundancy in decision trees
    obtained with ITI.
    %
      %
    Columns {\bf D}, {\bf \#N}, {\bf \#P}, {\bf \%R}, {\bf \%C},
    {\bf \%m}, {\bf \%M} and  {\bf \%avg} have the same
    meaning as in~\autoref{tab:iai-res}.}
  }
  \label{tab:iti-res}
\end{table}


%
\setlength{\tabcolsep}{7pt}
\let\lpr\undefined
\let\rpr\undefined
\newcommand{\lpr}{(}
\newcommand{\rpr}{)}

\begin{table}[ht] 
\centering
\resizebox{0.8\textwidth}{!}{
  \begin{tabular}{l cccS[table-format=2.2,round-mode=places,round-precision=2] cccS[table-format=2.2,round-mode=places,round-precision=2] }
\toprule[1.2pt]
\multirow{3}{*}{\bf Dataset}  & \multicolumn{8}{c}{\bf IAI} \\
\cmidrule[0.8pt](lr{.75em}){2-9}
& \multicolumn{4}{c}{\bf Traversal}  & \multicolumn{4}{c}{\bf Horn}   \\
\cmidrule[0.8pt](lr{.75em}){2-5}
\cmidrule[0.8pt](lr{.75em}){6-9}
&  {\bf m} & {\bf M} & {\bf avg} & {\bf Tot} & {\bf m} & {\bf M}  & {\bf avg} & {\bf Tot} \\
\toprule[1.2pt]

adult & 0.001 & 0.059 & 0.002 & 3.52 & 0.001 & 0.005 & 0.002 & 2.93 \\
ann-thyroid & 0.001 & 0.005 & 0.002 & 3.67 & 0.001 & 0.005 & 0.001 & 2.85 \\
anneal & 0.001 & 0.005 & 0.001 & 1.22 & 0.001 & 0.003 & 0.001 & 0.75 \\
backache & 0.001 & 0.001 & 0.001 & 0.13 & 0.000 & 0.001 & 0.001 & 0.094 \\
bank & 0.002 & 0.062 & 0.003 & 34.45 & 0.002 & 0.008 & 0.002 & 25.53 \\
biodegradation & 0.000 & 0.003 & 0.001 & 0.21 & 0.000 & 0.002 & 0.001 & 0.18 \\
cancer & 0.001 & 0.003 & 0.001 & 0.53 & 0.001 & 0.003 & 0.001 & 0.39 \\
car & 0.001 & 0.004 & 0.001 & 0.52 & 0.001 & 0.002 & 0.001 & 0.48 \\
colic & 0.001 & 0.005 & 0.002 & 0.71 & 0.001 & 0.002 & 0.001 & 0.43 \\
compas & 0.001 & 0.004 & 0.002 & 0.66 & 0.001 & 0.004 & 0.002 & 0.52 \\
contraceptive & 0.001 & 0.005 & 0.002 & 0.70 & 0.002 & 0.004 & 0.002 & 0.78 \\
dermatology & 0.001 & 0.005 & 0.001 & 0.50 & 0.001 & 0.002 & 0.001 & 0.31 \\
divorce & 0.000 & 0.002 & 0.001 & 0.10 & 0.000 & 0.001 & 0.001 & 0.076 \\
german & 0.001 & 0.004 & 0.001 & 0.76 & 0.001 & 0.002 & 0.001 & 0.65 \\
heart-c & 0.001 & 0.003 & 0.001 & 0.42 & 0.001 & 0.003 & 0.001 & 0.30 \\
heart-h & 0.001 & 0.004 & 0.001 & 0.39 & 0.001 & 0.005 & 0.001 & 0.27 \\
kr-vs-kp & 0.001 & 0.008 & 0.002 & 2.17 & 0.001 & 0.004 & 0.001 & 1.22 \\
lending & 0.001 & 0.003 & 0.001 & 1.99 & 0.001 & 0.003 & 0.001 & 1.51 \\
letter & 0.002 & 0.062 & 0.002 & 13.36 & 0.002 & 0.007 & 0.003 & 14.30 \\
lymphography & 0.001 & 0.007 & 0.002 & 0.32 & 0.001 & 0.003 & 0.001 & 0.20 \\
mushroom & 0.001 & 0.004 & 0.001 & 3.11 & 0.001 & 0.002 & 0.001 & 2.20 \\
pendigits & 0.002 & 0.063 & 0.003 & 10.03 & 0.002 & 0.007 & 0.003 & 8.45 \\
promoters & 0.000 & 0.000 & 0.000 & 0.017 & 0.000 & 0.000 & 0.000 & 0.024 \\
recidivism & 0.002 & 0.061 & 0.003 & 3.65 & 0.002 & 0.006 & 0.002 & 2.59 \\
seismic\_bumps & 0.001 & 0.003 & 0.001 & 1.08 & 0.001 & 0.002 & 0.001 & 0.68 \\
shuttle & 0.001 & 0.006 & 0.001 & 22.65 & 0.001 & 0.005 & 0.001 & 22.35 \\
soybean & 0.001 & 0.058 & 0.002 & 1.17 & 0.001 & 0.005 & 0.001 & 0.82 \\
spambase & 0.001 & 0.008 & 0.003 & 3.43 & 0.001 & 0.003 & 0.001 & 1.81 \\
spect & 0.001 & 0.006 & 0.002 & 0.52 & 0.001 & 0.004 & 0.001 & 0.24 \\
splice & 0.000 & 0.001 & 0.000 & 0.22 & 0.000 & 0.002 & 0.000 & 0.30 \\

\bottomrule[1.2pt]
\end{tabular}
}

  \caption{%
   \footnotesize{Assessing runtimes of the tree traversal
    algorithm and the propositional horn encoding approach for
    extracting one AXp.
    The table reports the results for DTs trained with  IAI
    learning tool. Columns  {\bf m}, {\bf M} and  {\bf avg}  report,
    resp.\ , the minimal, maximal and average runtime (in second)
    to compute an AXp, while column {\bf Tot} reports the total
    runtime (in second) of all tested instances in a dataset. }
  }
  \label{tab:iai-runtime}
\end{table}

\setlength{\tabcolsep}{7pt}
\let\lpr\undefined
\let\rpr\undefined
\newcommand{\lpr}{(}
\newcommand{\rpr}{)}

\begin{table}[ht] 
\centering
\resizebox{0.8\textwidth}{!}{
  \begin{tabular}{l cccS[table-format=2.2,round-mode=places,round-precision=2] cccS[table-format=2.2,round-mode=places,round-precision=2] }
\toprule[1.2pt]
\multirow{3}{*}{\bf Dataset}  & \multicolumn{8}{c}{\bf ITI} \\
\cmidrule[0.8pt](lr{.75em}){2-9}
& \multicolumn{4}{c}{\bf Traversal}  & \multicolumn{4}{c}{\bf Horn}   \\
\cmidrule[0.8pt](lr{.75em}){2-5}
\cmidrule[0.8pt](lr{.75em}){6-9}
&  {\bf m} & {\bf M} & {\bf avg} & {\bf Tot} & {\bf m} & {\bf M}  & {\bf avg} & {\bf Tot} \\
\toprule[1.2pt]

adult &  0.004 & 0.038 & 0.007 & 12.89 & 0.008 & 0.036 & 0.010 & 19.00 \\ 
ann-thyroid  &  0.002  &  0.041  &  0.006  & 12.14 &  0.004  &  0.029  &  0.005  & 9.64 \\
anneal  &  0.001  &  0.006  &  0.001  & 0.96 &  0.001  &  0.004  &  0.001  & 0.70 \\
backache  &  0.000  &  0.001  &  0.000  & 0.08 &  0.000  &  0.001  &  0.000  & 0.07 \\
bank  &  0.013  &  0.090  &  0.027  & 19.64 &  0.025  &  0.092  &  0.033  & 24.21 \\
biodegradation  &  0.002  &  0.007  &  0.003  & 0.98 &  0.001  &  0.003  &  0.002  & 0.48 \\
cancer  &  0.001  &  0.004  &  0.001  & 0.32 &  0.001  &  0.002  &  0.001  & 0.26 \\
car  &  0.001  &  0.002  &  0.001  & 0.47 &  0.001  &  0.002  &  0.001  & 0.59 \\
colic  &  0.000  &  0.001  &  0.001  & 0.24 &  0.000  &  0.001  &  0.000  & 0.18 \\
compas  &  0.002  &  0.065  &  0.004  & 1.37 &  0.003  &  0.005  &  0.004  & 1.28 \\
contraceptive  &  0.003  &  0.064  &  0.006  & 2.64 &  0.006  &  0.012  &  0.007  & 2.87 \\
dermatology  &  0.000  &  0.001  &  0.001  & 0.27 &  0.000  &  0.001  &  0.001  & 0.19 \\
divorce  &  0.000  &  0.001  &  0.000  & 0.03 &  0.000  &  0.002  &  0.000  & 0.04 \\
german  &  0.002  &  0.007  &  0.003  & 3.19 &  0.002  &  0.003  &  0.002  & 2.03 \\
heart-c  &  0.000  &  0.001  &  0.001  & 0.17 &  0.000  &  0.001  &  0.000  & 0.14 \\
heart-h  &  0.001  &  0.001  &  0.001  & 0.21 &  0.001  &  0.001  &  0.001  & 0.18 \\
kr-vs-kp  &  0.001  &  0.009  &  0.004  & 3.41 &  0.001  &  0.003  &  0.002  & 1.49 \\
lending  &  0.004 & 0.030 & 0.006 & 9.13 & 0.008 & 0.039 & 0.010 & 15.99 \\
letter  &  0.034  &  0.110  &  0.052  & 19.50 &  0.078  &  0.16  &  0.110  & 40.77 \\
lymphography  &  0.000  &  0.002  &  0.001  & 0.13 &  0.001  &  0.001  &  0.001  & 0.09 \\
mushroom  &  0.001  &  0.003  &  0.001  & 2.46 &  0.001  &  0.002  &  0.001  & 1.54 \\
pendigits &  0.008 & 0.047 & 0.011 & 37.59 & 0.015 & 0.056 & 0.019 & 61.59 \\
promoters  &  0.000  &  0.001  &  0.000  & 0.04 &  0.000  &  0.001  &  0.000  & 0.04 \\
recidivism  &  0.005  &  0.087  &  0.010  & 11.69 &  0.010  &  0.084  &  0.015  & 18.26 \\
seismic\_bumps  &  0.001  &  0.004  &  0.002  & 1.83 &  0.001  &  0.002  &  0.001  & 0.69 \\
shuttle  &  0.002  &  0.061  &  0.002  & 2.85 &  0.003  &  0.060  &  0.003  & 3.56 \\
soybean  &  0.001  &  0.005  &  0.003  & 1.65 &  0.001  &  0.003  &  0.002  & 0.95 \\
spambase  &  0.002  &  0.069  &  0.009  & 11.20 &  0.003  &  0.062  &  0.003  & 4.18 \\
spect  &  0.000  &  0.001  &  0.001  & 0.17 &  0.000  &  0.001  &  0.000  & 0.11 \\
splice  &  0.001  &  0.064  &  0.002  & 1.48 &  0.003  &  0.069  &  0.004  & 3.43 \\

\bottomrule[1.2pt]
\end{tabular}
}

  \caption{%
   \footnotesize{Assessing runtimes of the tree traversal
    algorithm and the propositional horn encoding approach for
    extracting one AXp.
    The table reports the results for DTs trained with  ITI
    learning tool.
    Columns  {\bf m}, {\bf M} ,  {\bf avg} and {\bf Tot} have the same
    meaning as in~\autoref{tab:iai-runtime}. }
  }
  \label{tab:iti-runtime}
\end{table}

\paragraph{Benchmarks.}
The assessment is performed on a selection of 67 publicly
available datasets, which originate from \emph{UCI Machine Learning
  Repository}~\cite{uci}, \emph{Penn Machine Learning
  Benchmarks}~\cite{pennml}, and \emph{OpenML
repository}~\cite{openml}.
(We opt to report the results only for a subset of datasets. However, 
the results shown mimic the results for the complete benchmark set;
these are provided as supplementary
materials~\footnote{\url{https://github.com/yizza91/jair22sub}}.)
The number of features (data instances, resp.) in the benchmark suite
vary from 2 to 58 (87 to 58000, resp.) with the average being 31.2
(6045.3, resp.).

\paragraph{Prototype implementation.}
The poly-time explanation-redundancy check algorithm presented 
in~\cite{iims-corr20} and  AXp extraction by Tree Traversal outlined 
in~\cref{ssec:trav} are  
implemented in Perl. (An implementation using  PySAT~\cite{imms-sat18} toolkit 
and the solver Glucose, was instrumented in validating the results, but for the DTs
considered, it was in general slower by at least one order of
magnitude.)
Additionally, the Propositional Horn Encoding approach outlined
in~\cref{ssec:hornit} as well as the enumeration of  AXp's/CXp's  
described in~\cref{ssec:exdt}, are implemented in Python\footnote{
Sources are provided as a Python package and available in 
\url{https://github.com/yizza91/xpg}}. 

\paragraph{Results.}
Training DTs with IAI takes from 4s to 2310s with the average run time
per dataset being 70s.
In contrast, the time spent on eliminating explanation redundancy
is \emph{negligible}, taking from 0.026s to 0.4s per tree, with an
average time of 0.06s.
ITI runs much faster than IAI and takes from 0.1s to 2s with
0.1s on average; the elimination of explanation redundancy is
slightly more time consuming than for IAI, taking from
0.025s to 5.4s with 0.29s on average.
This slowdown results from DTs learned with ITI being deeper on
average, and features being tested multiple times along 
a same path.

\autoref{tab:iai-res} and \autoref{tab:iti-res} summarize, resp., the results  
of the explanation redundancy evaluation of IAI and ITI trees.
Observe that despite the shallowness of the trees produced by IAI and
ITI, for the majority of datasets and with a few exceptions, the paths
in trees trained by both tools exhibit significant
explanation redundancy.
In particular, on average, 32.1\% (46.9\%, resp.) of paths are
explanation redundant for the trees obtained by IAI (ITI, resp.).
For some DTs, obtained with either IAI and ITI, more than 85\% of tree
paths are explanation redundant (XRP).
Also, explanation redundant paths (XRP's) of the trees of IAI (ITI, resp.)
cover on average 20.1\% (37.7\%, resp.) of feature space\footnote{
The coverage of a path is the feature space size of  uninvolved/untested 
features in this path.}. 
Moreover, in some cases, up to 89\% and 98\% of the entire feature
space is covered by the XRP's for IAI and ITI,
respectively.
This means that DTs produced by IAI and ITI are unable to provide a
user with a succinct explanation for the \emph{vast majority} of data
instances.
In addition, the average number of explanation redundant features 
(XRF's) in XRP's  for both IAI and ITI varies from 16\% to
65\%, but for some DTs it exceeds 80\%.

To summarize, the numbers shown for the selected datasets and for the two
state-of-the-art DT training tools (IAI and ITI) contrast with the
common belief in the \emph{inherent interpretability} of decision tree
classifiers.
Perhaps as importantly, the performance figures confirm that
the elimination of explanation redundancy in the DTs produced with
available tools has \emph{negligible} computational cost.

To demonstrate the effectiveness of the proposed algorithms,
concretely tree traversal and propositional Horn encoding, we assess
their running times to compute  path-restricted AXp's from DTs
obtained with ITI and IAI.
The results are summarized in \autoref{tab:iai-runtime} and 
\autoref{tab:iti-runtime}.
As is quite evident from these results, the proposed solutions  are
effective in practice and the average running times are almost similar
for all datasets and both DT learning tools. 
As final remark, we notice that  in terms of comparison between 
the two algorithms, the Horn encoding approach is faster in 
44/62 explained DTs trained with IAI and 41/62 DTs  trained 
with ITI.  Therefore, one can use a portfolio of the two approaches, 
terminating when one finishes. 

\setlength{\tabcolsep}{5pt}
\let\lpr\undefined
\let\rpr\undefined
\newcommand{\lpr}{(}
\newcommand{\rpr}{)}

\begin{table}[t] 
\centering
\resizebox{0.6\textwidth}{!}{
  \begin{tabular}{l  ccc  S[table-format=2.0]   S[table-format=2.0]S[table-format=2.0]S[table-format=2.1]S[table-format=2.0]}
\toprule[1.2pt]
\multirow{2}{*}{\bf Dataset} & \multicolumn{3}{c}{\bf DT} & {\bf Path} & \multicolumn{4}{c}{\bf AXp} \\
\cmidrule[0.8pt](lr{.75em}){2-4}
\cmidrule[0.8pt](lr{.75em}){5-5}
\cmidrule[0.8pt](lr{.75em}){6-9}
&  {\bf D} & {\bf \#N} & {\bf \%A} & {\bf L} &  {\bf m} & {\bf M}  & {\bf avg} & {\bf n}  \\
\toprule[1.2pt]

\multirow{3}{*}{adult}  &  \multirow{3}{*}{17} &  \multirow{3}{*}{509} & \multirow{3}{*}{73} & 16 & 8 & 8 & 8 & 2 \\
 &  &  &  & 14 & 5 & 6 & 5.5 & 2 \\
 &  &  &  & 16 & 5 & 5 & 5 & 1 \\

\midrule

\multirow{3}{*}{allhyper} &  \multirow{3}{*}{14} &  \multirow{3}{*}{49} & \multirow{3}{*}{96} & 14 & 4 &  5 & 4.6 & 6 \\ 
 &   &  &  & 9 & 4 &  5 & 4.5 & 8 \\ 
 &  &   &  & 14 & 4 &  5 & 4.6 & 6 \\ 
	 
\midrule

\multirow{3}{*}{ann-thyroid} &  \multirow{3}{*}{48} & \multirow{3}{*}{222} & \multirow{3}{*}{93} & 40 & 5 & 6 & 5.6 & 3 \\
 &   &  &  & 36 & 5 & 6 & 5.6 & 3 \\
 &   &  &  & 20 & 5 & 6 & 5.5 & 2 \\

\midrule

\multirow{3}{*}{coil2000}  &  \multirow{3}{*}{12} & \multirow{3}{*}{177} & \multirow{3}{*}{91} & 10 &  2 & 4 & 3.8 & 39 \\
 &   &  &  & 10 &  2 & 4 & 3.8 & 39 \\
 &   &  &  & 10 &  2 & 4 & 3.8 & 30 \\

\midrule

\multirow{3}{*}{fars} & \multirow{3}{*}{60} & \multirow{3}{*}{9969} & \multirow{3}{*}{76} & 29 & 11 & 11 & 11 & 2 \\
 &  &  &  & 42 & 10 & 14 & 12.3 & 9 \\
 &  &  &  & 48 & 9 & 9 & 9 & 1 \\

\midrule

\multirow{3}{*}{kddcup} & \multirow{3}{*}{29} & \multirow{3}{*}{269} & \multirow{3}{*}{99} &  23  & 11 & 12 & 11.5 & 16\\
 &  &  &  &  23  & 11 & 12 & 11.5 & 16\\
 &  &  &  &  27 & 12 & 13 & 12.5 & 8 \\

\bottomrule[1.2pt]
\end{tabular}
}
\caption{\footnotesize{
    Examples of 6 real-world datasets highlighting computed 
    path AXp's (APXp's) in DTs learned with ITI, that require
    deep trees.
    For each dataset, the table displays 3 tested paths. 
    Columns  {\bf D}, {\bf \#N} and {\bf \%A}  denote,
    resp.\  depth, number of nodes and accuracy of the DT. 
    Next, column  {\bf L} reports the path's length. 
    Then, the average (min. or max., resp.) length of the computed 
    APXp's, is denoted by  {\bf avg} ({\bf m} and {\bf M}, resp.).
    Finally, the total number of APXp's is shown in  column {\bf n}.
    }
 } \label{tab:inst}
\end{table}

\begin{table}[t] 
\centering
\resizebox{\textwidth}{!}{
  \begin{tabular}{lS[table-format=1.0]S[table-format=2.0]S[table-format=2.0]S[table-format=2.0]S[table-format=2.0]S[table-format=2.0]S[table-format=2.0]S[table-format=2.0]}
\toprule[1.2pt]
{\bf DT Ref} & {\bf D} & {\bf \#N}  & {\bf \#P} & {\bf \%R} & {\bf \%C} & {\bf \%m} & {\bf \%M} & {\bf \%avg} \\
\toprule[1.2pt]

\cite[Ch.~09,~Fig.~9.1]{alpaydin-bk14} & 2 & 5 & 3 & 33 & 25 & 50 & 50 & 50 \\
\cite[Ch.~03,~Fig.~3.2]{alpaydin-bk16} & 2 & 5 & 3 & 33 & 25 & 50 & 50 & 50 \\
\cite[Ch.~01,~Fig.~1.3]{bramer-bk20} & 4 & 9 & 5 & 60 & 25 & 25 & 50 & 36 \\
\cite[Figure~1]{aha-ker97} & 3 & 12 & 7 & 14 & 8 & 33 & 33 & 33 \\
\cite[Ch.~08,~Fig.~8.2]{berthold-bk10} & 3 & 7 & 4 & 25 & 12 & 50 & 50 & 50 \\
\cite[Ch.~01,~Fig.~1.1]{breiman-bk84} & 3 & 7 & 4 & 50 & 25 & 33 & 33 & 33 \\
\cite[Ch.~01,~Fig.~1.2a]{dzeroski-bk01} & 2 & 5 & 3 & 33 & 25 & 33 & 33 & 33 \\
\cite[Ch.~01,~Fig.~1.2b]{dzeroski-bk01} & 2 & 5 & 3 & 33 & 25 & 33 & 33 & 33 \\
\cite[Ch.~04,~Fig.~4.14]{kelleher-bk15} & 3 & 7 & 4 & 25 & 12 & 50 & 50 & 50 \\
\cite[Sec.~4.7,~Ex.~4]{kelleher-bk15} & 2 & 5 & 3 & 33 & 25 & 50 & 50 & 50 \\
\cite[Ch.~01,~Fig.~1.3]{quinlan-bk93} & 3 & 12 & 7 & 28 & 17 & 33 & 50 & 41 \\
\cite[Ch.~01,~Fig.~1.5]{rokach-bk08} & 3 & 9 & 5 & 20 & 12 & 33 & 33 & 33 \\
\cite[Ch.~01,~Fig.~1.4]{rokach-bk08} & 3 & 7 & 4 & 50 & 25 & 33 & 33 & 33 \\
\cite[Ch.~01,~Fig.~1.2]{witten-bk17} & 3 & 7 & 4 & 25 & 12 & 50 & 50 & 50 \\
\cite[Figure~4]{valdes-naturesr16} & 6 & 39 & 20 & 65 & 63 & 20 & 40 & 33 \\
\cite[Ch.~02,~Fig.~2.1(right)]{flach-bk12} & 2 & 5 & 3 & 33 & 25 & 50 & 50 & 50 \\
\cite[Figure~1]{kotsiantis-air13} & 3 & 10 & 6 & 33 & 11 & 33 & 33 & 33 \\
\cite[Figure~1]{moret-acmcs82} & 3 & 9 & 5 & 80 & 75 & 33 & 50 & 41 \\
\cite[Ch.~07,~Fig.~7.4]{poole-bk17} & 3 & 7 & 4 & 50 & 25 & 33 & 33 & 33 \\
\cite[Ch.~18,~Fig.~18.6]{russell-bk10} & 4 & 12 & 8 & 25 & 6 & 25 & 33 & 29 \\
\cite[Ch.~18,~Page~212]{shalev-shwartz-bk14} & 2 & 5 & 3 & 33 & 25 &  50 & 50 & 50 \\
\cite[Ch.~01,~Fig.~1.3]{zhou-bk12} & 2 & 5 & 3 & 33 & 25 & 33 & 33 & 33 \\
\cite[Figure~1b]{hebrard-cp09} & 4 & 13 & 7 & 71 & 50 & 33 & 50 & 36 \\ 
\cite[Ch.~04,~Fig.~4.3]{zhou-bk21} & 4 & 14 & 9 & 11 & 2 & 25 & 25 & 25  \\
\bottomrule[1.2pt]
\end{tabular}
}
  \caption{
    \footnotesize{Results on path explanation redundancy for example
      DTs found in the literature.
    Columns {\bf D}, {\bf \#N}, {\bf \#P}, {\bf \%R}, {\bf \%C},
    {\bf \%m}, {\bf \%M} and  {\bf \%avg} have the same
    meaning as in~\autoref{tab:iai-res}.
    }
    \label{tab:dtrees}
  }
\end{table}

\begin{table}[t] 
 \centering
\resizebox{0.75\textwidth}{!}{
  \begin{tabular}{llS[table-format=1.0]S[table-format=2.0]S[table-format=3.0]S[table-format=2.0]S[table-format=2.0]S[table-format=2.0]S[table-format=2.0]S[table-format=2.0]S[table-format=2.0]}
\toprule[1.2pt]
{\bf Dataset} & { \bf Tool} & {\bf D} & {\bf \#N} & {\bf \%A} & {\bf \#P} & {\bf \%R} & {\bf \%C} & {\bf \%m} & {\bf \%M} & {\bf \%avg} \\
\toprule[1.2pt]

\multirow{2}{*}{monk1} & BinOCT & 3 & 13 & 91 & 7 & 28 & 11 & 66 & 66  & 66 \\
 & OSDT & 5 & 13  & 100 & 7 & 57 & 41 & 33 & 33 & 33 \\
 \midrule
 
\multirow{2}{*}{tic-tac-toe} & BinOCT & 4 & 15 & 77 &  8 & 75 & 75 & 33 & 33 & 33  \\
& OSDT & 5 & 15 & 83 & 8 & 75 & 37 & 25 & 60 & 43 \\
\midrule

compas & OSDT & 4 & 9 & 67 &  5 & 60 & 37 & 33 & 33 & 33 \\
\midrule

\multirow{2}{*}{monk2} & CART & 6 & 31 & 69 & 16 & 62 & 22 & 20 & 66 & 33 \\
 & GOSDT & 6 & 17 & 73 & 9 & 55 & 48 & 16 & 40 & 31 \\

\bottomrule[1.2pt]
\end{tabular}
} 
\caption{\footnotesize{Additional results on path explanation redundancy in
    (optimal) DTs, trained with different training tools: BinOCT
    \cite{verwer-aaai19}, CART \cite{breiman-bk84}, OSDT
    \cite{rudin-nips19} and GOSDT \cite{rudin-icml20}, that have been
    presented in \cite{rudin-nips19,rudin-corr21}.
    The results for CART are solely included for completeness.
    Columns {\bf D}, {\bf \#N}, {\bf \%A}, {\bf \#P}, {\bf \%R}, {\bf
      \%C}, {\bf \%m}, {\bf \%M} and {\bf \%avg}  hold the same
    meaning in~\autoref{tab:iai-res}. }
  \label{tab:osdt}}
\end{table}

Focusing merely on complex datasets that require deep trees, 
\autoref{tab:inst} shows results on computed path AXp's  for 
a set of  DTs generated by ITI.  
The results show that for these examples, paths can be much longer 
than path AXp's, namely,  
the number of explanation redundant features is bigger than 
the number of  features  involved in the explanation. 
Notably for some examples, the number of explanation redundant features 
is more than 7 times larger than the number of features belonging to 
the abductive explanation.

Finally, additional results on explanation redundancy of DTs reported 
in the literature are shown in \autoref{tab:dtrees} and 
\autoref{tab:osdt}.
As can be seen, the same observations made for DTs of IAI and ITI hold
for DTs obtained with different training tools  existing in the
literature. 
More notably, these results demonstrate that also 
optimal (sparse)  DTs, deemed succinctly explainable due to their shallowness, 
exhibit  explanation-redundant paths/features.

\section{Related Work} \label{sec:relw}
As indicated in~\cref{sec:intro}, there exists a growing
body of work on (optimally) learning DTs aiming for
interpretability\footnote{%
  Example references include~\cite{nijssen-kdd07,hebrard-cp09,nijssen-dmkd10,bertsimas-ml17,verwer-cpaior17,nipms-ijcai18,verwer-aaai19,rudin-nips19,avellaneda-corr19,avellaneda-aaai20,schaus-cj20,schaus-aaai20,rudin-icml20,janota-sat20,hebrard-ijcai20,schaus-ijcai20a,schaus-ijcai20b,demirovic-aaai21,szeider-aaai21a,szeider-aaai21b,mcilraith-cp21,ansotegui-corr21,demirovic-jmlr22,rudin-aaai22}.}. 
There is also general consensus on the interpretability of
DTs~\cite{breiman-ss01,rudin-naturemi19,molnar-bk20}.
The results in this paper prove that efforts for learning optimal
DTs are necessarily incomplete, since the trees generated by such
tools can (and inevitably will) exhibit path explanation redundancy.
Furthermore, if interpretability is to be related with explanation
succinctness, then our results prove (in theory and in practice) that
learned optimal DTs should not in general be deemed interpretable,
because more succinct (and in some cases far more succinct)
explanations can be obtained with the algorithms proposed in this
paper.

To our best knowledge, the assessment of path explanation redundancy
in DTs when compared to AXp's has not been investigated in depth,
besides our own work~\cite{iims-corr20,hiims-kr21} and results on the
complexity of explaining DTs~\cite{barcelo-nips20} or the
intelligibility of DTs~\cite{marquis-kr21}. However, some of the
earlier results focus on boolean
DT classifiers~\cite{barcelo-nips20,marquis-kr21}, and so
the generalization to non-boolean DT classifiers is unclear. Moreover,
recent work~\cite{darwiche-corr20} outlines logical encodings of
decision trees, but that is orthogonal to the work reported in this
paper.
It should be underscored that, in contrast with our own earlier
work~\cite{iims-corr20,hiims-kr21}, this paper highlights path
explanations, both abductive and contrastive.
In addition, there has been work on applying explainable AI (XAI) to
decision trees~\cite{lundberg-naturemi20}, but with the focus of
improving the quality of local (heuristic) explanations, where the
goal is to relate a local approximate model against a reference model;
hence there is no immediate relationship with the formal explanations
studied in this paper.
Similarly, one could consider exploiting non-formal model-agnostic
explainers.
There is a large body of work on non-formal model-agnostic XAI
approaches~\cite{berrada-ieee-access18,muller-dsp18,muller-bk19,pedreschi-acmcs19,muller-ieee-proc21,guan-ieee-tnnls21,holzinger-bk22,holzinger-xxai22b,doran-jair22}.
Well-known examples include LIME~\cite{guestrin-kdd16},
SHAP~\cite{lundberg-nips17} and Anchor~\cite{guestrin-aaai18}, for
model-agnostic explanations, and sensitivity
analysis~\cite{vedaldi-iclr14} and LRP~\cite{muller-plosone15} in the
case of saliency maps for neural networks. 
However, such model-agnostic explainers offer no guarantees of
rigor. More importantly, the explanations computed by (non-formal)
model-agnostic explainers can be
unsound~\cite{inms-corr19,lukasiewicz-corr19,ignatiev-ijcai20,weller-ecai20}.
In addition, the running times of these non-formal tools are not on
par with the algorithms proposed in this paper, being in general
orders of magnitude slower.
There is recent work on approximate explanations with probabilistic
guarantees~\cite{kutyniok-jair21,vandenbroeck-ijcai21,tan-nips21},
with initial results for DTs reported in~\cite{iincms-corr21}.

\section{Conclusions} \label{sec:conc}

This paper investigates path explanation redundancy in decision
trees, i.e.\ the existence of features that are irrelevant for the
prediction associated with a given path.
In addition, the paper also shows that the computation of irredundant
path explanations in DTs is tightly related with recent work on
computing abductive explanations~\cite{inms-aaai19}. Furthermore, the
paper proposes several algorithms for computing path explanations, all
of which run in worst-case polynomial time.

The experimental results offer conclusive evidence supporting the
following claim: \emph{DTs consistently exhibit path explanation
  redundancy, which is often significant, not only in the number of
  paths exhibiting explanation redundancy, but also in the number of
  features that can be deemed explanation-redundant for the path}.
This claim is supported by the analysis of DTs used in a comprehensive
range of examples taken from textbooks and survey papers, some of
which dating back to the inception of well-known tree-learning
algorithms~\cite{breiman-bk84,quinlan-bk93}. This claim is also
supported by the analysis of the DTs learned with well-known
tree-learning algorithms, one of which explicitly targets
interpretability~\cite{bertsimas-ml17,iai}. Finally, the claim is
supported by the analysis of publicly available DTs generated with
so-called optimal (sparse) decision tree 
learners~\cite{verwer-aaai19,rudin-nips19,rudin-icml20,rudin-corr21},
which also explicitly target interpretability.

More importantly, the experimental results presented in this paper do
\emph{not} endorse the case made in recent research that DTs are
intrinsically interpretable, concretely when interpretability
correlates with succinctness of explanations.
However, these same experimental results support making the
alternative case: \emph{that DTs require being explained in practice, 
  that explaining DTs is computationally efficient in theory and in
  practice, and that explaining DTs must be a stepping stone for
  deploying ML in high-risk and safety-critical applications}.
Moreover, we conjecture that the same case can be made for other
classifiers that can be related with DTs in terms of the efficiency of
computing explanations.
Furthermore, we observe that the informal concept of interpretability
in the case of DTs is justified not by the intrinsic property of
explanations of DTs being succinct and irreducible, but by the fact
that rigorous explanations can be efficiently computed, both in the
case of DTs and possibly in the case of other related classifiers.
%

%
\section*{Acknowledgments}
%
This work was supported by the AI Interdisciplinary Institute ANITI, 
funded by the French program ``Investing for the Future -- PIA3''
under Grant agreement no.\ ANR-19-PI3A-0004, and by the H2020-ICT38
project COALA ``Cognitive Assisted agile manufacturing for a Labor
force supported by trustworthy Artificial intelligence''.
This work received comments from several colleagues, including
N.\ Asher, M.\ Cooper, E.\ Hebrard, X.\ Huang, C.\ Menc\'{\i}a,
N.\ Narodytska, R.\ Passos and J.\ Planes.

\newtoggle{mkbbl}

\settoggle{mkbbl}{false}

\addcontentsline{toc}{section}{References}
\vskip 0.2in
\iftoggle{mkbbl}{
  \bibliographystyle{abbrv}
  \bibliography{refs,dts,team}
}{
  \input{paper.bibl}
}
\label{lastpage}
%


\end{document}